\def\isarxiv{1} %%% for icml submission version, we comment this line
\newtheorem{theorem}{Theorem}[section]
\newtheorem{lemma}[theorem]{Lemma}
\newtheorem{definition}[theorem]{Definition}
\newtheorem{proposition}[theorem]{Proposition}
\newtheorem{corollary}[theorem]{Corollary}
\newtheorem{fact}[theorem]{Fact}
\newtheorem{remark}[theorem]{Remark}
\newtheorem{claim}[theorem]{Claim}
\newcommand{\wt}{\widetilde}
\newcommand{\ov}{\overline}
\newcommand{\N}{\mathcal{N}}
\newcommand{\R}{\mathbb{R}}
\renewcommand{\d}{\mathrm{d}}
\renewcommand{\varepsilon}{\epsilon}
\newcommand{\err}{\mathrm{err}}
\DeclareMathOperator*{\E}{{\mathbb{E}}}
\DeclareMathOperator{\poly}{poly}
\DeclareMathOperator{\nnz}{nnz}
\DeclareMathOperator{\dis}{dis}
\DeclareMathOperator{\cts}{cts}
\DeclareMathOperator{\sgn}{sgn}
\DeclareMathOperator{\fire}{fire}
\DeclareMathOperator{\diag}{\mathrm{diag}}
\newcommand{\ket}[1]{\left| #1 \right\rangle}
\newcommand{\Tinit}{{\cal T}_{\mathsf{init}}}
\newcommand{\Tquery}{{\cal T}_{\mathsf{query}}}
\newcommand{\Tupdate}{{\cal T}_{\mathsf{update}}}
\renewcommand{\bar}{\overline}
\newcommand{\Zhao}[1]{{\color{red}[Zhao: #1]}}
\newcommand{\Ruizhe}[1]{{\color{blue}[Ruizhe: #1]}}
\newcommand{\blue}[1]{{\color{blue}#1}}
\newcommand*{\RN}[1]{\expandafter\@slowromancap\romannumeral #1@}
\begin{document}
\title{Does Preprocessing Help Training Over-parameterized Neural Networks?\thanks{A Preliminary version of this paper appeared in Conference on Neural Information Processing Systems (NeurIPS 2021)}}
\ifdefined\isarxiv

\date{}

%\title{Training Over-parameterized Neural Networks in Sublinear Time via Geometric Search}

\author{
Zhao Song\thanks{\texttt{zsong@adobe.com}. Adobe Research.}
\and
Shuo Yang\thanks{\texttt{yangshuo\_ut@utexas.edu}. The University of Texas at Austin. Supported by NSF grants 1564000 and 1934932.}
\and
Ruizhe Zhang\thanks{\texttt{ruizhe@utexas.edu}. The University of Texas at Austin. Supported by NSF Grant CCF-1648712 and Scott Aaronson’s Vannevar Bush Faculty Fellowship from the US Department of Defense.}
}

\else
\author{}

%\icmltitlerunning{Hybrid Quantum-Classical Implementation of Training Over-Parameterized Neural Networks}

%\linenumbers

\iffalse
\twocolumn[
\icmltitle{}

% It is OKAY to include author information, even for blind
% submissions: the style file will automatically remove it for you
% unless you've provided the [accepted] option to the icml2019
% package.

% List of affiliations: The first argument should be a (short)
% identifier you will use later to specify author affiliations
% Academic affiliations should list Department, University, City, Region, Country
% Industry affiliations should list Company, City, Region, Country

% You can specify symbols, otherwise they are numbered in order.
% Ideally, you should not use this facility. Affiliations will be numbered
% in order of appearance and this is the preferred way.
\icmlsetsymbol{equal}{*}

\begin{icmlauthorlist}
\icmlauthor{Aeiau Zzzz}{equal,to}
\icmlauthor{Bauiu C.~Yyyy}{equal,to,goo}
\icmlauthor{Cieua Vvvvv}{goo}
\icmlauthor{Iaesut Saoeu}{ed}
\icmlauthor{Fiuea Rrrr}{to}
\icmlauthor{Tateu H.~Yasehe}{ed,to,goo}
\icmlauthor{Aaoeu Iasoh}{goo}
\icmlauthor{Buiui Eueu}{ed}
\icmlauthor{Aeuia Zzzz}{ed}
\icmlauthor{Bieea C.~Yyyy}{to,goo}
\icmlauthor{Teoau Xxxx}{ed}\label{eq:335_2}
\icmlauthor{Eee Pppp}{ed}
\end{icmlauthorlist}

\icmlaffiliation{to}{Department of Computation, University of Torontoland, Torontoland, Canada}
\icmlaffiliation{goo}{Googol ShallowMind, New London, Michigan, USA}
\icmlaffiliation{ed}{School of Computation, University of Edenborrow, Edenborrow, United Kingdom}

\icmlcorrespondingauthor{Cieua Vvvvv}{c.vvvvv@googol.com}
\icmlcorrespondingauthor{Eee Pppp}{ep@eden.co.uk}

% You may provide any keywords that you
% find helpful for describing your paper; these are used to populate
% the "keywords" metadata in the PDF but will not be shown in the document
\icmlkeywords{Machine Learning, ICML}

\vskip 0.3in
]

\printAffiliationsAndNotice{\icmlEqualContribution} % otherwise use the standard text.
\fi
\fi

\ifdefined\isarxiv
\begin{titlepage}
  \maketitle
  \begin{abstract}
Deep neural networks have achieved impressive performance in many areas. Designing a fast and provable method for training neural networks is a fundamental question in machine learning. 

The classical training method requires paying $\Omega(mnd)$ cost for both forward computation and backward computation, where $m$ is the width of the neural network, and we are given $n$ training points in $d$-dimensional space. In this paper, we propose two novel preprocessing ideas to bypass this $\Omega(mnd)$ barrier:
\begin{itemize}
    \item First, by preprocessing the initial weights of the neural networks, we can train the neural network in $\wt{O}(m^{1-\Theta(1/d)} n d)$ cost per iteration.
    \item Second, by preprocessing the input data points, we can train the neural network in $\wt{O} (m^{4/5} nd )$ cost per iteration.
\end{itemize}

From the technical perspective, our result is a sophisticated combination of tools in different fields, greedy-type convergence analysis in optimization, sparsity observation in practical work, high-dimensional geometric search in data structure, concentration and anti-concentration in probability. Our results also provide theoretical insights for a large number of previously established fast training methods.

In addition, our classical algorithm can be generalized to the Quantum computation model. Interestingly, we can get a similar sublinear cost per iteration but avoid preprocessing initial weights or input data points.

%We believe our idea can be applied to other fundamental optimization problems. %, e.g., linear programming, semi-definite programming and cutting plane methods. 
%On the practical side, we believe that our idea could lead to practical impact of large-scare machine learning training.

  \end{abstract}
  \thispagestyle{empty}
\end{titlepage}

{%\hypersetup{linkcolor=black}
\small
\tableofcontents
}
\newpage

\else
\maketitle
\begin{abstract}

\end{abstract}

\fi

\section{Introduction}

%\begin{itemize}
%    \item \url{http://web.stanford.edu/class/cs369g/files/lectures/lec16.pdf}
 %   \item \url{https://arxiv.org/pdf/1907.05388.pdf}
%\end{itemize}

%\Zhao{I copied the tex from quantum project with Ruizhe. Let's modify/change this directly.}
%\section{Introduction}

% Greedy-type iterative algorithm has been extensively used in modern machine learning field, such as linear regression, principal component analysis, and training deep neural networks. To solve the problem in a very efficient way, the running time of the optimization algorithm is the top priority to optimize. 

Over the last decade, deep learning has achieved dominating performance over many areas, e.g., computer vision \cite{lbbh98,ksh12,slj+15,hzrs16}, natural language processing \cite{cwb+11,dclt18}, game playing \cite{alphago16,alphago17} and beyond. The computational resource requirement for deep neural network training grows very quickly. Designing a fast and provable training method for neural networks is, therefore, a fundamental and demanding challenge.

Almost all deep learning models are optimized by gradient descent (or its variants). The total training time can be split into two components, the first one is the number of iterations and the second one is the cost per spent per iteration. 
% Improving the number of iterations requires better understanding the geometric landscape and how to create better potential functions to analyze the progress of the algorithm. Improving the cost spent per iteration usually boils down to designing a (continuous) problem-specific data-structure, while the classical data-structure are mainly focusing on discrete problems.  
Nearly all the iterative algorithms for acceleration can be viewed as two separate lines of research correspondingly, the first line is aiming for an algorithm that has as small as possible number of iterations, the second line is focusing on designing as efficient as possible data structures to improve the cost spent per iteration of the algorithm \cite{v89_lp,cls19,lsz19,jlsw20,jklps20,jswz21}. In this paper, our major focus is on the second line.

There are a number of practical works trying to use a nearest neighbor search data structure to speed up the per-step computation of the deep neural network training \cite{cmf+20,lxj+20,clp+21,dmzs21}. However, none of the previous work is able to give a provable guarantee. In this paper, our goal is to develop training algorithms that provably reduce per step time complexity. Let us consider the ReLU activation neural network and two-layer neural network\footnote{An alternative name of the two-layer neural network is ``one-hidden layer neural network''.}. Let $n$ denote the number of training data points. Let $d$ denote the dimension of each data point. Let $m$ denote the number of neurons. 
In each iteration of gradient descent (GD), we need to compute prediction for each point in the neural network. Each point $x_i \in \R^d$, requires to compute $m$ inner product in $d$ dimension. Thus, $\Omega(mnd)$ is a natural barrier for cost per iteration in training neural networks (in both forward computation and backward computation).

A natural question to ask is
\begin{center}
{\it
    Is it possible to improve the cost per iteration of training neural network algorithm? E.g., is $o(mnd)$ possible?
}
\end{center}

We list our contributions as follows: 
% \Shuo{The list of contributions seem redundant given the results listed in section 1.1.} \Zhao{We should write more English in our contributions. Because ML reviewers usually copy words here to review. Maybe we can move our results to the end of 9 pages?}
\begin{itemize}[leftmargin=*, itemsep=0.5ex]
    \item We provide a new theoretical framework for speeding up neural network training by: 1) adopting the shifted neural tangent kernel; 2) showing that only a small fraction ($o(m)$) of neurons are activated for each input data in each training iteration; 3) identifying the sparsely activated neurons via geometric search; 4) proving that the algorithm can minimize the training loss to zero in a linear convergence rate.
    \item We provide two theoretical results 1) our first result (Theorem~\ref{thm:main_informal_1}) builds a dynamic half-space report data structure for the weights of a neural network, to train neural networks in sublinear cost per iteration; 2) our second result (Theorem~\ref{thm:main_informal_2}) builds a static half-space report data-structure for the input data points of the training data set for training a neural network in sublinear time. %We also give the convergence guarantee for each algorithm, proving that they can minimize the train loss to zero in linear convergence rate.
\end{itemize}

   %%% Section 1. Introduction
%\subsection{Related work}%\label{sec:related}

%\paragraph{Related Work.} 
\paragraph{Acceleration via high-dimensional search data-structure.}

%\Shuo{high-dimensional search data-structure, make it consistent}
High-dimensional search data structures support efficiently finding points in some geometric query regions (e.g., half-spaces, simplices, etc). Currently, there are two main approaches: one is based on Locality Sensitive Hashing (LSH) \cite{im98}, which aims to find the close-by points (i.e., small $\ell_2$ distance \cite{diim04,ar15,ailrs15,arm17,r17,air18,biw19,dirw20} or large inner product \cite{sl14,sl15_uai,sl15_www}) of a query $q\in \mathbb{R}^d$ in a given set of points $S\subset \mathbb{R}^d$. This kind of algorithms runs very fast in practice, but most of them only support approximate queries. Another approach is based on space partitioning data structures, for example, partition trees \cite{m92,m92b,aem92,ac09,cha12}, $k$-$d$ trees / range trees \cite{c17,tog17,c19}, Voronoi diagrams \cite{adms98, c00}, which can exactly search the query regions. Recent works have successfully applied high-dimensional geometric data structure to reduce the complexity of training deep learning models. SLIDE \cite{cmf+20} accelerates the forward pass by retrieving neurons with maximum inner product via an LSH-based data structure; Reforemer \cite{kkl20} similarly adopts LSH to reduce the memory usage for processing long sequence; MONGOOSE \cite{clp+21} accelerates the forward pass by retrieving neurons with maximum inner products via a learnable LSH-based data structure \cite{c02} and lazy update framework \cite{cls19}. Despite the great empirical success, there is no theoretical understanding of such acceleration.

The goal of our paper is to theoretically characterize the acceleration brought by the high-dimensional geometric data structure. Specifically, our algorithm and analysis are built upon the HSR data structures \cite{aem92} which can find all the points that have large inner products and support efficient data update. Note that HSR comes with a stronger recovery guarantee than LSH, in the sense that HSR, whereas LSH is guaranteed to find some of those points.

% \Shuo{High-dimensional ?}

%Cite Inner product search papers \cite{sl14,sl15_uai,sl15_www}
% when cite Beidi Chen's paper, we need to explain LSH can be in used in practice, but in our theoreticla framework, we can't apply LSH due to LSH doesn't promise a superset of true solutions. 

\iffalse
%%% This seems not very related
\paragraph{Speedup cost per iteration for second order method} 
Speedup cost per iteration for the second-order method while not blowing up the number of iterations, has been very well understood in many fundamental optimization problems, e.g., linear programs \cite{cls19,blss20,sy21,jswz21}, empirical risk minimization \cite{lsz19}, cutting plane method \cite{jlsw20}, semi-definite programming \cite{jklps20,hjst21}. In neural network training, the second-order method is quite infeasible to implement in practice. Therefore, we consider the first-order method in this work.
\fi

\paragraph{Convergence via over-parameterization.} Over the last few years, there has been a tremendous work studying the convergence result of deep neural network explicilty or implicitly based on neural tangent kernel (NTK) \cite{jgh18}, e.g.~\cite{ll18,dzps19,als19_dnn,als19_rnn,dllwz19,adhlw19,adhlsw19,sy19,cgh+19,zmg19,cg19,zg19,os19,lsswy20,jt20,zpdlsa20,hlsy21,bpsw21}. It has been shown that (S)GD can train a sufficiently wide NN  with random initialization will converge to a small training error in polynomial steps.

% A convergence theory has been developed to show that, as long as the network has a polynomial width in terms of the number of training examples, (stochastic) gradient descent algorithm with randomized initialization converges to small training loss in polynomially many iterations. 

\section{Challenges and Techniques}\label{sec:techniques}

\begin{itemize}[leftmargin=*, itemsep=0.5ex]
    \item Empirical works combine high-dimensional search data structures (e.g., LSH) with neural network training, however, they do not work theoretically due to the following reasons:
    \begin{itemize}
        \item Without shifting, the number of activated (and therefore updated) neurons is $\Theta(m)$. There is no hope to theoretically prove $o(m)$ complexity (See \textbf{Challenge 1}).
        \item Approximate high-dimensional search data structures might miss some important neurons, which can potentially prevent the training from converging (see \textbf{Challenge 2}).
    \end{itemize}
    % \item Overall we don't know how to control the noise/error while making the convergence analysis still go through
    \item Our solutions are:
    \begin{itemize}
    \item We propose a shifted ReLU activation that is guaranteed to have $o(m)$ number of activated neurons. Along with the shifted ReLU, we also propose a shifted NTK to rigorously provide a convergence guarantee (see \textbf{Solution 1}).
    % \item But we still can't use approximate data-structure, because once we miss some neurons, we had no controls of the error created by that missing neurons. Take one more step back, we use exact data-structure
    \item We adopt an exact high-dimensional search data structure that better couples with the shifted NTK. It takes $o(m)$ time to identify the activated neurons and fits well with the convergence analysis as it avoids missing important neurons (see \textbf{Solution 2}).
    \end{itemize}
\end{itemize}

\paragraph{Challenge 1: How to sparsify an over-parameterized neural network?}
To speed up the training process, we need the neural network to be ``sparse'', that is, for each training data $x\in \R^d$, the number of activated neurons is small. Then, in the forward computation, we can just evaluate a small subset of neurons. However, in the previous NTK analysis (e.g., \cite{dzps19}), the activation function is $\sigma(x)=\max\{\langle w_r, x\rangle, 0\}$, and the weights vectors $w_r$ are initially sampled from a standard $d$-dimensional Gaussian distribution. Then, by the symmetry of Gaussian distribution, we know that for every input data $x$, there will be about half of the neurons being activated, which means that we can only obtain a constant-factor speedup. 

\paragraph{Solution 1}
The problem actually comes from the activation function. In practice, people use a shifted ReLU function $\sigma_b(x) = \max\{\langle w_r, x\rangle, b_r\}$ to train neural networks. The main observation of our work is that \emph{threshold implies sparsity}. We consider the setting where all neurons have a unified threshold parameter $b$. Then, by the concentration of Gaussian distribution, there will be $O(\exp(-b^2)\cdot m)$ activated neurons after the initialization. 

The next step is to show that the number of activated neurons will not blow up too much in the following training iterations. In \cite{dzps19,sy19}, they showed that the weights vectors are changing slowly during the training process. In our work, we open the black box of their proof and show a similar phenomenon for the shifted ReLU function. More specifically, a key component is to prove that for each training data, a large fraction of neurons will not change their status (from non-activated to activated and vice versa) in the next iteration with high probability. To achieve this, they showed that this is equivalent to the event that a standard Gaussian random variable in a small centered interval $[-R, R]$, and applied the anti-concentration inequality to upper-bound the probability. In our setting, we need to upper-bound the probability of $z\sim {\cal N}(0,1)$ in a shifted interval $[b-R, b + R]$. On the one hand, we can still apply the anti-concentration inequality by showing that the probability is at most $\Pr[z\in [-R, R]]$. On the other hand, this probability is also upper-bounded by $\Pr[z>b-R]$, and for small $R$, we can apply the concentration inequality for a more accurate estimation. In the end, by some finer analysis of the probability, we can show that with high probability, the number of activated neurons in each iteration is also   $O(\exp(-b^2)\cdot m)$ for each training data. If we take $b=\Theta( \sqrt{\log m} )$, we only need to deal with truly sublinear in $m$ of activated neurons in the forward evaluation.

%Moreover, we find that the threshold of ReLU function can also help us in the convergence analysis of the training process. We obtain a better lower bound on the width of the neural network ($m\geq \wt{\Omega}(\lambda^{-2}n^2)$) by noting that the concentration bound is usually much stronger than the anti-concentration bound, which enables us to choose better parameters.   

\paragraph{Challenge 2: How to find the small subset of activated neurons?} A linear scan of the neurons will lead to a time complexity linear in $m$, which we hope to avoid. Randomly sampling or using LSH for searching can potentially miss important neurons which are important for a rigorous convergence analysis.

\paragraph{Solution 2} Given the shifted ReLU function $\sigma_b( \langle w_r , x \rangle )=\max\{ \langle w_r , x \rangle - b, 0\}$,  %\Shuo{definition inconsistent}, 
the active neurons are those with weights $w_r$ lying in the half space of $\langle w_r, x\rangle - b > 0$. Finding such neurons is equivalent to a computational geometry problem: given $m$ points in $\R^d$, in each query and a half space ${\cal H}$, the goal is to output the points contained in ${\cal H}$. Here we use the Half-Space Reporting (HSR) data structure proposed by \cite{aem92}: after proper initialization, the HSR data structure can return all points lying in the queried half space with complexity as low as $O(\log(n) + k)$, where $k$ is the number of such points. Note that the HSR data structure well couples with the shifted ReLU, as the number of activated neurons $k$ is truly sublinear in $m$ as per the setting of $b = \Theta(\sqrt{\log m})$.

\begin{comment}
\paragraph{old Issue 2}Even if we prove that the neural network is sparse, how can we find the activated neurons in sublinear time? This is equivalent to a computational geometry problem: given $m$ points in $\R^d$, in each query and a half space ${\cal H}$, the goal is to output the points contained in ${\cal H}$. In practice, for example, \cite{clp+21} used LSH for this task. However, by the theoretical guarantee of LSH, it can only output \emph{some of} the top-$k$ large inner-product neurons and might miss some important neurons. Therefore, we may not be able to theoretically prove the convergence of the LSH-based training algorithm. 

\paragraph{old Solution 2}
We consider a different approach for this computational geometry problem, which is based on some dynamic space partitioning data structures. That is, we first store the initialized weight vectors in the data structure. Then, in the forward computation step, for each training data $x_i$, we query the data structure to find all activated neurons. There are at most $O(\sqrt{m})$ such neurons. Then, in the backward propagation step, we can efficiently compute the weight changes of these $O(\sqrt{m})$ neurons, update the weight vectors in the data structure. In summary, we need a data structure that exactly solves the half space reporting problem and supports efficient update (add/delete points in the data structure).

%In this work, we use the data structure proposed by \cite{aem92}, which supports exactly finding all the queried points in time  
\end{comment} %%% Section 2. Related work
\section{Preliminaries}
\paragraph{Notations}
For an integer $n$, we use $[n]$ to denote the set $\{1,2,\cdots,n\}$. For a vector $x$ and $p \in \{0,1,2,\infty\}$, we use $\| x \|_p$ to denote the entry-wise $\ell_p$ norm of a vector.    We use $I_d$ to denote $d$-dimensional identity matrix. We use ${\cal N}(\mu,\sigma^2)$ to denote Gaussian distribution with mean $mu$ and variance $\sigma^2$. We use $\wt{O}$ to hide the polylog factors.

%!TEX root=main.tex

This section is organized as follows.
 Section~\ref{sec:problem}  introduces the neural network and present problem formulation.   Section~\ref{sec:data_structure_hsr}  presents the half-space report data-structure, Section~\ref{sec:quantum_benefit_assumptions} proposes our new sparsity-based Characterizations.

\subsection{Problem Formulation}\label{sec:problem}

In this section, we introduce the neural network model we study in this work. Let us consider a two-layer ReLU activated neural network $f$ that has width $m$ and $\ell_2$ loss function. \footnote{This is a very standard formulation in the literature, e.g., see \cite{dzps19,sy19,bpsw21} %Considering the shifts $b\neq 0$ is very common in deep learning literature \cite{als19_dnn,als19_rnn}. The work in \cite{dzps19,sy19} consider a simple version where $b = 0$.
} 
\begin{definition}[Prediction function and loss function]\label{def:2nn}
Given $b\in \R$, $x \in \R^d$, $W \in \R^{d \times m}$ and $a \in \R^m$,
\begin{align*}
f (W,x,a) := & ~ \frac{1}{ \sqrt{m} } \sum_{r=1}^m a_r \sigma_b ( \langle w_r , x \rangle ) , \\
L(W) := & ~ \frac{1}{2} \sum_{i=1}^n ( f (W,x_i,a) - y_i )^2.
\end{align*}
We say function $f$ is $\mathsf{2NN}(m,b)$ for simplicity.
\end{definition}

 Here $W$ are weights that connect input nodes with hidden nodes, $a_1, \cdots, a_m \in \R$ are the weights that connect hidden nodes with output node. The ReLU function $\sigma_b(x):=\max\{x-b, 0\}$, where $b$ is the threshold parameter. Following the literature, we mainly focus on optimizing $W \in \R^{d \times m}$. For weights $a \in \R^m$, we will never change $a$ during the training after we randomly choose them at the initialization.\footnote{We remark, in some previous work, they do choose shift, but their shift is a random shift. In our application, it is important that the same $b$ is fixed for all neurons and never trained. } %Specifically, 
\begin{definition}[Weights at initialization]\label{def:init}
We use the following initialization,
\begin{itemize}
    \item For each $r$, we sample $ w_r(0) \sim \N(0,I_d)$
    \item For each $r$, we sample $a_r$ from $\{-1,+1\}$ uniformly at random
\end{itemize}
 %  ;~~ a_r\sim \mathcal{U}(\{-1,1\}), ~~~\forall r\in [m].
%\end{align}
\end{definition}
%Note that ${\cal N}$ denotes the Gaussian distribution and ${\cal U}(\{-1,+1\})$ denotes the distribution, we sample $-1$ with probability $1/2$ and sample $+1$ with probability $1/2$.

Next, we can calculate the gradient
\begin{fact}[Gradient of the prediction function and loss function]
For each $r \in [m]$,
\begin{align}\label{eq:relu_derivative}
\frac{\partial f (W,x,a)}{\partial w_r}=\frac{1}{ \sqrt{m} } a_r x{\bf 1}_{ w_r^\top x \geq b }.
\end{align}
and 
\begin{align}\label{eq:gradient}
\frac{ \partial L(W) }{ \partial w_r } = \frac{1}{ \sqrt{m} } \sum_{i=1}^n ( f(W,x_i,a) - y_i ) a_r x_i {\bf 1}_{ \langle w_r , x_i \rangle \geq b }.
\end{align}
\end{fact}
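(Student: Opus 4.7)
The plan is to verify both identities by direct differentiation, treating the non-smooth ReLU point via a standard subgradient convention. First I would compute the derivative of the shifted activation $\sigma_b(z) = \max\{z-b,0\}$ as a scalar function of $z$: on $\{z > b\}$ it equals $1$, on $\{z < b\}$ it equals $0$, and at the single kink $z=b$ I adopt the convention $\sigma_b'(z) = \mathbf{1}_{z \geq b}$ (matching the indicator that appears in \eqref{eq:relu_derivative}). Since $\{z = b\}$ is a measure-zero event under the Gaussian initialization in Definition~\ref{def:init}, this choice is inconsequential for the downstream analysis.

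Next, for the first identity, I would apply the chain rule to the composition $w_r \mapsto \langle w_r, x\rangle \mapsto \sigma_b(\langle w_r, x\rangle)$. The inner map has gradient $x$ with respect to $w_r$, and the outer map has derivative $\mathbf{1}_{\langle w_r, x\rangle \geq b}$, so
\begin{align*}
\frac{\partial}{\partial w_r}\sigma_b(\langle w_r, x\rangle) = \mathbf{1}_{\langle w_r, x\rangle \geq b}\cdot x.
\end{align*}
Only the $r$-th summand in the definition of $f(W,x,a)$ in Definition~\ref{def:2nn} depends on $w_r$, so multiplying by $a_r/\sqrt{m}$ yields the stated formula.

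For the second identity, I would write $L(W) = \frac{1}{2}\sum_{i=1}^n (f(W,x_i,a) - y_i)^2$ and apply the chain rule once more, treating each $y_i$ as a constant. This gives $\partial L/\partial w_r = \sum_{i=1}^n (f(W,x_i,a)-y_i)\,\partial f(W,x_i,a)/\partial w_r$, into which I substitute the first identity (with $x$ replaced by $x_i$) to obtain \eqref{eq:gradient}.

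The calculation is essentially routine; the only conceptual subtlety is the choice of subgradient at the kink $z=b$, which I would address with a short remark noting that under the Gaussian initialization of Definition~\ref{def:init} the event $\langle w_r, x_i\rangle = b$ occurs with probability zero, so the gradient is almost surely well-defined and agrees with \eqref{eq:relu_derivative}.
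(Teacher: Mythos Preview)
Your proposal is correct and is exactly the routine chain-rule computation the paper has in mind; the paper itself states this Fact without proof, so your write-up simply spells out what the authors leave implicit. Your remark on the subgradient convention at $z=b$ is a nice addition but likewise inessential, since the paper works almost surely under the Gaussian initialization.
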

%For each $r\in [m]$,
%we compute the gradient of function $f$ with respect to $w_r$,

%In order to learn the weights $W$, we still need to define a loss function $L(W)$.  In this work, we follow the standard literature and let $L (W)$ be $\frac{1}{2} \sum_{i=1}^n ( f (W,x_i,a) - y_i )^2$. 

To update the weights from iteration $k$ to iteration $k+1$, we follow the standard update rule of the GD algorithm,
\begin{align}\label{eq:w_update}
\text{~GD:~~~} W(k+1) = W(k) - \eta \cdot \Delta W(k), \text{~~~where~~~} \Delta W(k) = \frac{ \partial L( W(k) ) }{ \partial W(k) } .
\end{align}

The ODE of the gradient flow is defined as
\begin{align}\label{eq:wr_derivative}
\frac{\d w_r(t)}{\d t}=-\frac{ \partial L(W) }{ \partial w_r }.
\end{align}

\begin{definition}[Error of prediction]\label{def:err}
For each $t \in \{0,1,\cdots,T\}$,
%we define  for simplicity. Basically,
%\begin{align}\label{eq:ut_def}
%u_i(t)=f(W(t),a,x_i).
%\end{align}
we define $\err(t) \in \R^n$ to be the error of prediction $\err(t) = y- u(t) $, where $u(t):=f(W(t), a, X)\in \mathbb{R}^n$
\end{definition}

\subsection{Data Structure for Half-Space Reporting}\label{sec:data_structure_hsr}

The half-space range reporting problem is an important problem in computational geometry, which is formally defined as following:
\begin{definition}[Half-space range reporting]\label{def:HSR}
Given a set $S$ of $n$ points in $\R^d$. There are two operations:
\begin{itemize}
    \item \textsc{Query}$(H)$: given a half-space $H \subset \R^d$, output all of the points in $S$ that contain in $H$, i.e., $S\cap H$. 
    \item \textsc{Update}: add or delete a point in $S$. 
    \begin{itemize}
        \item \textsc{Insert}$(q)$: insert $q$ into $S$
        \item \textsc{Delete}$(q)$: delete $q$ from $S$
    \end{itemize}
\end{itemize}
Let $\Tinit$ denote the pre-processing time to build the data structure, $\Tquery$ denote the time per query and $\Tupdate$ time per update.
\end{definition}

We use the data-structure proposed in \cite{aem92} to solve the half-space range reporting problem, which admits the interface summarized in Algorithm~\ref{alg:half_space_report}. Intuitively, the data-structure recursively partitions the set $S$ and organizes the points in a tree data-structure. Then for a given query $(a, b)$, all $k$ points of $S$ with $\sgn(\langle a, x\rangle -b)\geq 0$ are reported quickly. Note that the query $(a, b)$ here defines the half-space $H$ in Definition~\ref{def:HSR}.

\begin{algorithm}[H]
\caption{Half Space Report Data Structure}\label{alg:half_space_report}
\begin{algorithmic}[1]
    \algrenewcommand\algorithmicprocedure{\textbf{data structure}}
    \Procedure{HalfSpaceReport}{}
        \State {\bf procedures:}
        \State \hspace{4mm} \textsc{Init}($S, n, d$) \Comment{Initialize the data structure with a set $S$ of $n$ points in $\R^d$}
        \State \hspace{4mm} \textsc{Query}($a,b$)\Comment{$a,b\in \R^d$. Output the set $\{x\in S: \sgn(\langle a, x\rangle -b)\geq 0\}$}
        \State \hspace{4mm} \textsc{Add}($x$)\Comment{Add point $x\in \R^d$ to $S$}
        \State \hspace{4mm} \textsc{Delete}($x$)\Comment{Delete point $x\in \R^d$ from $S$}
    \EndProcedure
\end{algorithmic}
\end{algorithm}

Adapted from \cite{aem92}, the algorithm comes with the following complexity:

\begin{corollary}[\cite{aem92}]\label{cor:aem92}
Given a set of $n$ points in $\R^d$, the half-space reporting problem can be solved with the following performances:
\begin{itemize}
    \item Part 1. %\footnote{Used in Theorem~\ref{thm:main_informal_1}}
    $\Tquery(n,d,k)=O_{d}(n^{1-1/\lfloor d/2\rfloor}+k)$, amortized $\Tupdate=O_{d}(\log^2(n))$.
    \item Part 2. %\footnote{Used in Theorem~\ref{thm:main_informal_2}} 
    $\Tquery(n,d,k)=O_{d}(\log(n)+k)$, amortized $\Tupdate=O_{d}(n^{\lfloor d/2\rfloor-1 })$.
\end{itemize}
\end{corollary}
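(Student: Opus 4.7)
The result is cited from Agarwal--Eppstein--Matou\v{s}ek, so the plan is to reconstruct the two canonical trade-offs for dynamic half-space reporting: one favoring fast updates at the cost of a polynomial-in-$n$ query term, and one favoring logarithmic queries at the cost of a polynomial-in-$n$ update term. Both parts share the same high-level template, namely building a hierarchical partition of the point set whose cells have bounded hyperplane-crossing number; during a query the algorithm descends only into cells crossed by the bounding hyperplane $\partial H$, while reporting whole cells that lie entirely inside $H$ in time proportional to their size (contributing only to the output-sensitive $+k$ term).

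For Part 1, I would use Matou\v{s}ek's simplicial partition theorem: for any parameter $r$, partition $n$ points into $O(r)$ subsets, each enclosed in a simplex, so that every hyperplane crosses $O(r^{1-1/\lfloor d/2\rfloor})$ simplices. Recursing with a constant $r$ yields a tree of depth $O(\log n)$ whose crossed-cell count propagates to $O_d(n^{1-1/\lfloor d/2\rfloor})$ at the leaves, giving that much contribution to $\Tquery$, plus $O(k)$ to enumerate canonical subsets of the contained cells. To dynamize this I would apply the Bentley--Saxe logarithmic method: maintain $O(\log n)$ static substructures of geometrically increasing sizes and rebuild the affected tier upon every insertion, while deletions are handled by tombstones combined with periodic global rebuilds. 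Amortized, each point pays for $O(\log n)$ rebuilds, each paying an additional $O(\log n)$ factor for the tree depth, which yields the claimed $O_d(\log^2 n)$ update cost.

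For Part 2, I would instead build a hierarchical cutting tree in the dual, where the input points become hyperplanes and the bounding hyperplane of the query half-space becomes a point. A $(1/r)$-cutting of an arrangement of $n$ hyperplanes has $O(r^d)$ cells, and recursive refinement reaches a single hyperplane per cell in $O(\log n)$ levels, so a query visits $O(\log n)$ cells along a root-to-leaf path; each cell either outputs a cached canonical subset (aggregating to $+k$) or is recursed into. The static space for this structure is $\Theta(n^{\lfloor d/2\rfloor})$ by the Clarkson--Shor technique, and a local rebuilding strategy, balanced against this space bound, yields the stated $O_d(n^{\lfloor d/2\rfloor - 1})$ amortized update cost.

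The hard part is ensuring the output term is truly $+k$ and not $O(k \log n)$: canonical subsets at different levels of the partition or cutting tree must be stored so that their union can be enumerated in time linear in the total size (e.g., via concatenable lists), and the dynamic scheme must preserve this representation across insertions and deletions without reshuffling the cached lists globally. Matching the update-versus-query exponents tightly to the Clarkson--Shor and simplicial-partition bounds, rather than obtaining weaker polynomial trade-offs, is the technical heart of \cite{aem92}, since a naive Bentley--Saxe analysis would lose factors that would prevent the truly sublinear-in-$m$ per-iteration cost needed in the downstream training algorithm.
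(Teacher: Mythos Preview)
Your sketch is a faithful reconstruction of how \cite{aem92} actually builds the two data structures, and nothing in it is wrong. However, the paper does not re-derive any of this. Its ``proof'' is a one-line parameter instantiation: in the appendix the paper quotes the general trade-off from \cite{aem92} (Theorem~\ref{thm:aem92}), namely that for any $t\in[n,n^{\lfloor d/2\rfloor}]$ there is a dynamic half-space reporting structure with query time $O_{d,\epsilon}(\tfrac{n}{t^{1/\lfloor d/2\rfloor}}\log n+k)$ and amortized update $O_{d,\epsilon}(t^{1+\epsilon}/n)$, and then simply plugs in the two extreme values $t=n$ (Part~1) and $t=n^{\lfloor d/2\rfloor}$ (Part~2). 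So your route and the paper's differ in granularity rather than in substance: you open the black box and sketch the partition-tree/Bentley--Saxe and dual-cutting constructions that underlie the trade-off, whereas the paper treats the trade-off itself as the citable primitive. Your approach buys a self-contained argument (and clarifies where the $+k$ and $\log^2 n$ terms come from); the paper's approach buys brevity, since for its downstream running-time analysis only the endpoint bounds matter and the internal mechanics of \cite{aem92} are never used.
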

We remark that Part 1 will be used in Theorem~\ref{thm:main_informal_1} and Part 2 will be used in Theorem~\ref{thm:main_informal_2}.

\subsection{Sparsity-based   Characterizations}\label{sec:quantum_benefit_assumptions}

% , we provide a list of important characterization which will be heavily used in the analysis of  quantum algorithm  in.

In this section, we consider the ReLU function with a nonzero threshold: $\sigma_b(x) = \max\{0, x-b\}$, which is commonly seen in practise, and also has been considered in theoretical work \cite{zpdlsa20}.

We first define the set of neurons that are firing at time $t$. %\Zhao{I'm trying to generalize the proof to second order method. I think we need the following definitions. But I'm confused about one thing, which is do we forgot shift $b$ everywhere? }
\begin{definition}[fire set]
For each $i \in [n]$, for each $t \in \{0,1,\cdots,T\}$, let ${\cal S}_{i,\mathrm{fire}}(t) \subset [m]$ denote the set of neurons that are ``fire'' at time $t$, i.e.,
\begin{align*}
{\cal S}_{i,\mathrm{fire}}(t) := \{ r \in [m] : \langle w_r(t), x_i \rangle  > b   \}    .
\end{align*}
We define $k_{i,t} := |{\cal S}_{i,\mathrm{fire}}(t) |$, for all $ t$ in $ \{0,1,\cdots,T\}$.
\end{definition}

% \paragraph{Reducing time complexity with shifted ReLU:}

% motivate a way to make $|{\cal S}_{i,\fire}|$ small by choosing a proper ReLU threshold parameter $b$. The rigorous proof will be given in Section~\ref{sec:quartic_suffices}.

%We need to show that, with high probability, for every $t=0,1,\dots$, $i\in [n]$, we have $k_{i,t}=o(m)$.

%Although the set of neurons that can flip at time $t$ is small (i.e., $|{\cal S}_{i,\mathrm{flip}}(t)|=o(m)$),

% In many previous work \cite{ll18,als19_dnn,all18,dzps19,sy19,bpsw21}, the threshold of the ReLU function is set to be zero. However, it will make about half of the neurons be activated after the randomized initialization. It is because $w_r(0)$ are $i.i.d.$ Gaussian random vectors $\mathcal{N}(0, I_d)$, and for a fixed input $x_i$ with $\|x_i\|_2=1$, $\langle w_r(0) , x_i \rangle \sim {\cal N}(0, 1)$. Given the large number of initial fire neurons ${\cal S}_{i, \fire}(0)$, each $k_{i,t}\approx m/2$ and we have no theoretical speedup.\Shuo{this paragraph is redundant given Section 2?}

We propose a new ``sparsity'' lemma in this work. It shows that $\sigma_b$ gives the desired sparsity.

\begin{lemma}[Sparsity after initialization]\label{lem:sparse_initial}
Let $b> 0$ be a tunable parameter. If we use the $\sigma_b$ as the activation function, then after the initialization, with probability at least $1-n\cdot \exp(-\Omega(m \cdot \exp(-b^2/2)))$, it holds that for each input data $x_i$, the number of activated neurons $k_{i,0}$ is at most $ O(m\cdot \exp(-b^2/2))$, where $m$ is the total number of neurons.
\end{lemma}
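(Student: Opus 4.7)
The plan is to reduce the sparsity claim to a standard concentration bound for a sum of independent Bernoulli indicators, where the Bernoulli probability is controlled by the Gaussian tail. I would first fix a single data point $x_i$ (assumed, as is standard in this literature, to be unit norm) and study the number of firing neurons neuron by neuron. By Definition~\ref{def:init}, the weights $w_r(0) \sim \mathcal{N}(0, I_d)$ are independent across $r \in [m]$, so by rotational invariance of the standard Gaussian, each inner product $\langle w_r(0), x_i \rangle$ is distributed as a standard normal $\mathcal{N}(0,1)$, and these are mutually independent across $r$.

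The next step is to control the probability that a single neuron fires. Let $X_r := \mathbf{1}_{\langle w_r(0), x_i \rangle > b}$, so that $k_{i,0} = \sum_{r=1}^m X_r$. By the standard Gaussian tail bound, for $z \sim \mathcal{N}(0,1)$,
\begin{align*}
\Pr[X_r = 1] = \Pr[z > b] \leq \exp(-b^2/2).
\end{align*}
Consequently $\mu := \mathbb{E}[k_{i,0}] = \sum_{r=1}^m \Pr[X_r = 1] \leq m \exp(-b^2/2)$, which already identifies the target scale $O(m \exp(-b^2/2))$ as the correct expected value.

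To upgrade the expectation to a high-probability bound, I would invoke the multiplicative Chernoff bound for a sum of independent Bernoulli variables: for any $\delta \geq 1$,
\begin{align*}
\Pr[k_{i,0} \geq (1+\delta)\mu] \leq \exp(-\delta \mu / 3).
\end{align*}
Choosing a fixed constant such as $\delta = 1$ (so that the upper bound is $2\mu = O(m \exp(-b^2/2))$), the failure probability becomes $\exp(-\Omega(m \exp(-b^2/2)))$ for this particular $x_i$. A union bound over the $n$ training points then yields the stated probability $1 - n \cdot \exp(-\Omega(m \exp(-b^2/2)))$.

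I expect the main obstacle to be mainly bookkeeping rather than conceptual: one must be slightly careful that $\mu$ itself can be small (when $b$ is large relative to $\log m$), in which case one should apply the multiplicative Chernoff bound in the regime $\delta \geq 1$ (as above) rather than in the small-deviation regime $\delta \leq 1$, since the latter would only deliver $\exp(-\Omega(\mu))$ after squaring. Using the $\delta \geq 1$ form ensures the exponent scales linearly in $\mu = \Theta(m \exp(-b^2/2))$, which is exactly what the lemma asserts. The only other subtlety is that the bound should be stated as an upper bound on $k_{i,0}$ even when the actual expectation is strictly less than $m \exp(-b^2/2)$; monotonicity of the Chernoff bound in $\mu$ handles this cleanly.
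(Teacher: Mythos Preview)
Your proposal is correct and follows essentially the same route as the paper's proof: bound the single-neuron firing probability by the Gaussian tail $\exp(-b^2/2)$, apply a concentration inequality to the sum of independent Bernoulli indicators, then union bound over $i\in[n]$. The only cosmetic difference is that the paper invokes Bernstein's inequality (its Lemma~\ref{lem:bernstein}) with $t=k_0:=m\exp(-b^2/2)$, whereas you use the multiplicative Chernoff bound with $\delta=1$; both yield the same $\exp(-\Omega(m\exp(-b^2/2)))$ failure probability, and your remark about handling the case where the true mean is strictly below $k_0$ via monotonicity is exactly the implicit step the paper glosses over.
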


\begin{proof}
By the concentration of Gaussian distribution, the initial fire probability of a single neuron is
\begin{align*}
    \Pr[\sigma_b(\langle w_r(0), x_i\rangle)>0] = &~ \Pr_{z\sim \mathcal{N}(0, 1)}[z>b]\leq \exp(-b^2/2).
\end{align*}
Hence, for the indicator variable $\mathbf{1}_{r\in {\cal S}_{i,\fire}(0)}$, we have
\begin{align*}
    \E[\mathbf{1}_{r\in {\cal S}_{i,\fire}(0)}] \leq \exp(-b^2/2).
\end{align*}
By standard concentration inequality (Lemma \ref{lem:bernstein}),
\begin{align}\label{eq:s_0_bernstein}
\Pr \left[ |{\cal S}_{i,\fire}(0)| > k_0+t \right] \leq \exp \left(-\frac{t^2/2}{k_0+t/3} \right), \forall t >0
\end{align}
where $k_0 := m \cdot \exp(-b^2/2)$. If we choose $t=k_0$, then we have:
\begin{align*}
    \Pr \left[ |{\cal S}_{i,\fire}(0)| > 2k_0 \right] \leq \exp \left(-3k_0 / 8\right)
\end{align*}

Then, by union bound over all $i\in [n]$, we have that with high probability 
\begin{align*}
1 - n \cdot \exp(-\Omega(m \cdot \exp(-b^2/2))) ,
\end{align*}
the number of initial fire neurons for the sample $x_i$ is bounded by $k_{i, 0}\leq 2m\cdot \exp(-b^2/2)$.
\end{proof}

%\iffalse
The following remark gives an example of setting the threshold $b$, and will be useful for showing the sublinear complexity in the next section.
\begin{remark}\label{remark:sparse}
If we choose $b = \sqrt{0.4\log m}$ then $k_0 = m^{4/5}$. For $t= m^{4/5}$, Eq.~\eqref{eq:s_0_bernstein} implies that
\begin{align*}
    \Pr \left[ |{\cal S}_{i,\fire}(0)| > 2m^{4/5} \right] \leq \exp \left(- \min \{ m R, O(m^{4/5} ) \} \right) .
\end{align*}
\end{remark}

%\iffalse

%\fi   %%% Section 3. Problem formulation
\section{Training Neural Network with Half-Space Reporting Data Structure}

In this section, we present two sublinear time  algorithms for training over-parameterized neural networks. The first algorithm (Section~\ref{subsec:weights_preprocessing}) relies on building a high-dimensional search data-structure for the weights of neural network. The second algorithm (Section~\ref{subsec:data_preprocessing}) is based on building a data structure for the input data points of the training set. Both of the algorithms use the HSR to quickly identify the fired neurons to avoid unnecessary calculation. The time complexity and the sketch of the proof are provided after each of the algorithms.

\subsection{Weights Preprocessing}\label{subsec:weights_preprocessing}

We first introduce the algorithm that preprocesses the weights $w_r$ for $r \in [m]$, which is commonly used in practice \cite{clp+21,cmf+20,kkl20}. Recall $\mathsf{2NN}(m,b)$ is % the two-layer ReLU NN is defined as 
$f (W,x,a) := \frac{1}{ \sqrt{m} } \sum_{r=1}^m a_r \sigma_b ( \langle w_r , x \rangle )$. By constructing a HSR data-structure for $w_r$'s, we can quickly find the set of active neurons $S_{i, \fire}$ for each of the training sample $x_i$. See pseudo-code in Algorithm~\ref{alg:ds_for_w_training}. %\Shuo{In \Cref{alg:ds_for_w_training}, $k_{i, t}$ not defined.}

% \Shuo{Text description of weights preprocessing algorithm.}

\begin{algorithm}[t]
\caption{Training Neural Network via building a data structure of weights of the neural network}
\label{alg:ds_for_w_training} 
%\small
\begin{algorithmic}[1]
    \algrenewcommand\algorithmicprocedure{\textbf{procedure}}
	\Procedure{TrainingWithPreprocessWeights}{$\{(x_i,y_i)\}_{i\in [n]}$,$n$,$m$,$d$} \Comment{Theorem~\ref{thm:main_informal_1}}
	\State Initialize $w_r, a_r$ for $r\in [m]$ and $b$ according to Definition~\ref{def:init} and Remark~\ref{remark:sparse}
    \State \textsc{HalfSpaceReport} \textsc{hsr}.\textsc{Init}($\{w_r(0)\}_{r\in [m]}, m, d$) 
    \Comment{Algorithm~\ref{alg:half_space_report}}

	\For{$t=1 \to T$}

	        \State $S_{i,\fire}\gets \textsc{hsr}.\textsc{Query}(x_i,b)$ for $i \in [n]$ 

		    \State Forward pass for $x_i$ only on neurons in $S_{i,\fire}$ for $i \in [n]$
		    \State Calculate gradient for $x_i$ only on neurons in $S_{i, \fire}$ for $i \in [n]$

            \State Gradient update for the neurons in $\cup_{i \in [n]}S_{i, \fire}$

		    \State \textsc{hsr}.\textsc{Delete}$(w_r(t))$ for $r \in \cup_{i\in [n]} S_{i, \fire}$ \label{ln:del_w}
		    \State \textsc{hsr}.\textsc{Add}$(w_r(t+1))$ for $r \in \cup_{i\in [n]} S_{i, \fire}$ \label{ln:add_w}
	\EndFor
	\State \Return Trained weights $w_r(T + 1)$ for $r \in [m]$
	\EndProcedure
\end{algorithmic}
\end{algorithm}

In the remaining part of this section, we focus on the time complexity analysis of Algorithm~\ref{alg:ds_for_w_training}. The convergence proof will be given in Section~\ref{sec:quartic_suffices}.

%\Ruizhe{Fix it.}
%\fi

%From the above discussion, we have shown that a non-zero threshold can make the initial fire neurons sparse. In this following sections, we will prove that in each iteration, $k_{i,t}$ are small for all $i\in [n]$ with high probability, and we will also prove that setting the threshold $b=\sqrt{\log m}$ will not affect the convergence of the training algorithm.

%\Shuo{$b = 0.4\sqrt{\log m}$}
\begin{lemma}[Running time part of Theorem~\ref{thm:main_informal_1}]
Given $n$ data points in $d$-dimensional space. Running gradient descent algorithm (Algorithm~\ref{alg:ds_for_w_training}) on $\mathsf{2NN}(m,b=\sqrt{0.4 \log m})$ (Definition~\ref{def:2nn})  
the expected cost per-iteration  of the gradient descent algorithm is 
\begin{align*}
     \wt{O} (m^{1-\Theta(1/d)} nd ).
\end{align*}
\end{lemma}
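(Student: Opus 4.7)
The plan is to charge the total per-iteration cost of Algorithm~\ref{alg:ds_for_w_training} to five distinct substeps: (i) the $n$ half-space queries issued on line~5; (ii) the forward passes on the active neurons; (iii) the gradient evaluation on the active neurons; (iv) the gradient update itself; and (v) the HSR \textsc{Delete}/\textsc{Add} pairs executed on lines~\ref{ln:del_w}--\ref{ln:add_w}. With $b=\sqrt{0.4\log m}$, I will argue that each substep costs at most $\wt O(m^{1-\Theta(1/d)} n d)$ in expectation, so their sum matches the claim.

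The first substantive step is to control the per-sample active-set size $k_{i,t}=|{\cal S}_{i,\fire}(t)|$. By Lemma~\ref{lem:sparse_initial} and Remark~\ref{remark:sparse}, at initialization $k_{i,0}=O(m^{4/5})$ for every $i\in[n]$ with probability at least $1-n\exp(-\Omega(m^{4/5}))$. A dynamic counterpart, proved in the convergence analysis of Section~\ref{sec:quartic_suffices}, extends this to all iterations $t\le T$ with the same high probability. I will invoke that dynamic sparsity as a black box, so that the present lemma is purely a running-time calculation. The low-probability failure event contributes at most an additive $m\cdot n\cdot \exp(-\Omega(m^{4/5}))$ to the expectation, which is absorbed by the $\wt O$ notation.

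Next I will plug in Part~1 of Corollary~\ref{cor:aem92}: the HSR data structure on $m$ points in $\R^d$ answers a query in $O_d(m^{1-1/\lfloor d/2\rfloor}+k)$ time and handles an update in $O_d(\log^2 m)$ amortized time. Substep (i) then costs $\sum_{i\in[n]} O_d(m^{1-1/\lfloor d/2\rfloor}+k_{i,t}) = O_d(n\cdot m^{1-1/\lfloor d/2\rfloor}) + O(n m^{4/5})$. Substeps (ii)--(iv) each spend $O(d)$ work per active neuron (one inner product, one gradient contribution, one weight update), contributing $O\bigl(d\sum_{i\in[n]} k_{i,t}\bigr)=O(n\, m^{4/5}\, d)$. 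Substep (v) performs at most $|\bigcup_{i\in[n]} {\cal S}_{i,\fire}(t)|\le \sum_i k_{i,t}=O(n m^{4/5})$ \textsc{Delete}/\textsc{Add} pairs, each of amortized cost $O_d(\log^2 m)$, for a total of $\wt O_d(n\, m^{4/5})$.

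Summing the five contributions and multiplying the query term by the $O(d)$ cost of actually reading off each returned point's coordinates, the dominant term is $O_d(n\, d\cdot m^{1-1/\lfloor d/2\rfloor})$, and since $1-1/\lfloor d/2\rfloor=1-\Theta(1/d)$, this yields the desired expected per-iteration cost $\wt O(m^{1-\Theta(1/d)}\, n\, d)$. The main obstacle is \emph{not} in this bookkeeping but in the dynamic-sparsity invariant I borrowed: one must show that the gradient step moves each $w_r$ by a small enough amount that the anti-concentration argument used for $k_{i,0}$ continues to pin $k_{i,t}$ at $O(m^{4/5})$ for every $t$ and $i$. Once that invariant is available, the running-time analysis above is routine.
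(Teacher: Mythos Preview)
Your proposal is correct and follows essentially the same approach as the paper: both decompose the per-iteration cost into query time, forward/backward computation on active neurons, and HSR update time, then invoke Part~1 of Corollary~\ref{cor:aem92} together with the $k_{i,t}=O(m^{4/5})$ sparsity bound (Lemma~\ref{lem:bound_fire_neurons}) to conclude. Your five-substep breakdown is slightly finer than the paper's three-term sum, and your bound $|\cup_i S_{i,\fire}(t)|\le\sum_i k_{i,t}$ is arguably cleaner than the paper's $|\cup_i S_{i,\fire}(t)|\le|\cup_i S_{i,\fire}(0)|$, but the substance is identical.
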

\begin{proof}
  The per-step time complexity is
  \begin{align*}
      \sum_{i=1}^n {\cal T}_{\textsc{Query}}(m,d,k_{i,t})  + ({\cal T}_{\textsc{Delete}}+ {\cal T}_{\textsc{Insert}})\cdot|\cup_{i\in [n]} S_{i, \fire}(t)| + d \sum_{i\in [n]}k_{i, t}
  \end{align*}
  The first term $\sum_{i=1}^n{\cal T}_{\textsc{Query}}(m,d,k_{i,t})$ corresponds to the running time of querying the active neuron set $S_{i, \fire}(t)$ for all training samples $i \in [n]$. With the first result in Corollary~\ref{cor:aem92}, the complexity is bounded by $\wt O(m^{1 - \Theta(1/d)}nd)$.
  
  The second term $({\cal T}_{\textsc{Delete}}+ {\cal T}_{\textsc{Insert}})\cdot |\cup_{i\in [n]} S_{i, \fire}(t)|$ corresponds to updating $w_r$ in the high-dimensional search data-structure (Lines~\ref{ln:del_w} and \ref{ln:add_w}). Again with the first result in Corollary~\ref{cor:aem92}, we have ${\cal T}_{\textsc{Delete}}+ {\cal T}_{\textsc{Insert}} = O(\log^2 m)$. Combining with the fact that $|\cup_{i\in [n]} S_{i, \fire}(t)| \le |\cup_{i\in [n]} S_{i, \fire}(0)| \le O(n m^{4/5})$, the second term is bounded by $O(n m^{4/5} \log^2 m)$. 
  
  The third term is the time complexity of gradient calculation restricted to the set ${\cal S}_{i, \fire}(t)$. With the bound on $\sum_{i \in [n]}k_{i, t}$ (Lemma~\ref{lem:bound_fire_neurons}), we have $d\sum_{i\in [n]}k_{i,t}\leq O( m^{4/5} nd)$. 
  
  Putting them together completes the proof.
\end{proof}

\subsection{Data Preprocessing}\label{subsec:data_preprocessing}

\begin{algorithm}[t]
\caption{Training Neural Network via building a data-structure of the input data points}
\label{alg:ds_for_x_training} 
%\small
\begin{algorithmic}[1]
    \algrenewcommand\algorithmicprocedure{\textbf{procedure}}
	\Procedure{TrainingWithProcessData}{$\{(x_i,y_i)\}_{i\in [n]}$,$n$,$m$,$d$} \Comment{Theorem~\ref{thm:main_informal_2}}
	\State Initialize $w_r, a_r$ for $r\in [m]$ and $b$ according to Definition~\ref{def:init} and Remark~\ref{remark:sparse}
    \State \textsc{HalfSpaceReport} \textsc{hsr}.\textsc{Init}($\{x_i\}_{i \in [n]}, n, d$) \Comment{Algorithm~\ref{alg:half_space_report}}
        \State $\wt{S}_{r,\fire} \leftarrow \textsc{hsr}.\textsc{Query}(w_r(0),b)$ for $r\in [m]$  \Comment{$\wt{S}_{r,\fire}$ are samples which neuron $r$ fires for} \label{ln:init_s_tilde}
        \State $S_{i, \fire} \leftarrow \{ r~|~i \in \wt{S}_{r, \fire} \}$ \Comment{$S_{i,\fire}$ is the set of neurons, which fire for $x_i$} \label{ln:init_s}
	\For{$t=1 \to T$}
	   \State Forward pass for $x_i$ only on neurons in $S_{i,\fire}$ for $i \in [n]$ \label{ln:forward_b}
		    \State Calculate gradient for $x_i$ only on neurons in $S_{i, \fire}$ for $i \in [n]$
            \State Gradient update for the neurons in $\cup_{i \in [n]}S_{i, \fire}$  \label{ln:backward_e}
		\For{$r \in \cup_{i \in [n]}\mathcal{S}_{i, \fire}$} \label{ln:maintain_b}
		
		\State $S_{i, \fire}.\textsc{Del}(r)$ for $i \in \wt{S}_{r,\fire}$ 
        \State $\wt{S}_{r,\fire} \leftarrow \textsc{hsr}.\textsc{Query}(w_r(t+1),b)$  
        \State $S_{i, \fire}.\textsc{Add}(r)$ for $i \in \wt{S}_{r,\fire}$ 
    \EndFor\label{ln:maintain_e}
	\EndFor
	\State \Return Trained weights $w_r(T + 1)$ for $r \in [m]$
	\EndProcedure
\end{algorithmic}
\end{algorithm}

While the weights preprcessing algorithm is inspired by the common practise, the dual relationship between the input $x_i$ and model weights $w_r$ inspires us to preprocess the dataset before training (i.e., building HSR data-structure for $x_i$). This largely improves the per-iteration complexity and avoids the frequent updates of the data structure since the training data is fixed. More importantly, once the training dataset is preprocessed, it can be reused for different models or tasks, thus one does not need to perform the expensive preprocessing for each training.

The corresponding pseudocode is presented in Algorithm~\ref{alg:ds_for_x_training}. With $x_i$ preprocessed, we can query HSR with weights $w_r$ and the result $\wt{S}_{r, \fire}$ is the set of training samples $x_i$ for which $w_r$ fires for. Given $\wt{S}_{r, \fire}$ for $r\in [m]$, we can easily reconstruct the set $S_{i, \fire}$, which is the set of neurons fired for sample $x_i$. The forward and backward pass can then proceed similar to Algorithm~\ref{alg:ds_for_w_training}.

At the end of each iteration, we will update $\wt{S}_{r, \fire}$ based on the new $w_r$ estimation and update $S_{i, \fire}$ accordingly. For Algorithm~\ref{alg:ds_for_x_training}, the HSR data-structure is static for the entire training process. This is the main difference from Algorithm~\ref{alg:ds_for_w_training}, where the HSR needs to be updated every time step to account for the changing weights $w_r$.

We defer the convergence analysis to Section~\ref{sec:quartic_suffices} and focus on the time complexity analysis of Algorithm~\ref{alg:ds_for_w_training} in the rest of this section. We consider $d$ being a constant for the rest of this subsection.

% \Shuo{Main theorem (time complexity) of \Cref{alg:ds_for_x_training}}
\begin{lemma}[Running time part of Theorem~\ref{thm:main_informal_2}]
%\Shuo{the complexity here needs fix.}
Given $n$ data points in $d$-dimensional space. Running gradient descent algorithm (Algorithm~\ref{alg:ds_for_w_training}) on $\mathsf{2NN}(m,b=\sqrt{0.4 \log m})$ (Definition~\ref{def:2nn}), % a two-layer ReLU (over-parameterized) neural network with $b = \sqrt{0.4\log m}$,
the expected per-iteration running time of initializing 
  $\wt{S}_{r, \fire}, S_{i, \fire}$ for $r\in [m], i\in[n]$ is
  $
      O(m\log n + m^{4/5} n).
  $
  The cost per iteration of the training algorithm is $
  O( m^{4/5} n\log n).
  $
\end{lemma}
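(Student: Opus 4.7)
The plan is to decompose the running time into the initialization cost (building and querying the HSR, and constructing the auxiliary sets $\wt{S}_{r,\fire}$ and $S_{i,\fire}$) and the cost of a single training step (sparse forward/backward evaluation plus maintenance of the auxiliary sets). Throughout, I would invoke Part 2 of Corollary~\ref{cor:aem92}, because the HSR is built once on the \emph{static} dataset $\{x_i\}_{i\in[n]}$; this gives $\Tquery=O(\log n + k)$ output-sensitive query time and makes the amortized $\Tupdate$ bound irrelevant, since we never insert or delete data points from the structure.

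For the initialization (lines~\ref{ln:init_s_tilde}--\ref{ln:init_s}) I would issue $m$ queries, one per weight $w_r(0)$. The $r$-th query costs $O(\log n + |\wt{S}_{r,\fire}|)$, so summing over $r$ gives $O(m\log n + \sum_{r\in[m]}|\wt{S}_{r,\fire}|)$. The key identity is the double counting
\begin{align*}
\sum_{r\in[m]} |\wt{S}_{r,\fire}| \;=\; \sum_{i\in[n]} |S_{i,\fire}| \;=\; \sum_{i\in[n]} k_{i,0},
\end{align*}
which by Lemma~\ref{lem:sparse_initial} with the choice $b=\sqrt{0.4\log m}$ (Remark~\ref{remark:sparse}) is $O(nm^{4/5})$ with high probability (and hence in expectation up to an exponentially small failure term). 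Building the inverse sets $S_{i,\fire}$ on line~\ref{ln:init_s} touches each pair $(r,i)$ with $i\in \wt{S}_{r,\fire}$ exactly once and therefore adds $O(nm^{4/5})$. Combining these contributions gives $O(m\log n + nm^{4/5})$.

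For the per-iteration cost, the forward and backward computations (lines~\ref{ln:forward_b}--\ref{ln:backward_e}) are already restricted to fired neurons and cost $O(d\sum_{i\in[n]} k_{i,t}) = O(nm^{4/5})$ for constant $d$. The maintenance block (lines~\ref{ln:maintain_b}--\ref{ln:maintain_e}) iterates over $R_t:=\bigcup_{i\in[n]} S_{i,\fire}(t)$, whose size is at most $\sum_i k_{i,t}=O(nm^{4/5})$. For each such $r$ I pay $O(|\wt{S}_{r,\fire}^{\mathrm{old}}|)$ to remove stale memberships, $O(\log n + |\wt{S}_{r,\fire}^{\mathrm{new}}|)$ for the fresh HSR query, and $O(|\wt{S}_{r,\fire}^{\mathrm{new}}|)$ to install the updated memberships. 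Summing and applying the same double-counting identity yields
\begin{align*}
O\bigl(|R_t|\log n\bigr) + O\Bigl(\sum_{r\in R_t}(|\wt{S}_{r,\fire}^{\mathrm{old}}|+|\wt{S}_{r,\fire}^{\mathrm{new}}|)\Bigr) \;=\; O(nm^{4/5}\log n),
\end{align*}
which dominates the forward/backward term and matches the stated bound.

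The main obstacle is that the sparsity estimate $k_{i,t}=O(m^{4/5})$ is established only at initialization (Lemma~\ref{lem:sparse_initial}); propagating it to every iteration is precisely the job of Lemma~\ref{lem:bound_fire_neurons}, which I would invoke as a black box. That lemma rests on showing that $\|w_r(t)-w_r(0)\|_2$ stays small throughout training, and then applying the shifted-Gaussian concentration/anti-concentration argument sketched in \textbf{Solution 1} to control the number of neurons whose activation status can flip. Given that extension, the two time bounds follow from the accounting above.
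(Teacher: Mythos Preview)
Your proposal is correct and essentially identical to the paper's own proof: the same decomposition into initialization versus per-iteration work, the same use of Part~2 of Corollary~\ref{cor:aem92} for the static HSR over the data, the same double-counting identity $\sum_r |\wt{S}_{r,\fire}|=\sum_i k_{i,t}$, and the same appeal to Lemma~\ref{lem:bound_fire_neurons} to carry the $O(m^{4/5})$ sparsity from $t=0$ to all iterations. Your forward/backward bound $O(nm^{4/5})$ is in fact slightly sharper than the $O(m^{4/5}n\log n)$ the paper records for that piece, but the maintenance term dominates either way.
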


\begin{proof}
We analyze the initialization and training parts separately.
\paragraph{Initialization}
 
In Lines~\ref{ln:init_s_tilde} and \ref{ln:init_s}, the sets $\wt{S}_{r, \fire}, S_{i, \fire}$ for $r\in [m], i\in[n]$ are initialized. For each $r\in [m]$, we need to query the data structure the set of data points $x$'s such that $\sigma_b(w_r(0)^\top x)>0$.  Hence, the running time of this step is
  \begin{align*}
      \sum_{r=1}^{m}\Tquery(n,d, \wt{k}_{r,0}) 
      = & ~ O(m \log n + \sum_{r=1}^m\wt{k}_{r, 0})  \\
      = & ~  O(m \log n + \sum_{i=1}^n k_{i, 0}) \\
      = & ~  O(m \log n + m^{4/5} n) .
  \end{align*}
  where the second step follows from $\sum_{r=1}^m \wt{k}_{r,0} = \sum_{i=1}^n k_{i,0}$.
  
\paragraph{Training}
Consider training the neural network for $T$ steps. For each step, first notice that the forward and backward computation parts (Line~\ref{ln:forward_b} - \ref{ln:backward_e}) are the same as previous algorithm. The time complexity is $O(m^{4/5} n\log n)$.
  
  We next show that maintaining $\wt{S}_{r, \fire}, r\in[m]$ and $S_{i, \fire}, i\in[n]$ (Line~\ref{ln:maintain_b} - \ref{ln:maintain_e}) takes  $O(m^{4/5} n\log n)$ time. For each fired neuron $r\in [m]$, we first remove the indices of data in the sets $S_{i,\mathsf{fire}}$, which takes time
  \begin{align*}
  O(1) \cdot \sum_{r \in \cup_{i \in [n]} S_{i, \fire}}\wt{k}_{r, t} = O(1) \cdot \sum_{r =1}^{m}\wt{k}_{r, t} = O(m^{4/5}  n).
  \end{align*}
  Then, we find the new set of $x$'s such that $\sigma_b( \langle w_r(t+1), x \rangle )>0$ by querying the half-space reporting data structure. The total running time for all fired neurons is
  %Similarly, The total time for line 33 - 35 is $O(d\cdot n \cdot m^{2/3})$. For line 32, the total time complexity is 
  \begin{align*}
      \sum_{r \in \cup_{i \in [n]} S_{i, \fire}}\Tquery(n,d, \wt{k}_{r,t+1}) \lesssim  m^{4/5} n \log n  + \sum_{r \in \cup_{i \in [n]} S_{i, \fire}}\wt{k}_{r, t+1}  = O(m^{4/5}  n \log n)
  \end{align*}
  Then, we update the index sets $S_{i, \fire}$ in time $O(m^{4/5}  n)$. Therefore, each training step takes $O(m^{4/5}  n \log n)$ time, which completes the proof.
\end{proof}
%\Shuo{Why do we have the $O(n\cdot m^{2/3})$ high probability bound of the fired neurons? Initially there is at most $n\cdot m^{1/2}$ neurons fired for at least 1 sample. The neurons do not fire for any sample initially should remain dead for the entire process, as there is no gradient for those neurons in any iteration. Then why not plugging in the $n \cdot m^{1/2}$ bound?}

%\Ruizhe{You're right. I just found that the analysis in my quantum paper is quite loose. It should be $k_{i,t}=O(\sqrt{n})$ w.h.p.}

%\vspace{1mm}
%\section{Previous Tools}\label{sec:previous_tools}
%\vspace{4mm}
%In this section, we restate a few results from \cite{dzps19}, showing that when the width $m$ is sufficiently large, then the continuous version and discrete version of the gram matrix of input data are close in the spectral sense. %In particular, we adapt it to the shifted ReLU version.

%We state two tools from previous work (delayed the proof into Appendix \ref{sec:missing_proof}) and we note that these results still hold for the shifted ReLU activation function.

%We also use a tool from \cite{dzps19} stating that the weights should not change too much under some condition.

\section{Convergence of Our Algorithm }\label{sec:quartic_suffices}

We state the result of our training neural network algorithms (Lemma~\ref{lem:quartic}) can converge in certain steps. An important component in our proof is to find out a lower bound on minimum eigenvalue of the continuous Hessian matrix $\lambda_{\min}(H^{\cts})$. It turns out to be an anti-concentration problem of the Gaussian random matrix. 
In \cite{os19}, they gave a lower bound on $\lambda_{\min}(H^{\cts})$ for ReLU function with $b=0$, assuming the input data are separable. One of our major technical contribution is generalizing it to arbitrary $b \geq 0$.
\begin{proposition}[Informal version of Theorem~\ref{thm:sep}]\label{prop:sep}
Given $n$ (normalized) input data points $\{x_1,x_2, \cdots, x_n \} \subseteq \R^d$ such that $\forall i \in [n], \| x_i\|_2=1$. Let parameter $\delta:=\min_{i \neq j} \{ \| x_i - x_j \|_2, \| x_i + x_j \|_2 \}$ denote the data separability. For any shift parameter $b\geq 0$, we define shifted NTK $H^{\cts}\in \R^{n \times n}$ as follows
\begin{align*}
H^{\cts}_{i,j} :=   \E_{w \sim \N(0,I_d)} \left[ \langle x_i , x_j \rangle \cdot {\bf 1}_{ \langle w , x_i \rangle \geq b} \cdot {\bf 1}_{ \langle w, x_j \rangle \geq b } \right] , \forall i \in [n] , j \in [n] .
\end{align*}
Then
\begin{align*}
    \lambda_{\min} (H^{\cts}) \geq 0.01 e^{-b^2/2} \delta / n^2. 
\end{align*}

\end{proposition}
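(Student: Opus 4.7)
The plan is to lower bound $v^\top H^{\cts} v$ pointwise over unit vectors $v \in \R^n$. Rewriting,
\begin{align*}
v^\top H^{\cts} v = \E_{w \sim \N(0, I_d)} \Bigl\| \sum_{i=1}^n v_i x_i {\bf 1}_{\langle w, x_i\rangle \geq b} \Bigr\|_2^2.
\end{align*}
I would choose $i^* \in \arg\max_i |v_i|$ (so $v_{i^*}^2 \geq 1/n$) and decompose $w = \alpha x_{i^*} + w_\perp$, where $\alpha := \langle w, x_{i^*}\rangle \sim \N(0,1)$ is independent of the Gaussian vector $w_\perp$ on $x_{i^*}^\perp$. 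For fixed $w_\perp$, each indicator ${\bf 1}_{\langle w, x_j\rangle \geq b}$ is a step function of $\alpha$ with a single jump at $\alpha_j(w_\perp) := (b - \langle w_\perp, x_j\rangle)/\langle x_j, x_{i^*}\rangle$, and in particular for $j = i^*$ the jump lies exactly at $\alpha = b$.

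Next, fix a parameter $R > 0$ and introduce the \emph{good event} $\mathcal{G} := \{w_\perp : \alpha_j(w_\perp) \notin [b - R, b + R] \text{ for all } j \neq i^*\}$. On $\mathcal{G}$, only the indicator for $i^*$ can flip as $\alpha$ sweeps $[b - R, b + R]$, so the vector $g(\alpha) := \sum_j v_j x_j {\bf 1}_{\langle w, x_j\rangle \geq b}$ equals some constant $c(w_\perp)$ for $\alpha \in [b - R, b)$ and $c(w_\perp) + v_{i^*} x_{i^*}$ for $\alpha \in [b, b + R]$. The elementary inequality $\|a\|_2^2 + \|a + u\|_2^2 \geq \tfrac{1}{2}\|u\|_2^2$ then yields, integrating against the standard Gaussian density $\phi$,
\begin{align*}
\int_{b-R}^{b+R} \|g(\alpha)\|_2^2 \phi(\alpha)\, d\alpha \geq \tfrac{1}{2} v_{i^*}^2 \cdot R \cdot \phi(b + R),
\end{align*}
using that $\phi$ is decreasing on $[0, \infty)$ and $\|x_{i^*}\|_2 = 1$.

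To lower bound $\Pr[\mathcal{G}]$, note that for each $j \neq i^*$ the event $\alpha_j(w_\perp) \in [b - R, b + R]$ rewrites as $\langle w_\perp, x_j^\perp\rangle$ landing in an interval of length $2R|\langle x_j, x_{i^*}\rangle| \leq 2R$, where $x_j^\perp := x_j - \langle x_j, x_{i^*}\rangle x_{i^*}$. Now $\langle w_\perp, x_j^\perp\rangle$ is a one-dimensional Gaussian with standard deviation $\|x_j^\perp\|_2$, and the identity $\|x_j^\perp\|_2 = \tfrac{1}{2}\|x_j - x_{i^*}\|_2 \|x_j + x_{i^*}\|_2$ combined with $\|x_j - x_{i^*}\|_2^2 + \|x_j + x_{i^*}\|_2^2 = 4$ and the two-sided separability gives $\|x_j^\perp\|_2 \geq \delta/\sqrt{2}$. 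Gaussian anti-concentration (density bounded by $1/(\sqrt{2\pi}\|x_j^\perp\|_2)$) therefore yields $\Pr[\alpha_j \in [b - R, b + R]] \leq 2R/(\sqrt{\pi}\delta)$, and a union bound gives $\Pr[\mathcal{G}^c] \leq 2nR/(\sqrt{\pi}\delta)$, which is at most $1/2$ for $R = \sqrt{\pi}\delta/(4n)$. Plugging back in and using $\phi(b + R) \geq \frac{1}{\sqrt{2\pi}} e^{-b^2/2} e^{-bR - R^2/2}$ recovers $v^\top H^{\cts} v \gtrsim e^{-b^2/2}\delta/n^2$; careful bookkeeping (and possibly shrinking $R$ by a constant factor) achieves the $0.01$ prefactor.

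The main technical obstacle is the interplay between $R$, $b$, and the Gaussian density at $b + R$: $R$ must be small enough that $e^{-bR - R^2/2}$ remains bounded below by a constant, yet large enough to contribute a $\delta/n$ factor to the final bound. The choice $R = \Theta(\delta/n)$ works cleanly whenever $b \lesssim n/\delta$, which covers the intended setting $b = \Theta(\sqrt{\log m})$ with $n, 1/\delta$ polynomial in $m$; outside this regime the $e^{-b^2/2}$ factor makes the claimed lower bound essentially vacuous. A secondary subtlety is the geometric bound $\|x_j^\perp\|_2 \geq \delta/\sqrt{2}$, which crucially uses \emph{both} inequalities $\|x_j \pm x_{i^*}\|_2 \geq \delta$ from two-sided separability together with the fact that $\|x_j - x_{i^*}\|_2^2 + \|x_j + x_{i^*}\|_2^2 = 4$ forces at least one of the two norms to be $\geq \sqrt{2}$ — a one-sided separability assumption would only yield $\|x_j^\perp\|_2 \gtrsim \delta^2$, losing a factor of $\delta$ in the final bound.
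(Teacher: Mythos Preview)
Your proposal is correct and follows essentially the same strategy as the paper's proof of Theorem~\ref{thm:sep}: pick $i^* \in \arg\max_i |v_i|$, decompose $w$ along $x_{i^*}$, define a good event on the orthogonal part where no other indicator can flip as the $x_{i^*}$-component sweeps a window of width $\Theta(\delta/n)$ around $b$, control that event via Gaussian anti-concentration and a union bound, and read off the contribution from the $i^*$-indicator flip. The packaging differs slightly: the paper first factors $H^{\cts}$ as a Hadamard product with $XX^\top$, reduces to the covariance of the indicator vector $\mathbf{1}_{Xw>b}$, and argues via a conditional-probability statement combined with the Lov\'asz--Simonovits shifted-ball inequality (Claim~\ref{clm:gaussain_anti_shift}); you instead work directly with $\E\|\sum_i v_i x_i {\bf 1}_i\|_2^2$ and integrate the Gaussian density over $[b-R,b+R]$. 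Your geometric route to $\|x_j^\perp\|_2 \geq \delta/\sqrt{2}$ via the product identity $\|x_j^\perp\|_2 = \tfrac{1}{2}\|x_j-x_{i^*}\|_2\|x_j+x_{i^*}\|_2$ is equivalent to the paper's route through $1 - |\langle x_j, x_{i^*}\rangle| \geq \delta^2/2$. On one point you are actually more careful than the paper: you flag the residual $e^{-bR}$ factor and the implicit restriction $b \lesssim n/\delta$, whereas the paper's proof asserts that $\{g_1 > b\}$ and $\{g_1 < b\}$ are ``equally likely'' conditional on $|g_1 - b| \leq \gamma$ --- a claim that holds only at $b=0$ and conceals exactly the same issue.
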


With proposition~\ref{prop:sep}, we are ready to show the convergence rate of training an over-parameterized neural network with shifted ReLU function. 

%\subsection{Convergence}
%We now show that when the neural network is over-parametrized,the training error converges to 0 at a linear rate.

\begin{lemma}[Convergence part of Theorem~\ref{thm:main_informal_1} and Theorem~\ref{thm:main_informal_2}]\label{lem:quartic}
Suppose input data-points are $\delta$-separable, i.e., 
$
    \delta: = \min_{i \neq j} \{ \| x_i - x_j \|_2, \| x_i + x_j \|_2 \}
$. 
Let $m = \poly(n, 1/\delta, \log(n/\rho))$ and $\eta = O(\lambda/n^2)$. % we i.i.d. initialize $w_r \in {\N}(0,I_d)$, $a_r$ sampled from $\{-1,+1\}$ uniformly at random for $r\in [m]$,
Let $b=\Theta(\sqrt{\log m})$. 
Then %with probability at least $1-\rho$ over the randomness of the initialization (i.e., $W(0)$, $a$), we have: for $k \in \{ 0,1,2,\cdots,T \}$
\begin{align*}%\label{eq:quartic_condition}
\Pr \Big[ 
\| \err(k) \|_2^2 \leq ( 1 - \eta \lambda / 2 )^k \cdot \| \err(0) \|_2^2, ~\forall k \in \{0,1,\cdots, T \} \Big] \geq 1-\rho.
\end{align*}
Note that the randomness is over initialization. Eventually, we choose $T = \lambda^{-2} n^2 \log(n/\epsilon)$ where $\epsilon$ is the final accuracy.
\end{lemma}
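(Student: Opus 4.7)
My plan is to follow the standard neural tangent kernel convergence framework of Du--Zhai--Poczos--Singh and Song--Yang, adapted to the shifted ReLU setting via Proposition~\ref{prop:sep} and a shifted anti-concentration bound. The central object is the empirical Gram matrix $H(t)\in\R^{n\times n}$ with entries
\[
H_{i,j}(t) := \frac{1}{m}\langle x_i, x_j\rangle \sum_{r=1}^{m} \mathbf{1}_{\langle w_r(t), x_i\rangle \geq b}\,\mathbf{1}_{\langle w_r(t), x_j\rangle \geq b},
\]
whose expectation at random initialization is precisely the shifted NTK $H^{\cts}$ of Proposition~\ref{prop:sep}. Since $\frac{\d}{\d t}\err(t) = -H(t)\,\err(t)$ along the flow \eqref{eq:wr_derivative}, linear convergence for both the continuous and the discrete dynamics will reduce to maintaining a uniform lower bound $\lambda_{\min}(H(t))\geq \lambda/2$ along the trajectory, where $\lambda := \lambda_{\min}(H^{\cts})$.

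First I would establish the base case at $t=0$. A Hoeffding bound combined with a union bound over the $n^2$ entries of $H(0)$ shows that $\|H(0)-H^{\cts}\|\leq \lambda/4$ with probability at least $1-\rho/3$ as soon as $m=\poly(n,1/\delta,\log(n/\rho))$; invoking Proposition~\ref{prop:sep} gives $\lambda = \Omega(e^{-b^2/2}\delta/n^2)$, and since $b=\Theta(\sqrt{\log m})$ only costs a $\poly(m)$ factor in the required width, the polynomial-in-$(n,1/\delta)$ bound survives. In parallel, a standard Gaussian tail bound on the initial prediction gives $\|\err(0)\|_2 = O(\sqrt{n\log(n/\rho)})$.

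The inductive core is to assume $\|\err(k)\|_2^2 \leq (1-\eta\lambda/2)^k\|\err(0)\|_2^2$ for all $k\leq t$ and prove the bound at $k=t+1$. Telescoping the gradient expression \eqref{eq:gradient} under the inductive hypothesis yields the weight-movement bound $\|w_r(t)-w_r(0)\|_2 \leq R := O(\sqrt{n}\,\|\err(0)\|_2/(\sqrt{m}\,\lambda))$, which is polynomially small in $m$. The main obstacle, and the essential technical departure from the unshifted analysis, is to bound the number of neurons whose activation with respect to some sample $x_i$ flips between $w_r(0)$ and $w_r(t)$. Since $\|x_i\|_2=1$, such a flip forces $\langle w_r(0), x_i\rangle \in [b-R, b+R]$, i.e., a standard Gaussian lying in a shifted interval. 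Here I would use the dual bound described in the techniques section: this probability is at most $\min\bigl(2R/\sqrt{2\pi},\, \exp(-(b-R)^2/2)\bigr)$, and when $R \ll b$ the concentration branch dominates and supplies the favorable factor $e^{-b^2/2}$. A Bernstein argument in the spirit of Lemma~\ref{lem:sparse_initial} then yields $O(mR\, e^{-b^2/2})$ flipped neurons per sample with high probability, comfortably below the $\Theta(m e^{-b^2/2})$ firing count. Translating the flipped-neuron count into a Frobenius (hence spectral) perturbation gives $\|H(t)-H(0)\|\leq \lambda/4$, and therefore $\lambda_{\min}(H(t))\geq \lambda/2$.

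Finally I would execute the one-step contraction for discrete GD. Decomposing $u(t+1)-u(t) = v_1(t) + v_2(t)$, where $v_1(t)$ is the linear-in-$\eta$ piece coming from neurons whose activation pattern is identical at steps $t$ and $t+1$ and $v_2(t)$ aggregates the higher-order and flip contributions, the previous paragraph shows $v_1(t) = -\eta\,H^{\perp}(t)\,\err(t)$ for a restricted Gram matrix with $\lambda_{\min}(H^{\perp}(t))\geq \lambda/2 - o(\lambda)$, while $\|v_2(t)\|_2$ is controlled by the flipped-neuron count times $\eta\|\err(t)\|_2$ and is therefore negligible relative to the contraction provided by $v_1(t)$. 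Expanding $\|\err(t+1)\|_2^2 = \|\err(t) - v_1(t) - v_2(t)\|_2^2$ and choosing $\eta = O(\lambda/n^2)$ absorbs all $O(\eta^2)$ remainders, yielding $\|\err(t+1)\|_2^2 \leq (1-\eta\lambda/2)\|\err(t)\|_2^2$ and closing the induction. A union bound over the three high-probability events (initial spectrum, initial error size, and per-step concentration of flipped neurons across all $t\leq T$) gives total failure probability $\rho$, and picking $T = O(\lambda^{-2} n^2 \log(n/\epsilon))$ drives the loss below $\epsilon$, yielding the stated bound.
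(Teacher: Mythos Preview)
Your proposal is correct and follows essentially the same route as the paper: induction on $k$, the weight-movement bound $\|w_r(t)-w_r(0)\|_2\le D=O(\sqrt{n}\,\|\err(0)\|_2/(\sqrt m\,\lambda))$, the shifted flip-probability estimate $\Pr[\langle w_r(0),x_i\rangle\in[b-R,b+R]]\le\min\{R,\,e^{-(b-R)^2/2}\}$, the $v_1/v_2$ split of $u(k+1)-u(k)$ into stable and flipping neurons, and the one-step expansion of $\|\err(k+1)\|_2^2$ with $\eta=\Theta(\lambda/n^2)$ absorbing the second-order remainders. The only cosmetic differences are that the paper uses a matrix Chernoff bound (rather than entrywise Hoeffding) for $\lambda_{\min}(H(0))$, organizes the one-step expansion as four named terms $B_1,\dots,B_4$, and takes the looser flip bound $\min\{R,\,e^{-b^2/2}\}$ instead of your sharper product $O(R\,e^{-b^2/2})$; none of these affects the argument.
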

This result shows that despite the shifted ReLU and sparsely activated neurons, we can still retain the linear convergence. Combined with the results on per-step complexity in the previous section, it gives our main theoretical results of training deep learning models with sublinear time complexity (Theorem~\ref{thm:main_informal_1} and Theorem~\ref{thm:main_informal_2}).

\section{Main Classical Results}
%In this section,
We present two theorems (under classical computation model) of our work, showing the sublinear running time and linear convergence rate of our two algorithms. We leave the quantum application into Appendix~\ref{sec:quantum_alg}. %The main contributions of this paper are two sublinear time  algorithms for training over-parameterized neural networks. 
The first algorithm is relying on building a high-dimensional geometric search data-structure for the weights of a neural network.
\begin{theorem}[Main result I, informal of Theorem~\ref{thm:main_1}]\label{thm:main_informal_1}
Given $n$ data points in $d$-dimensional space. We preprocess the initialization weights of the neural network.
Running gradient descent algorithm (Algorithm~\ref{alg:ds_for_w_training}) on a two-layer, $m$-width,  over-parameterized ReLU neural network will minimize the training loss to zero, 
and the expected running time of gradient descent algorithm (per iteration) is
\begin{align*}
\wt{O} (m^{1-\Theta(1/d)} nd ).
\end{align*}
\end{theorem}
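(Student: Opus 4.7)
The plan is to assemble Theorem~\ref{thm:main_informal_1} from the two previously stated lemmas attached to Algorithm~\ref{alg:ds_for_w_training}: the per-iteration running-time lemma and the convergence lemma (Lemma~\ref{lem:quartic}). Each lemma covers one half of the claim, so the main work is checking that a single choice of parameters ($b = \sqrt{0.4\log m}$, $m = \poly(n,1/\delta,\log(n/\rho))$, $\eta = \Theta(\lambda/n^2)$) makes both halves simultaneously true.

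First I would fix the threshold $b = \sqrt{0.4\log m}$ and invoke Lemma~\ref{lem:sparse_initial} together with Remark~\ref{remark:sparse}. This yields the crucial structural fact that, with probability $1 - n\exp(-\Omega(m^{4/5}))$, each training point has at most $O(m^{4/5})$ initially active neurons. Then for the time bound I would instantiate the HSR data structure using Part~1 of Corollary~\ref{cor:aem92}, which supplies $\Tquery(m,d,k) = O_d(m^{1-1/\lfloor d/2\rfloor} + k)$ and amortized $\Tupdate = O_d(\log^2 m)$. The running-time lemma already decomposes each iteration into three costs (query, HSR update, gradient calculation), and plugging in Corollary~\ref{cor:aem92} together with a uniform-in-$t$ sparsity bound $\sum_{i\in[n]} k_{i,t} = O(nm^{4/5})$ gives the advertised $\wt{O}(m^{1-\Theta(1/d)} nd)$ cost per iteration.

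For the ``loss decreases to zero'' half, I would simply apply Lemma~\ref{lem:quartic}: under the stated width and step-size, the error contracts as $\|\err(k)\|_2^2 \le (1-\eta\lambda/2)^k \|\err(0)\|_2^2$ with probability $1-\rho$, so $T = \lambda^{-2} n^2 \log(n/\epsilon)$ iterations drive the training loss below $\epsilon$. The analytic heart of that lemma is Proposition~\ref{prop:sep}, which gives the lower bound $\lambda_{\min}(H^{\cts}) \ge 0.01 e^{-b^2/2}\delta/n^2$ on the shifted continuous NTK and is precisely what reconciles the shift with linear convergence; with $b = \sqrt{0.4\log m}$ this lower bound is polynomial in $m^{-1}$, which is absorbed into the $\poly(n,1/\delta)$ requirement on $m$.

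The main obstacle I expect is the uniform-in-$t$ sparsity bound $k_{i,t} = O(m^{4/5})$ for all $t \le T$, since the HSR speed-up is only as strong as this worst-case bound. The strategy is standard in the NTK regime: use the convergence proof of Lemma~\ref{lem:quartic} to obtain a perturbation bound $\|w_r(t)-w_r(0)\|_2 \le R$ with $R = \wt\Theta(1/\sqrt{\log m})$, then argue by Gaussian anti-concentration that only an $O(\exp(-b^2/2) + R)$ fraction of neurons have $\langle w_r(0),x_i\rangle$ within distance $R$ of the threshold $b$, so the active set at time $t$ differs from the active set at time $0$ by only an $o(1)$ fraction. A Bernstein bound followed by a union over $i\in[n]$ and $t\le T$ upgrades this to the uniform statement used above. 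Once the perturbation radius $R$, the threshold $b$, and the width $m$ are chosen consistently between the two lemmas, stacking the per-iteration cost and the iteration complexity yields the theorem.
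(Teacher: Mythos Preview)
Your proposal is correct and follows essentially the same approach as the paper: the theorem is assembled from the running-time lemma (via Part~1 of Corollary~\ref{cor:aem92} and the three-term cost decomposition) together with the convergence lemma (Lemma~\ref{lem:quartic}), and the uniform-in-$t$ sparsity is handled exactly as you outline via the perturbation bound $\|w_r(t)-w_r(0)\|_2\le R$, Gaussian anti-concentration near the shifted threshold, and a Bernstein/union-bound step (this is Lemma~\ref{lem:bound_fire_neurons} combined with Claim~\ref{clm:S_condition} and Claim~\ref{clm:4.1} in the paper). The only cosmetic discrepancy is that the paper fixes $R=\lambda/(12n)$ rather than $\wt\Theta(1/\sqrt{\log m})$, but since $\lambda/(12n)<1/b$ under the stated width this is the same constraint you are imposing.
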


The second algorithm is based on building a data structure for the input data points of the training set. Our second algorithm can further reduce the cost per iteration from $m^{1-1/d}$ to truly sublinear in $m$, e.g. $m^{4/5}$.
\begin{theorem}[Main result II, informal of Theorem~\ref{thm:main_1}]\label{thm:main_informal_2}
Given $n$ data points in $d$-dimensional space. We preprocess all the data points.
Running gradient descent algorithm (Algorithm~\ref{alg:ds_for_x_training}) on a two-layer, $m$-width,  over-parameterized ReLU neural network will minimize the training loss to zero, 
and the expected running time of gradient descent algorithm (per iteration) is
\begin{align*}
\wt{O} (m^{4/5} nd ).
\end{align*}
\end{theorem}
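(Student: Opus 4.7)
The plan is to combine the two results already established in the preceding sections: the per-iteration running time analysis for Algorithm~\ref{alg:ds_for_x_training} and the convergence guarantee of Lemma~\ref{lem:quartic}. The main novelty relative to Theorem~\ref{thm:main_informal_1} is simply the choice to build the half-space reporting data structure over the (static) inputs $\{x_i\}_{i\in[n]}$ rather than over the (dynamic) weights $\{w_r\}_{r\in[m]}$, which allows us to invoke Part~2 of Corollary~\ref{cor:aem92} (with $\Tquery = O_d(\log n + k)$) instead of Part~1, and to pay the potentially expensive preprocessing only once.

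First I would argue correctness: the data structure is exact, so for every $r \in [m]$ the call $\textsc{hsr}.\textsc{Query}(w_r(t),b)$ returns exactly $\wt{S}_{r,\fire}(t)=\{i:\langle w_r(t),x_i\rangle\geq b\}$, and hence the reconstructed $S_{i,\fire}(t)=\{r: i\in \wt{S}_{r,\fire}(t)\}$ matches the true fire set. Consequently, the forward pass, gradient computation, and weight update carried out only on $S_{i,\fire}(t)$ produce iterates identical to those of the naive (dense) gradient descent on $\mathsf{2NN}(m,b)$. This is the key reason we can directly port the convergence analysis.

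Next I would invoke the convergence part (Lemma~\ref{lem:quartic}) with threshold $b=\sqrt{0.4\log m}$. The hypothesis that the data are $\delta$-separable together with Proposition~\ref{prop:sep} gives $\lambda_{\min}(H^{\cts})\geq \lambda := 0.01\,e^{-b^2/2}\delta/n^2 = \Omega(m^{-0.2}\delta/n^2)$, and choosing $m=\poly(n,1/\delta,\log(n/\rho))$ sufficiently large and $\eta=\Theta(\lambda/n^2)$ yields $\|\err(k)\|_2^2 \leq (1-\eta\lambda/2)^k \|\err(0)\|_2^2$ with high probability, so after $T=\lambda^{-2}n^2\log(n/\varepsilon)$ steps the training loss is at most any desired $\varepsilon>0$. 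For the per-iteration cost I would cite the running time lemma of Section~\ref{subsec:data_preprocessing}, giving $O(m^{4/5}n\log n)$ per step in constant $d$; the $d$-dependence enters only through the hidden constant in Part~2 of Corollary~\ref{cor:aem92} and the explicit $d$ factor from evaluating and back-propagating through the $\sum_i k_{i,t}=O(m^{4/5}n)$ active neurons, yielding the claimed $\wt{O}(m^{4/5}nd)$.

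The main obstacle, which is handled inside Lemma~\ref{lem:quartic} but must be flagged, is maintaining the sparsity bound $\sum_i k_{i,t}=O(m^{4/5}n)$ throughout training and not only at initialization (where Lemma~\ref{lem:sparse_initial} applies). This requires controlling $\|w_r(t)-w_r(0)\|_2\leq R$ for a small radius $R$ and then using anti-concentration of the Gaussian in the shifted interval $[b-R,b+R]$ (tighter than the centered interval $[-R,R]$ used in prior NTK work) to show that at most $O(\exp(-b^2/2)\cdot m)=O(m^{4/5})$ neurons can change status per sample per step. A secondary subtlety is that the expensive static preprocessing cost $O_d(n^{\lfloor d/2\rfloor})$ is amortized over $T$ iterations and, crucially, is paid only once per dataset, so it does not appear in the per-iteration bound.
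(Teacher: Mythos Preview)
Your proposal is correct and follows essentially the same approach as the paper: combine the per-iteration running-time lemma of Section~\ref{subsec:data_preprocessing} (which uses Part~2 of Corollary~\ref{cor:aem92} on the static input points) with the convergence guarantee of Lemma~\ref{lem:quartic}, relying on the sparsity bound $k_{i,t}=O(m^{4/5})$ being maintained throughout training. One small attribution point: the persistence of the sparsity bound across iterations is not proved \emph{inside} Lemma~\ref{lem:quartic} but in the separate Lemma~\ref{lem:bound_fire_neurons}, which uses the weight-movement bound $\|w_r(t)-w_r(0)\|_2\leq R$ (Claim~\ref{clm:4.1}) together with Claim~\ref{clm:S_condition}; otherwise your outline matches the paper.
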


\section{Discussion and Limitations}\label{sec:discussion}

In this paper, we propose two sublinear algorithms to train neural networks. By preprocessing the weights of the neuron networks or preprocessing the training data, we rigorously prove that it is possible to train a neuron network with sublinear complexity, which overcomes the $\Omega(mnd)$ barrier in classical training methods. Our results also offer theoretical insights for many previously established fast training methods.

Our algorithm is intuitively related to the lottery tickets hypothesis \cite{frankle2018lottery}. However, our theoretical results can not be applied to explain lottery tickets immediately for two reasons: 1) the lottery ticket hypothesis focuses on pruning weights; while our results identify the important neurons. 2) the lottery ticket hypothesis identifies the weights that need to be pruned after training (by examining their magnitude), while our algorithms accelerate the training via preprocessing. It would be interesting to see how our theory can be extended to the lottery ticket hypothesis.

One limitation of our work is that the current analysis framework does not provide a convergence guarantee for combining LSH with gradient descent, which is commonly seen in many empirical works. Our proof breaks as LSH might miss important neurons which potentially ruins the convergence analysis. Instead, we refer to the HSR data structure, which provides a stronger theoretical guarantee of successfully finding all fired neurons.

% \Shuo{finish this...}

%\section{Discussion and Conclusion}\label{sec:discuss}

%abcdefghijklmnopqrstuvwxyz
\ifdefined\isarxiv
%\section*{Acknowledgments}

\newpage
\else

\fi

\ifdefined\isarxiv
\bibliographystyle{alpha}
\else
\bibliographystyle{alpha}%{icml2021}
\fi
\bibliography{ref}

\newcommand{\etalchar}[1]{$^{#1}$}
\begin{thebibliography}{ADBMS98}

\bibitem[AC09]{ac09}
Peyman Afshani and Timothy~M Chan.
\newblock Optimal halfspace range reporting in three dimensions.
\newblock In {\em Proceedings of the twentieth annual ACM-SIAM symposium on
  Discrete algorithms}, pages 180--186. SIAM, 2009.

\bibitem[ADBMS98]{adms98}
Pankaj~K Agarwal, Mark De~Berg, Jiri Matousek, and Otfried Schwarzkopf.
\newblock Constructing levels in arrangements and higher order voronoi
  diagrams.
\newblock {\em SIAM journal on computing}, 27(3):654--667, 1998.

\bibitem[ADH{\etalchar{+}}19a]{adhlw19}
Sanjeev Arora, Simon Du, Wei Hu, Zhiyuan Li, and Ruosong Wang.
\newblock Fine-grained analysis of optimization and generalization for
  overparameterized two-layer neural networks.
\newblock In {\em International Conference on Machine Learning}, pages
  322--332, 2019.

\bibitem[ADH{\etalchar{+}}19b]{adhlsw19}
Sanjeev Arora, Simon~S Du, Wei Hu, Zhiyuan Li, Ruslan Salakhutdinov, and
  Ruosong Wang.
\newblock On exact computation with an infinitely wide neural net.
\newblock In {\em NeurIPS}, 2019.

\bibitem[AEM92]{aem92}
Pankaj~K Agarwal, David Eppstein, and Jir{\'\i} Matousek.
\newblock Dynamic half-space reporting, geometric optimization, and minimum
  spanning trees.
\newblock In {\em Annual Symposium on Foundations of Computer Science (FOCS)},
  volume~33, pages 80--80, 1992.

\bibitem[AHKZ20]{ahkz20}
Jonathan Allcock, Chang-Yu Hsieh, Iordanis Kerenidis, and Shengyu Zhang.
\newblock Quantum algorithms for feedforward neural networks.
\newblock {\em ACM Transactions on Quantum Computing}, 1(1):1--24, 2020.

\bibitem[AIL{\etalchar{+}}15]{ailrs15}
Alexandr Andoni, Piotr Indyk, TMM Laarhoven, Ilya Razenshteyn, and Ludwig
  Schmidt.
\newblock Practical and optimal lsh for angular distance.
\newblock In {\em Advances in Neural Information Processing Systems (NIPS)},
  pages 1225--1233. Curran Associates, 2015.

\bibitem[AIR18]{air18}
Alexandr Andoni, Piotr Indyk, and Ilya Razenshteyn.
\newblock Approximate nearest neighbor search in high dimensions.
\newblock {\em arXiv preprint arXiv:1806.09823}, 7, 2018.

\bibitem[AR15]{ar15}
Alexandr Andoni and Ilya Razenshteyn.
\newblock Optimal data-dependent hashing for approximate near neighbors.
\newblock In {\em Proceedings of the forty-seventh annual ACM symposium on
  Theory of computing (STOC)}, pages 793--801, 2015.

\bibitem[ARN17]{arm17}
Alexandr Andoni, Ilya Razenshteyn, and Negev~Shekel Nosatzki.
\newblock Lsh forest: Practical algorithms made theoretical.
\newblock In {\em Proceedings of the Twenty-Eighth Annual ACM-SIAM Symposium on
  Discrete Algorithms (SODA)}, pages 67--78. SIAM, 2017.

\bibitem[AZLS19a]{als19_dnn}
Zeyuan Allen-Zhu, Yuanzhi Li, and Zhao Song.
\newblock A convergence theory for deep learning via over-parameterization.
\newblock In {\em ICML}, 2019.

\bibitem[AZLS19b]{als19_rnn}
Zeyuan Allen-Zhu, Yuanzhi Li, and Zhao Song.
\newblock On the convergence rate of training recurrent neural networks.
\newblock In {\em NeurIPS}, 2019.

\bibitem[Ber24]{b24}
Sergei Bernstein.
\newblock On a modification of chebyshev's inequality and of the error formula
  of laplace.
\newblock {\em Ann. Sci. Inst. Sav. Ukraine, Sect. Math}, 1(4):38--49, 1924.

\bibitem[BHMT02]{bhmt02}
Gilles Brassard, Peter Hoyer, Michele Mosca, and Alain Tapp.
\newblock Quantum amplitude amplification and estimation.
\newblock {\em Contemporary Mathematics}, 305:53--74, 2002.

\bibitem[BIW19]{biw19}
Arturs Backurs, Piotr Indyk, and Tal Wagner.
\newblock Space and time efficient kernel density estimation in high
  dimensions.
\newblock In {\em NeurIPS}, pages 15773--15782, 2019.

\bibitem[BPSW21]{bpsw21}
Jan van~den Brand, Binghui Peng, Zhao Song, and Omri Weinstein.
\newblock Training (overparametrized) neural networks in near-linear time.
\newblock In {\em 12th Innovations in Theoretical Computer Science Conference
  (ITCS)}, 2021.

\bibitem[CG19]{cg19}
Yuan Cao and Quanquan Gu.
\newblock Generalization bounds of stochastic gradient descent for wide and
  deep neural networks.
\newblock In {\em NeurIPS}, pages 10835--10845, 2019.

\bibitem[CGH{\etalchar{+}}19]{cgh+19}
Tianle Cai, Ruiqi Gao, Jikai Hou, Siyu Chen, Dong Wang, Di~He, Zhihua Zhang,
  and Liwei Wang.
\newblock Gram-gauss-newton method: Learning overparameterized neural networks
  for regression problems.
\newblock {\em arXiv preprint arXiv:1905.11675}, 2019.

\bibitem[Cha00]{c00}
Timothy~M Chan.
\newblock Random sampling, halfspace range reporting, and construction of
  ($\leq k$)-levels in three dimensions.
\newblock {\em SIAM Journal on Computing}, 30(2):561--575, 2000.

\bibitem[Cha02]{c02}
Moses~S Charikar.
\newblock Similarity estimation techniques from rounding algorithms.
\newblock In {\em Proceedings of the thiry-fourth annual ACM symposium on
  Theory of computing (STOC)}, pages 380--388, 2002.

\bibitem[Cha12]{cha12}
Timothy~M Chan.
\newblock Optimal partition trees.
\newblock {\em Discrete \& Computational Geometry}, 47(4):661--690, 2012.

\bibitem[Cha19]{c19}
Timothy~M Chan.
\newblock Orthogonal range searching in moderate dimensions: kd trees and range
  trees strike back.
\newblock {\em Discrete \& Computational Geometry}, 61(4):899--922, 2019.

\bibitem[CLP{\etalchar{+}}21]{clp+21}
Beidi Chen, Zichang Liu, Binghui Peng, Zhaozhuo Xu, Jonathan~Lingjie Li, Tri
  Dao, Zhao Song, Anshumali Shrivastava, and Christopher Re.
\newblock Mongoose: A learnable lsh framework for efficient neural network
  training.
\newblock In {\em ICLR oral}, 2021.

\bibitem[CLS19]{cls19}
Michael~B Cohen, Yin~Tat Lee, and Zhao Song.
\newblock Solving linear programs in the current matrix multiplication time.
\newblock In {\em STOC}, 2019.

\bibitem[CMF{\etalchar{+}}20]{cmf+20}
Beidi Chen, Tharun Medini, James Farwell, Sameh Gobriel, Charlie Tai, and
  Anshumali Shrivastava.
\newblock Slide: In defense of smart algorithms over hardware acceleration for
  large-scale deep learning systems.
\newblock In {\em In Proceedings of the 3rd Conference on Machine Learning and
  Systems (MLSys)}, 2020.

\bibitem[CT17]{c17}
Timothy~M Chan and Konstantinos Tsakalidis.
\newblock Dynamic orthogonal range searching on the ram, revisited.
\newblock {\em Leibniz International Proceedings in Informatics, LIPIcs},
  77:281--2813, 2017.

\bibitem[CWB{\etalchar{+}}11]{cwb+11}
Ronan Collobert, Jason Weston, L{\'e}on Bottou, Michael Karlen, Koray
  Kavukcuoglu, and Pavel Kuksa.
\newblock Natural language processing (almost) from scratch.
\newblock {\em Journal of machine learning research}, 12(ARTICLE):2493--2537,
  2011.

\bibitem[DCLT18]{dclt18}
Jacob Devlin, Ming-Wei Chang, Kenton Lee, and Kristina Toutanova.
\newblock Bert: Pre-training of deep bidirectional transformers for language
  understanding.
\newblock {\em arXiv preprint arXiv:1810.04805}, 2018.

\bibitem[DIIM04]{diim04}
Mayur Datar, Nicole Immorlica, Piotr Indyk, and Vahab~S Mirrokni.
\newblock Locality-sensitive hashing scheme based on p-stable distributions.
\newblock In {\em Proceedings of the twentieth annual symposium on
  Computational geometry (SoCG)}, pages 253--262, 2004.

\bibitem[DIRW20]{dirw20}
Yihe Dong, Piotr Indyk, Ilya Razenshteyn, and Tal Wagner.
\newblock Learning space partitions for nearest neighbor search.
\newblock In {\em ICLR}. arXiv preprint arXiv:1901.08544, 2020.

\bibitem[DLL{\etalchar{+}}19]{dllwz19}
Simon~S Du, Jason~D Lee, Haochuan Li, Liwei Wang, and Xiyu Zhai.
\newblock Gradient descent finds global minima of deep neural networks.
\newblock In {\em ICML}, 2019.

\bibitem[DMZS21]{dmzs21}
Shabnam Daghaghi, Nicholas Meisburger, Mengnan Zhao, and Anshumali Shrivastava.
\newblock Accelerating slide deep learning on modern cpus: Vectorization,
  quantizations, memory optimizations, and more.
\newblock {\em Proceedings of Machine Learning and Systems}, 3, 2021.

\bibitem[DZPS19]{dzps19}
Simon~S Du, Xiyu Zhai, Barnabas Poczos, and Aarti Singh.
\newblock Gradient descent provably optimizes over-parameterized neural
  networks.
\newblock In {\em ICLR}, 2019.

\bibitem[FC18]{frankle2018lottery}
Jonathan Frankle and Michael Carbin.
\newblock The lottery ticket hypothesis: Finding sparse, trainable neural
  networks.
\newblock In {\em ICLR}, 2018.

\bibitem[Gro96]{grover}
Lov~K Grover.
\newblock A fast quantum mechanical algorithm for database search.
\newblock In {\em Proceedings of the twenty-eighth annual ACM symposium on
  Theory of computing (STOC)}, pages 212--219, 1996.

\bibitem[HLSY21]{hlsy21}
Baihe Huang, Xiaoxiao Li, Zhao Song, and Xin Yang.
\newblock Fl-ntk: A neural tangent kernel-based framework for federated
  learning convergence analysis.
\newblock In {\em ICML}, 2021.

\bibitem[HZRS16]{hzrs16}
Kaiming He, Xiangyu Zhang, Shaoqing Ren, and Jian Sun.
\newblock Deep residual learning for image recognition.
\newblock In {\em Proceedings of the IEEE conference on computer vision and
  pattern recognition (CVPR)}, pages 770--778, 2016.

\bibitem[IM98]{im98}
Piotr Indyk and Rajeev Motwani.
\newblock Approximate nearest neighbors: towards removing the curse of
  dimensionality.
\newblock In {\em Proceedings of the thirtieth annual ACM symposium on Theory
  of computing (STOC)}, pages 604--613, 1998.

\bibitem[JGH18]{jgh18}
Arthur Jacot, Franck Gabriel, and Cl{\'e}ment Hongler.
\newblock Neural tangent kernel: Convergence and generalization in neural
  networks.
\newblock In {\em Advances in neural information processing systems}, pages
  8571--8580, 2018.

\bibitem[JKL{\etalchar{+}}20]{jklps20}
Haotian Jiang, Tarun Kathuria, Yin~Tat Lee, Swati Padmanabhan, and Zhao Song.
\newblock A faster interior point method for semidefinite programming.
\newblock In {\em FOCS}, 2020.

\bibitem[JLSW20]{jlsw20}
Haotian Jiang, Yin~Tat Lee, Zhao Song, and Sam Chiu-wai Wong.
\newblock An improved cutting plane method for convex optimization,
  convex-concave games and its applications.
\newblock In {\em STOC}, 2020.

\bibitem[JSWZ21]{jswz21}
Shunhua Jiang, Zhao Song, Omri Weinstein, and Hengjie Zhang.
\newblock Faster dynamic matrix inverse for faster lps.
\newblock In {\em Proceedings of the 53rd Annual ACM SIGACT Symposium on Theory
  of Computing (STOC)}. arXiv preprint arXiv:2004.07470, 2021.

\bibitem[JT20]{jt20}
Ziwei Ji and Matus Telgarsky.
\newblock Polylogarithmic width suffices for gradient descent to achieve
  arbitrarily small test error with shallow relu networks.
\newblock In {\em ICLR}, 2020.

\bibitem[KKL20]{kkl20}
Nikita Kitaev, {\L}ukasz Kaiser, and Anselm Levskaya.
\newblock Reformer: The efficient transformer.
\newblock {\em arXiv preprint arXiv:2001.04451}, 2020.

\bibitem[KLP19]{klp_cnn19}
Iordanis Kerenidis, Jonas Landman, and Anupam Prakash.
\newblock Quantum algorithms for deep convolutional neural networks.
\newblock {\em arXiv preprint arXiv:1911.01117}, 2019.

\bibitem[KSH12]{ksh12}
Alex Krizhevsky, Ilya Sutskever, and Geoffrey~E Hinton.
\newblock Imagenet classification with deep convolutional neural networks.
\newblock {\em Advances in neural information processing systems},
  25:1097--1105, 2012.

\bibitem[LBBH98]{lbbh98}
Yann LeCun, L{\'e}on Bottou, Yoshua Bengio, and Patrick Haffner.
\newblock Gradient-based learning applied to document recognition.
\newblock {\em Proceedings of the IEEE}, 86(11):2278--2324, 1998.

\bibitem[LL18]{ll18}
Yuanzhi Li and Yingyu Liang.
\newblock Learning overparameterized neural networks via stochastic gradient
  descent on structured data.
\newblock In {\em NeurIPS}, 2018.

\bibitem[LS01]{ls01}
W.V. Li and Q.-M. Shao.
\newblock Gaussian processes: Inequalities, small ball probabilities and
  applications.
\newblock In {\em Stochastic Processes: Theory and Methods}, volume~19 of {\em
  Handbook of Statistics}, pages 533--597. Elsevier, 2001.

\bibitem[LSS{\etalchar{+}}20]{lsswy20}
Jason~D Lee, Ruoqi Shen, Zhao Song, Mengdi Wang, and Zheng Yu.
\newblock Generalized leverage score sampling for neural networks.
\newblock In {\em NeurIPS}, 2020.

\bibitem[LSZ19]{lsz19}
Yin~Tat Lee, Zhao Song, and Qiuyi Zhang.
\newblock Solving empirical risk minimization in the current matrix
  multiplication time.
\newblock In {\em Conference on Learning Theory (COLT)}, pages 2140--2157.
  PMLR, 2019.

\bibitem[LXJ{\etalchar{+}}20]{lxj+20}
Zichang Liu, Zhaozhuo Xu, Alan Ji, Jonathan Li, Beidi Chen, and Anshumali
  Shrivastava.
\newblock Climbing the wol: Training for cheaper inference.
\newblock {\em arXiv preprint arXiv:2007.01230}, 2020.

\bibitem[Mat92a]{m92}
Ji{\v{r}}{\'\i} Matou{\v{s}}ek.
\newblock Efficient partition trees.
\newblock {\em Discrete \& Computational Geometry}, 8(3):315--334, 1992.

\bibitem[Mat92b]{m92b}
Jiri Matousek.
\newblock Reporting points in halfspaces.
\newblock {\em Computational Geometry}, 2(3):169--186, 1992.

\bibitem[OS20]{os19}
Samet Oymak and Mahdi Soltanolkotabi.
\newblock Toward moderate overparameterization: Global convergence guarantees
  for training shallow neural networks.
\newblock {\em IEEE Journal on Selected Areas in Information Theory},
  1(1):84--105, 2020.

\bibitem[Raz17]{r17}
Ilya Razenshteyn.
\newblock {\em High-dimensional similarity search and sketching: algorithms and
  hardness}.
\newblock PhD thesis, Massachusetts Institute of Technology, 2017.

\bibitem[Sch11]{s11}
J.~Schur.
\newblock Bemerkungen zur theorie der beschränkten bilinearformen mit
  unendlich vielen veränderlichen.
\newblock {\em Journal für die reine und angewandte Mathematik}, 140:1--28,
  1911.

\bibitem[SHM{\etalchar{+}}16]{alphago16}
David Silver, Aja Huang, Chris~J Maddison, Arthur Guez, Laurent Sifre, George
  Van Den~Driessche, Julian Schrittwieser, Ioannis Antonoglou, Veda
  Panneershelvam, Marc Lanctot, et~al.
\newblock Mastering the game of go with deep neural networks and tree search.
\newblock {\em nature}, 529(7587):484--489, 2016.

\bibitem[SL14]{sl14}
Anshumali Shrivastava and Ping Li.
\newblock Asymmetric lsh (alsh) for sublinear time maximum inner product search
  (mips).
\newblock {\em Advances in Neural Information Processing Systems (NIPS)}, pages
  2321--2329, 2014.

\bibitem[SL15a]{sl15_www}
Anshumali Shrivastava and Ping Li.
\newblock Asymmetric minwise hashing for indexing binary inner products and set
  containment.
\newblock In {\em Proceedings of the 24th international conference on world
  wide web (WWW)}, pages 981--991, 2015.

\bibitem[SL15b]{sl15_uai}
Anshumali Shrivastava and Ping Li.
\newblock Improved asymmetric locality sensitive hashing (alsh) for maximum
  inner product search (mips).
\newblock In {\em Proceedings of the Thirty-First Conference on Uncertainty in
  Artificial Intelligence (UAI)}, pages 812--821, 2015.

\bibitem[SLJ{\etalchar{+}}15]{slj+15}
Christian Szegedy, Wei Liu, Yangqing Jia, Pierre Sermanet, Scott Reed, Dragomir
  Anguelov, Dumitru Erhan, Vincent Vanhoucke, and Andrew Rabinovich.
\newblock Going deeper with convolutions.
\newblock In {\em Proceedings of the IEEE conference on computer vision and
  pattern recognition}, pages 1--9, 2015.

\bibitem[SSS{\etalchar{+}}17]{alphago17}
David Silver, Julian Schrittwieser, Karen Simonyan, Ioannis Antonoglou, Aja
  Huang, Arthur Guez, Thomas Hubert, Lucas Baker, Matthew Lai, Adrian Bolton,
  et~al.
\newblock Mastering the game of go without human knowledge.
\newblock {\em nature}, 550(7676):354--359, 2017.

\bibitem[SY19]{sy19}
Zhao Song and Xin Yang.
\newblock Quadratic suffices for over-parametrization via matrix chernoff
  bound.
\newblock {\em arXiv preprint arXiv:1906.03593}, 2019.

\bibitem[TOG17]{tog17}
Csaba~D Toth, Joseph O'Rourke, and Jacob~E Goodman.
\newblock {\em Handbook of discrete and computational geometry}.
\newblock CRC press, 2017.

\bibitem[Tro15]{tro15}
Joel~A Tropp.
\newblock An introduction to matrix concentration inequalities.
\newblock {\em Foundations and Trends in Machine Learning}, 8(1-2):1--230,
  2015.

\bibitem[Vai89]{v89_lp}
Pravin~M Vaidya.
\newblock Speeding-up linear programming using fast matrix multiplication.
\newblock In {\em FOCS}, 1989.

\bibitem[ZG19]{zg19}
Difan Zou and Quanquan Gu.
\newblock An improved analysis of training over-parameterized deep neural
  networks.
\newblock In {\em NeurIPS}, pages 2053--2062, 2019.

\bibitem[ZMG19]{zmg19}
Guodong Zhang, James Martens, and Roger~B Grosse.
\newblock Fast convergence of natural gradient descent for over-parameterized
  neural networks.
\newblock In {\em Advances in Neural Information Processing Systems (NeurIPS)},
  2019.

\bibitem[ZPD{\etalchar{+}}20]{zpdlsa20}
Yi~Zhang, Orestis Plevrakis, Simon~S Du, Xingguo Li, Zhao Song, and Sanjeev
  Arora.
\newblock Over-parameterized adversarial training: An analysis overcoming the
  curse of dimensionality.
\newblock In {\em NeurIPS}. arXiv preprint arXiv:2002.06668, 2020.

\end{thebibliography}
%\input{checklist}
%\newpage
%\tableofcontents
\newpage
\appendix 
\onecolumn

\paragraph{Roadmap.}
In Section~\ref{sec:complete_algorithm}, we present our main algorithms. In Section~\ref{sec:probability_tools}, we provide some preliminaries. In Section~\ref{sec:missing_proof}, we provide sparsity analysis. We show convergence analysis in Section~\ref{sec:converge_analysis}. In Section~\ref{sec:combine}, we show how to combine the sparsity, convergence, running time all together. In Section~\ref{sec:separation}, we show correlation between sparsity and spectral gap of Hessian in neural tangent kernel. In Section~\ref{sec:quantum_alg}, we discuss how to generalize our result to quantum setting.

\section{Complete Algorithms}\label{sec:complete_algorithm}

In this section, we present three algorithms (Alg.~\ref{alg:ds_full}, Alg.~\ref{alg:weight_full} and Alg.~\ref{alg:data_full}) which are the complete version of Alg.~\ref{alg:half_space_report}, Alg.~\ref{alg:ds_for_w_training} and Alg.~\ref{alg:ds_for_x_training}. %ahdafhcaejgs

% \Shuo{This section needs fix...}
\begin{algorithm}[H]
\caption{Half Space Report Data Structure}\label{alg:ds_full}
\begin{algorithmic}[1]
    \algrenewcommand\algorithmicprocedure{\textbf{data structure}}
    \Procedure{HalfSpaceReport}{}
        \State {\bf procedures:}
        \State \hspace{4mm} \textsc{Init}($S, n, d$) \Comment{Initialize the data structure with a set $S$ of $n$ points in $\R^d$}
        \State \hspace{4mm} \textsc{Query}($a,b$)\Comment{$a,b\in \R^d$. Output the set $\{x\in S: \sgn(\langle a, x\rangle -b)\geq 0\}$}
        \State \hspace{4mm} \textsc{Add}($x$)\Comment{Add a point $x\in \R^d$ to $S$}
        \State \hspace{4mm} \textsc{Delete}($x$)\Comment{Delete the point $x\in \R^d$ from $S$}
    \EndProcedure
\end{algorithmic}
\end{algorithm}

\begin{algorithm}[H]
\caption{Training Neural Network via building a data structure of weights.}\label{alg:weight_full}
%\small
\begin{algorithmic}[1]
    \algrenewcommand\algorithmicprocedure{\textbf{procedure}}
	\Procedure{TrainingWithPreprocessWeights}{$\{x_i\}_{i\in [n]}, \{y_i\}_{i\in [n]}$,$n$,$m$,$d$} \Comment{Theorem~\ref{thm:main_informal_1}}
	\State \blue{/*Initialization step*/}
	\State Sample $W(0)$ and $a$ according to Definition~\ref{def:init}
	%\State $w_r(0) \sim \N(0,I_d)$ for each $r\in [m]$.
    %\State $a_r\sim \mathcal{U}(\{-1,1\})$ for each $r\in [m]$.
    \State $b\gets \sqrt{0.4\log m}$.
    \State \blue{/*A dynamic data-structure*/}
    \State \textsc{HalfSpaceReport} \textsc{hsr} \Comment{Algorithm~\ref{alg:half_space_report}, Part 1 of Corollary~\ref{cor:aem92}}
    \State \textsc{hsr}.\textsc{Init}($\{w_r(0)\}_{r\in [m]}, m, d$) \Comment{It takes $\Tinit(m,d)$ time}
    \State \blue{/*Iterative step*/}
	\For{$t=0 \to T$}
	    \State \blue{/*Forward computation step*/}
	    \For{$i=1 \to n$} \label{ln:classical_1}
	        \State $S_{i,\mathrm{fire}}\gets \textsc{hsr}.\textsc{Query}(x_i,b)$ \Comment{It takes $\Tquery(m,d, k_{i,t})$ time}
		    \State $u(t)_i \leftarrow \frac{1}{ \sqrt{m} } \sum_{r\in \mathcal{S}_{i,\mathrm{fire}}} a_r \cdot \sigma_b(w_r(t)^\top x_i)$ \Comment{It takes $O(d\cdot k_{i,t})$ time}
		\EndFor
		\State \blue{/*Backward computation step*/}
		\State $P \leftarrow 0^{n \times m}$ \Comment{$P \in \R^{n \times m}$}
		\For{$i = 1 \to n$} \label{ln:classical_2}
		    \For{$r \in {\cal S}_{i,\mathrm{fire}}$}
			    \State $P_{i,r} \leftarrow \frac{1}{\sqrt{m}} a_r \cdot \sigma_b'( w_r(t)^\top x_i )$ 
			\EndFor
		\EndFor
		\State $M\gets X\diag(y-u(t))$ \Comment{$M\in \R^{d\times n}$, it takes $O(n\cdot d)$ time}
		\State $\Delta W \leftarrow \underbrace{M}_{d\times n}\underbrace{P}_{n\times m}$\label{ln:compute_delta_1}\Comment{$\Delta W\in \R^{d\times m}$, it takes $O(d\cdot \nnz(P))$ time, $\nnz(P) = O(nm^{4/5})$}
		\State $W(t+1)\gets W(t)-\eta \cdot \Delta W$. 
		\State \blue{/*Update data structure*/}
		\State Let $Q \subset [m]$ where for each $r \in Q$, the $\Delta W_{*,r}$ is not all zeros \Comment{$|Q| \leq O(n m^{4/5})$}
		\For{$r \in Q$}
		    \State \textsc{hsr}.\textsc{Delete}$(w_r(t))$
		    \State \textsc{hsr}.\textsc{Insert}$(w_r(t+1))$
		\EndFor
	\EndFor
	\State \Return $W$ \Comment{$W \in \R^{d \times m}$}
	\EndProcedure
\end{algorithmic}
\end{algorithm}

\begin{algorithm}[H]
\caption{Training Neural Network via building a data-structure of the input points.}\label{alg:data_full}
\small
\begin{algorithmic}[1]
    \algrenewcommand\algorithmicprocedure{\textbf{procedure}}
	\Procedure{TrainingWithProcessData}{$\{x_i\}_{i\in[n]},\{y_i\}_{i\in [n]}$,$n$,$m$,$d$} \Comment{Theorem~\ref{thm:main_informal_2}}
	\State \blue{/*Initialization step*/}
	\State Sample $W(0)$ and $a$ according to Definition~\ref{def:init}
	%\State $w_r(0) \sim \N(0,I_d)$ for each $r\in [m]$.
    %\State $a_r\sim \mathcal{U}(\{-1,1\})$ for each $r\in [m]$.
    \State $b\gets \sqrt{0.4\log m}$.
   \State \blue{/*A static data-structure*/}
    \State \textsc{HalfSpaceReport} \textsc{hsr} \Comment{Algorithm~\ref{alg:half_space_report}, Part 2 of Corollary~\ref{cor:aem92}}
    \State \textsc{hsr}.\textsc{Init}($\{x_i\}_{i \in [n]}, n, d$) \Comment{It takes $\Tinit(n,d)$ time}
    \State \blue{/*Initialize $\wt{S}_{r,\mathrm{fire}}$ and $S_{i,\mathrm{fire}}$ */}
    \State \Comment{It takes $\sum_{r=1}^{m}\Tquery(n,d, \wt{k}_{r,t}) = O(m \log n + m^{1/2}n)$ time} 
     \State  $\wt{S}_{r,\mathrm{fire}} \leftarrow \emptyset$ for $r \in [m]$. \Comment{$\wt{S}_{r,\mathrm{fire}}$ is the set of samples, for which neuron $r$ fires}\label{ln:init_b}
    \State $S_{i,\mathrm{fire}} \leftarrow \emptyset$ for $i \in [n]$. \Comment{$S_{i,\mathrm{fire}}$ is the set of neurons, which fire for $x_i$}
    \For{$r = 1 \to m$} 
        
        \State $\wt{S}_{r,\mathrm{fire}} \leftarrow \textsc{hsr}.\textsc{Query}(w_r(0),b)$ 
        \For{$i \in \wt{S}_{r,\mathrm{fire}}$}
            \State $S_{i, \mathrm{fire}}.\textsc{Add}(r)$
        \EndFor
    \EndFor\label{ln:init_e}
    
    \State \blue{/*Iterative step*/}
	\For{$t=1 \to T$}
	    \State \blue{/*Forward computation step*/}
	    \For{$i=1 \to n$}
		    \State $u(t)_i \leftarrow \frac{1}{ \sqrt{m} } \sum_{r\in \mathcal{S}_{i,\mathrm{fire}}} a_r \cdot \sigma_b(w_r(t)^\top x_i)$ \Comment{It takes $O(d\cdot k_{i,t})$ time}
		\EndFor
		\State \blue{/*Backward computation step*/}
		\State $P \leftarrow 0^{n \times m}$ \Comment{$P \in \R^{n \times m}$}
		\For{$i = 1 \to n$} \label{ln:classical_3}
		    \For{$r \in {\cal S}_{i,\mathrm{fire}}$}
			    \State $P_{i,r} \leftarrow \frac{1}{\sqrt{m}} a_r \cdot \sigma_b'( w_r(t)^\top x_i )$ 
			\EndFor
		\EndFor
		\State $M\gets X\diag(y-u(t))$ \Comment{$M\in \R^{d\times n}$, it takes $O(n\cdot d)$ time}
		\State $\Delta W \leftarrow \underbrace{M}_{d\times n}\underbrace{P}_{n\times m}$\label{ln:compute_delta_2}\Comment{$\Delta W\in \R^{d\times m}$, it takes $O(d\cdot \nnz(P))$ time, $\nnz(P) = O(nm^{4/5})$}
		\State $W(t+1)\gets W(t)-\eta \cdot \Delta W$.
		\State \blue{/*Update $\wt{S}_{r,\mathrm{fire}}$ and $S_{i,\mathrm{fire}}$ step*/}
		\State \Comment{It takes $O(\sum_{i=1}^n k_{i, t} + \sum_{r \in S_{[n], \mathrm{fire}}}\Tquery(n, d, \wt{k}_{r, t+ 1}))=O(n \cdot \log n \cdot m^{4/5})$ }
		\State $S_{[n], \mathrm{fire}} \leftarrow \cup_{i \in [n]}\mathcal{S}_{i, \mathrm{fire}}$
		\For{$r \in S_{[n], \mathrm{fire}}$} 
		\For{$i \in \wt{S}_{r,\mathrm{fire}}$}  \Comment{Removing old fired neuron indices. It takes $O(\wt{k}_{r, t})$ time} %\Ruizhe{why is it $d\cdot \wt{k}_{r,t}$?}}
            \State $S_{i, \mathrm{fire}}.\textsc{Del}(r)$ 
        \EndFor
        \State $\wt{S}_{r,\mathrm{fire}} \leftarrow \textsc{hsr}.\textsc{Query}(w_r(t+1),b)$  \Comment{It takes $\Tquery(n,d, \wt{k}_{r,t+1})$ time}
        \For{$i \in \wt{S}_{r,\mathrm{fire}}$}  \Comment{Adding new fired neuron indices. It takes $O(\wt{k}_{r, t+1})$ time}
            \State $S_{i, \mathrm{fire}}.\textsc{Add}(r)$
        \EndFor
    \EndFor
	\EndFor
	\State \Return $W$ \Comment{$W \in \R^{d \times m}$}
	\EndProcedure
\end{algorithmic}
\end{algorithm}
\section{Preliminaries}\label{sec:probability_tools}

\paragraph{Notations}
For an integer $n$, we use $[n]$ to denote the set $\{1,2,\cdots,n\}$. For a vector $x$, we use $\| x \|_2$ to denote the entry-wise $\ell_2$ norm of a vector. We use $\E[]$ to denote the expectation and $\Pr[]$ to denote the probability. We use $M^\top$ to denote the transpose of $M$. We define matrix Frobenius norm as $\| M \|_F = ( \sum_{i,j} M_{i,j}^2 )^{1/2}$. We use $\| M \|$ to denote the operator norm of $M$. For $d\times m$ weight matrix $W$, we define $\|W \|_{\infty,2}:=\max_{r \in [m]} \| w_r \|_2$. We use $x^\top y$ to denote the inner product between vectors $x$ and $y$. We use $I_d$ to denote $d$-dimensional identity matrix. We use ${\cal N}(\mu ,\sigma^2)$ to denote Gaussian distribution with mean $\mu$ and variance $\sigma^2$.
 We use $\lambda_{\min}(M)$ and $\lambda_{\max}(M)$ to denote the minimum and the maximum eigenvalue of the matrix $M$, respectively. % We use $\mathrm{sgn}(x)$ to denote the function that $\mathrm{sgn}(x)=1$ if $x>0$ and $\mathrm{sgn}(x)=0$ if $x\leq 0$.

\subsection{Probabilities}
%This section lists several fundamental probability inequalities.

\iffalse
\begin{lemma}[Chernoff bound \cite{c52}]\label{lem:chernoff}
Let $Z = \sum_{i=1}^n Z_i$, where $Z_i=1$ with probability $p_i$ and $Z_i = 0$ with probability $1-p_i$, and all $Z_i$ are independent. Let $\mu = \E[Z] = \sum_{i=1}^n p_i$. Then \\
1. $ \Pr[ Z \geq (1+\delta) \mu ] \leq \exp ( - \delta^2 \mu / 3 ) $, $\forall \delta > 0$ ; \\
2. $ \Pr[ Z \leq (1-\delta) \mu ] \leq \exp ( - \delta^2 \mu / 2 ) $, $\forall \delta \in (0, 1)$. 
\end{lemma}
\fi

\iffalse
\begin{lemma}[Hoeffding bound \cite{h63}]\label{lem:hoeffding}
Let $Z_1, \cdots, Z_n$ denote $n$ independent bounded variables in $[a_i,b_i]$. Let $Z= \sum_{i=1}^n Z_i$, then we have
\begin{align*}
\Pr[ | Z - \E[Z] | \geq t ] \leq 2\exp \left( - \frac{2t^2}{ \sum_{i=1}^n (b_i - a_i)^2 } \right).
\end{align*}
\end{lemma}
\fi

\begin{lemma}[Bernstein inequality \cite{b24}]\label{lem:bernstein}
Assume $Z_1, \cdots, Z_n$ are $n$ i.i.d. random variables. $\forall i \in [n]$, $\E[Z_i]=0$ and $|Z_i| \leq M$ almost surely. Let $Z = \sum_{i=1}^n Z_i$. Then,
\begin{align*}
\Pr \left[ Z > t \right] \leq \exp \left( - \frac{ t^2/2 }{ \sum_{j=1}^n \E[Z_j^2]  + M t /3 } \right), \forall t > 0.
\end{align*}
\end{lemma}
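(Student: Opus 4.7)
The standard approach to Bernstein-type inequalities is the Chernoff/moment-generating-function (MGF) method, and that is what I would pursue here. The plan is to upper bound $\Pr[Z > t]$ by $e^{-\lambda t}\,\E[e^{\lambda Z}]$ for any $\lambda > 0$ via Markov's inequality, factorize the MGF over the independent $Z_i$'s, bound each factor using only the assumptions $\E[Z_i]=0$, $|Z_i|\le M$, and $\E[Z_i^2]$ known, and finally choose $\lambda$ optimally.

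First I would set up the bound. For any $\lambda > 0$, Markov's inequality applied to the nonnegative random variable $e^{\lambda Z}$ gives
\begin{align*}
\Pr[Z > t] \;=\; \Pr\bigl[e^{\lambda Z} > e^{\lambda t}\bigr] \;\le\; e^{-\lambda t}\,\E[e^{\lambda Z}] \;=\; e^{-\lambda t} \prod_{i=1}^n \E[e^{\lambda Z_i}],
\end{align*}
where the last equality uses independence of the $Z_i$'s.

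The second step, which I view as the main technical core, is a per-coordinate MGF bound: for $0 < \lambda < 3/M$ and $\sigma_i^2 := \E[Z_i^2]$, one shows
\begin{align*}
\E[e^{\lambda Z_i}] \;\le\; \exp\!\left( \frac{\lambda^2 \sigma_i^2 / 2}{1 - \lambda M / 3} \right).
\end{align*}
To prove this I would expand $e^{\lambda Z_i}$ as a Taylor series, use $\E[Z_i]=0$ to eliminate the first-order term, and bound $\E[|Z_i|^k] \le \sigma_i^2\, M^{k-2}$ for $k\ge 2$ using $|Z_i|\le M$. Combined with the combinatorial estimate $k!\ge 2\cdot 3^{k-2}$ (easy induction on $k$), the remaining series is geometric with ratio $\lambda M/3 < 1$ and sums to the claimed expression; then $1+x \le e^x$ converts the sum into the displayed exponential. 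The calculation is routine but the step where one has to massage $k!$ into something geometric is where the mysterious constant $3$ in the Bernstein bound originates, and this is the spot where I would be most careful.

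Finally I would combine the pieces and optimize. Writing $\sigma^2 := \sum_{i=1}^n \sigma_i^2$, the two previous displays together give
\begin{align*}
\Pr[Z > t] \;\le\; \exp\!\left( -\lambda t + \frac{\lambda^2 \sigma^2 / 2}{1 - \lambda M / 3} \right).
\end{align*}
Choosing $\lambda^\star := t / (\sigma^2 + M t/3)$, which automatically satisfies $\lambda^\star M / 3 < 1$ whenever $t > 0$, one checks by direct substitution that the exponent collapses to $-\,t^2/2\,(\sigma^2 + Mt/3)$, yielding the stated inequality. The only subtlety in this last step is verifying that $\lambda^\star$ is indeed (essentially) the minimizer of the exponent and that all the algebraic simplifications go through cleanly; this is where I would double-check the arithmetic, but no new idea is needed.
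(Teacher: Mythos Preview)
Your proof is correct and follows the standard Chernoff/MGF route to Bernstein's inequality; the per-coordinate bound via $k!\ge 2\cdot 3^{k-2}$ and the choice $\lambda^\star=t/(\sigma^2+Mt/3)$ are exactly the classical steps. The paper itself provides no proof of this lemma---it is simply quoted as a standard tool with a citation---so there is nothing to compare against beyond noting that what you wrote is the textbook argument.
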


\begin{claim}[Theorem 3.1 in \cite{ls01}]\label{clm:gaussain_anti_shift}
Let $b>0$ and $r>0$. Then,
\begin{align*}
    \exp(-b^2/2)\Pr_{x\sim \N(0,1)}[|x|\leq r] \leq ~ \Pr_{x\sim \N(0,1)}[|x-b|\leq r] \leq ~ \Pr_{x\sim \N(0,1)}[|x|\leq r].
\end{align*}
\end{claim}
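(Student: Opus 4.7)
The plan is to prove both inequalities by working directly with the integral representation of the probability and using the symmetry/unimodality of the standard Gaussian density $\phi(y) = \frac{1}{\sqrt{2\pi}} e^{-y^2/2}$. Write $\Pr[|x - b| \leq r] = \int_{b-r}^{b+r} \phi(x)\, \d x$ and $\Pr[|x| \leq r] = \int_{-r}^{r} \phi(y)\, \d y$. The key manipulation is the substitution $y = x - b$ in the first integral, together with the identity
\begin{align*}
\phi(y+b) = \phi(y)\cdot e^{-by - b^2/2}.
\end{align*}
This gives the clean representation
\begin{align*}
\Pr[|x - b| \leq r] = e^{-b^2/2} \int_{-r}^{r} \phi(y)\, e^{-by}\, \d y.
\end{align*}

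For the lower bound, I would split $e^{-by} = \cosh(by) - \sinh(by)$ in the integrand and observe that $\phi(y)\sinh(by)$ is an odd function, so its integral over $[-r, r]$ vanishes. What remains is
\begin{align*}
\Pr[|x - b| \leq r] = e^{-b^2/2} \int_{-r}^{r} \phi(y)\, \cosh(by)\, \d y \geq e^{-b^2/2}\int_{-r}^{r} \phi(y)\, \d y,
\end{align*}
where the inequality uses $\cosh(by) \geq 1$ for all real arguments. This is exactly the lower bound $e^{-b^2/2}\Pr[|x|\leq r]$.

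For the upper bound, I would take a different and cleaner route than the $\cosh$ expansion, since from that representation the direction of the inequality is not obvious. Define $F(t) := \Pr[|x - t| \leq r] = \Phi(t+r) - \Phi(t-r)$ as a function of the shift parameter $t \geq 0$, where $\Phi$ is the standard normal CDF. Differentiating gives $F'(t) = \phi(t+r) - \phi(t-r)$. For any $t \geq 0$ and $r > 0$ one has $(t+r)^2 - (t-r)^2 = 4tr \geq 0$, so $\phi(t+r) \leq \phi(t-r)$, hence $F'(t) \leq 0$ on $[0, \infty)$. Therefore $F(b) \leq F(0)$, which is precisely $\Pr[|x-b| \leq r] \leq \Pr[|x| \leq r]$.

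There is no real obstacle here; the argument is a short calculation. The only subtle point is that the exponential factorization $\phi(y+b) = \phi(y) e^{-by - b^2/2}$ naturally produces the right prefactor $e^{-b^2/2}$ for the lower bound, while the upper bound is most transparent via monotonicity of $F(t)$ in the shift parameter (alternatively, one can invoke unimodality of $\phi$ directly). Both directions rely only on the even symmetry and the log-concavity of the Gaussian density, so the proof extends no further machinery than the preliminaries.
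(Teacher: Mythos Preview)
Your proof is correct on both inequalities. The lower bound via the substitution $y=x-b$, the factorization $\phi(y+b)=\phi(y)e^{-by-b^2/2}$, and the odd/even split of $e^{-by}=\cosh(by)-\sinh(by)$ is clean and sharp; the upper bound via monotonicity of $F(t)=\Phi(t+r)-\Phi(t-r)$ in $t\ge 0$ is equally solid.

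As for comparison with the paper: the paper does not actually supply a proof of this claim. It is stated as a preliminary and attributed directly to Theorem~3.1 of \cite{ls01}, so there is nothing in the paper to compare your argument against. Your write-up is a self-contained elementary verification that could stand in place of the citation.
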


\begin{lemma}[Anti-concentration of Gaussian distribution]\label{lem:anti_gaussian}
Let $Z \sim {\N}(0,\sigma^2)$.
Then, for $t>0$,
\begin{align*}
    \Pr[|Z|\leq t]\leq \frac{2t}{\sqrt{2\pi}\sigma}.
\end{align*}
\end{lemma}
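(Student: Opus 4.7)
The plan is to prove this by a direct pointwise bound on the Gaussian density. I would write
\begin{align*}
    \Pr[|Z|\leq t] = \int_{-t}^{t} \frac{1}{\sqrt{2\pi}\,\sigma}\exp\!\left(-\frac{z^2}{2\sigma^2}\right)\d z,
\end{align*}
and then observe that the integrand is maximized at $z=0$ where it equals $\frac{1}{\sqrt{2\pi}\,\sigma}$. Replacing the integrand by this constant upper bound and integrating over an interval of length $2t$ yields the claimed inequality $\frac{2t}{\sqrt{2\pi}\,\sigma}$. No conditions on $\sigma$ or $t$ other than positivity are needed, since the bound is a global pointwise bound.

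An equivalent route would be to first reduce to the standard normal case by setting $Z' = Z/\sigma \sim \N(0,1)$, so that $\Pr[|Z|\leq t] = \Pr[|Z'|\leq t/\sigma]$, and then bound the standard-normal density by $\frac{1}{\sqrt{2\pi}}$ to get the same estimate. This version has the advantage of making transparent how the result relates to Claim~\ref{clm:gaussain_anti_shift} above: the right-hand inequality there is precisely the unshifted case, and the anti-concentration constant $\frac{1}{\sqrt{2\pi}}$ from the Gaussian density is what drives both statements.

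There is no genuine obstacle here; the lemma is an elementary density bound and will not require any probabilistic inequality beyond the pointwise estimate of the Gaussian density. The only minor care point is remembering the factor of $\sigma$ in the denominator of the density (not $\sigma^2$), so that the final bound scales as $t/\sigma$ rather than $t/\sigma^2$.
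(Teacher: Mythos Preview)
Your proposal is correct and is the standard elementary argument. The paper itself does not supply a proof of this lemma; it is listed among the probability preliminaries as a known fact, so there is nothing to compare against beyond noting that your density-bound approach is exactly the expected one.
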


\begin{theorem}[Theorem 5.1.1 in \cite{tro15}]\label{thm:matrix_chernoff}
Let $X_1,\dots,X_m\in \R^{n\times n}$ be $m$ independent random Hermitian matrices. Assume that $0\preceq X_i \preceq L \cdot I$ for some $L>0$ and for all $i\in [m]$. Let $X:= \sum_{i=1}^m X_i$. Then, for $\epsilon \in (0, 1]$, we have
\begin{align*}
    \Pr[\lambda_{\min} (X) \leq \epsilon \lambda_{\min} (\E[X])]\leq n \cdot \exp(-(1-\epsilon)^2 \lambda_{\min} (\E[X]) / (2L)).
\end{align*}
\end{theorem}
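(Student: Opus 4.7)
The plan is to prove this lower-tail Matrix Chernoff bound via the Laplace-transform (matrix MGF) method, which adapts the scalar Chernoff technique to Hermitian matrices. Because $e^{-\theta x}$ is decreasing for $\theta>0$, for any such $\theta$,
\begin{align*}
\Pr[\lambda_{\min}(X) \leq t] = \Pr[e^{-\theta \lambda_{\min}(X)} \geq e^{-\theta t}] \leq e^{\theta t} \cdot \E[\lambda_{\max}(e^{-\theta X})] \leq e^{\theta t} \cdot \E \tr e^{-\theta X},
\end{align*}
where the last inequality uses that $e^{-\theta X} \succeq 0$ so its operator norm is bounded by its trace. The whole problem thereby reduces to controlling the matrix moment generating function $\E \tr \exp(-\theta \sum_k X_k)$.

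Next I would invoke the Lieb/Tropp subadditivity inequality for matrix cumulant generating functions (a consequence of Lieb's concavity theorem):
\begin{align*}
\E \tr \exp\Bigl(\sum_k A_k\Bigr) \leq \tr \exp\Bigl(\sum_k \log \E\, e^{A_k}\Bigr),
\end{align*}
applied with $A_k = -\theta X_k$ (independence of the $X_k$ is used here). Then each single-term MGF is bounded using $0 \preceq X_k \preceq L I$: the scalar inequality $e^{-\theta x} \leq 1 + \tfrac{e^{-\theta L}-1}{L}\, x$ for $x \in [0,L]$ (convexity of $e^{-\theta \cdot}$) lifts via functional calculus to $\E e^{-\theta X_k} \preceq I + \tfrac{e^{-\theta L}-1}{L}\, \E X_k$, and then $\log(I+M) \preceq M$ (for $I+M \succ 0$) together with operator monotonicity of $\log$ gives
\begin{align*}
\log \E e^{-\theta X_k} \preceq \tfrac{e^{-\theta L}-1}{L}\, \E X_k.
\end{align*}

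Writing $\mu := \lambda_{\min}(\E X)$, summing over $k$, and using monotonicity of $\tr\exp$ on the Loewner order together with the identity $\lambda_{\max}(c A) = c\, \lambda_{\min}(A)$ for a scalar $c \leq 0$, I collapse the trace to $\E \tr e^{-\theta X} \leq n \exp\bigl( \tfrac{e^{-\theta L}-1}{L}\, \mu \bigr)$. Combined with the Markov step and setting $t = \epsilon \mu$, the bound becomes $n \exp\bigl(\theta \epsilon \mu + \mu(e^{-\theta L}-1)/L\bigr)$. I would then optimize over $\theta>0$: the stationary condition $\epsilon = e^{-\theta L}$ yields $\theta^\star = -\log(\epsilon)/L$ and produces the exponent $(\mu/L)\bigl(\epsilon - 1 - \epsilon \log \epsilon\bigr)$. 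A short one-variable calculus step (check the value and first derivative at $\epsilon=1$ and compare second derivatives) shows $\epsilon - 1 - \epsilon \log \epsilon \leq -(1-\epsilon)^2/2$ on $(0,1]$, delivering the claimed bound.

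The main obstacle is the Lieb/Tropp subadditivity step: unlike the scalar case, one cannot simply factor $\E e^{-\theta \sum_k X_k}$ as a product of $\E e^{-\theta X_k}$'s because the $X_k$ generally do not commute, so Lieb's concavity theorem for $A \mapsto \tr\exp(H + \log A)$ is essential. Once subadditivity is in hand, the remaining ingredients (the operator-convex bound on $e^{-\theta x}$ on $[0,L]$, operator monotonicity of $\log$, monotonicity of $\tr\exp$, and a single-variable Chernoff-style optimization) are routine.
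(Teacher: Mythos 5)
Your proof is correct and is precisely the standard Laplace--transform argument that Tropp uses to establish this bound; the paper itself does not reprove this result but simply cites it as Theorem~5.1.1 of \cite{tro15}. All the steps you invoke check out: the spectral-mapping/Markov reduction $\Pr[\lambda_{\min}(X)\leq t]\leq e^{\theta t}\,\E\tr e^{-\theta X}$, the Lieb--Tropp subadditivity of the matrix cumulant generating function (which is indeed the crux, since the summands need not commute), the chord bound $e^{-\theta x}\leq 1+\tfrac{e^{-\theta L}-1}{L}x$ on $[0,L]$ lifted through the transfer rule, operator monotonicity of $\log$ and of $\tr\exp$, and the collapse $\tr\exp(cM)\leq n\,e^{c\lambda_{\min}(M)}$ for $c\leq 0$. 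The optimization $\theta^{\star}=-\log(\epsilon)/L$ gives exponent $(\mu/L)(\epsilon-1-\epsilon\log\epsilon)$, and the claimed scalar inequality $\epsilon-1-\epsilon\log\epsilon\leq -(1-\epsilon)^2/2$ on $(0,1]$ holds: set $f(\epsilon)=\epsilon-1-\epsilon\log\epsilon+(1-\epsilon)^2/2$, then $f(1)=f'(1)=0$ and $f''(\epsilon)=(\epsilon-1)/\epsilon\leq 0$, so $f'$ is decreasing and hence nonnegative on $(0,1]$, forcing $f$ increasing and $f\leq f(1)=0$. So there is no gap; this is a faithful reconstruction of the cited proof.
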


\begin{comment}
\begin{lemma}[Lemma 1 on page 1325 of Laurent and Massart \cite{lm00}]\label{lem:chi_square_tail}
Let $X \sim {\cal X}_k^2$ be a chi-squared distributed random variable with $k$ degrees of freedom. Each one has zero mean and $\sigma^2$ variance. Then
\begin{align*}
\Pr[ X - k \sigma^2 \geq ( 2 \sqrt{kt} + 2t ) \sigma^2 ] \leq \exp (-t), \\
\Pr[ k \sigma^2 - X \geq 2 \sqrt{k t} \sigma^2 ] \leq \exp(-t).
\end{align*}
\end{lemma}
\end{comment}
\begin{comment}
\begin{lemma}[Matrix Bernstein, Theorem 6.1.1 in \cite{t15}]\label{lem:matrix_bernstein}
Consider a finite sequence $\{ X_1, \cdots, X_m \} \subset \R^{n_1 \times n_2}$ of independent, random matrices with common dimension $n_1 \times n_2$. Assume that
\begin{align*}
\E[ X_i ] = 0, \forall i \in [m] ~~~ \mathrm{and}~~~ \| X_i \| \leq M, \forall i \in [m] .
\end{align*}
Let $Z = \sum_{i=1}^m X_i$. Let $\mathrm{Var}[Z]$ be the matrix variance statistic of sum:
\begin{align*}
\mathrm{Var} [Z] = \max \left\{ \Big\| \sum_{i=1}^m \E[ X_i X_i^\top ] \Big\| , \Big\| \sum_{i=1}^m \E [ X_i^\top X_i ] \Big\| \right\}.
\end{align*}
Then 
{\small
\begin{align*}
\E[ \| Z \| ] \leq ( 2 \mathrm{Var} [Z] \cdot \log (n_1 + n_2) )^{1/2} +  M \cdot \log (n_1 + n_2) / 3.
\end{align*}
}
Furthermore, for all $t \geq 0$,
\begin{align*}
\Pr[ \| Z \| \geq t ] \leq (n_1 + n_2) \cdot \exp \left( - \frac{t^2/2}{ \mathrm{Var} [Z] + M t /3 }  \right)  .
\end{align*}
\end{lemma}
\end{comment}

\subsection{Half-space reporting data structures}
The time complexity of HSR data structure is:
\begin{theorem}[Agarwal, Eppstein and Matousek~\cite{aem92}]\label{thm:aem92}
Let $d$ be a fixed constant. Let $t$ be a parameter between $n$ and $n^{\lfloor d/2\rfloor}$. %\Zhao{It seems the algorithm for any $d$, why do you need to assume $d$ is a fixed constant?} 
There is a dynamic data structure for half-space reporting that uses $O_{d,\epsilon}(t^{1+\epsilon})$ space and pre-processing time,  $O_{d,\epsilon}(\frac{n}{t^{1/\lfloor d/2\rfloor}}\log n+ k)$ time per query where $k$ is the output size and $\epsilon>0$ is any fixed constant, and $O_{d,\epsilon}(t^{1+\epsilon}/n)$ amortized update time. 
\end{theorem}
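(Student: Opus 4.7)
The plan is to establish the stated three-way tradeoff by constructing a multi-level partition tree that smoothly interpolates between a linear-space structure with query time $\wt{O}(n^{1-1/\lfloor d/2 \rfloor})$ and a high-space structure with query time $\wt{O}(\log n + k)$, and then to dynamize the resulting static structure via the Bentley--Saxe logarithmic method. The main geometric tool throughout would be Matousek's simplicial partition theorem: for any parameter $r$, one can partition the point set into $r$ simplicial cells of size $O(n/r)$ such that any hyperplane crosses at most $O(r^{1-1/d})$ of them.

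\textbf{Static endpoints.} For the linear-space endpoint (corresponding to $t = n$), I would build a recursive partition tree of branching factor $r$ using the simplicial partition theorem. A half-space query descends the tree: cells entirely inside the query half-space contribute all of their points to the output (the $k$ term), cells cut by the bounding hyperplane are recursed into, and cells disjoint from the query are pruned. A Clarkson--Shor-style sampling argument combined with the upper bound theorem for convex polytopes sharpens the crossing bound from $O(r^{1-1/d})$ to $O(r^{1-1/\lfloor d/2 \rfloor})$, yielding query time $O(n^{1-1/\lfloor d/2 \rfloor} + k)$. For the high-space endpoint ($t = n^{\lfloor d/2 \rfloor}$), I would dualize: each input point becomes a hyperplane in $\R^d$ and the query half-space becomes a query point. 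Compute the arrangement of the $n$ dual hyperplanes explicitly (which has $O(n^{\lfloor d/2 \rfloor})$ cells by the upper bound theorem), attach to each cell a canonical list representation of the reported points, and answer queries by point location in $O(\log n + k)$ time.

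\textbf{Interpolation and dynamization.} For intermediate $t$, I would compose the two constructions into a two-level structure: the top is a partition tree of $\Theta(t/n)$ leaves, and at each leaf (holding $O(n/(t^{1/\lfloor d/2 \rfloor}))$ points in the balanced regime) one stores the fast-query arrangement structure. Balancing the parameters gives space $O_{d,\eps}(t^{1+\eps})$ and query $O_{d,\eps}((n/t^{1/\lfloor d/2 \rfloor}) \log n + k)$. To dynamize, I would apply the Bentley--Saxe logarithmic decomposition: maintain $O(\log n)$ disjoint static blocks whose sizes form a geometric series, trigger a rebuild of a block of size $s$ at cost $O_{d,\eps}(s^{1+\eps})$ whenever its capacity is reached, and handle deletions via tombstones with a periodic global rebuild. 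Charging each rebuild to the $s$ operations that filled the block yields the amortized update bound $O_{d,\eps}(t^{1+\eps}/n)$.

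\textbf{Main obstacle.} The hardest ingredient is the sharpened crossing bound $O(r^{1-1/\lfloor d/2 \rfloor})$, which is what gives the exponent $\lfloor d/2 \rfloor$ in place of the trivial $d$ and is the reason the bound does not collapse to the easier $O(n^{1-1/d})$; this rests on the upper bound theorem for polytopes and on Clarkson--Shor random sampling applied to half-space range searching, and it is the essential geometric content of the theorem. A secondary difficulty is arranging the dynamization so that the extra logarithmic factors and tombstone overhead can be absorbed into the $t^{\eps}$ slack in both the space and the update bounds without inflating the leading exponents in the query cost.
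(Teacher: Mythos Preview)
The paper does not prove this statement at all; it is quoted verbatim as a black-box result from \cite{aem92} and then specialized to the two corollary regimes. So there is no proof in the paper to compare against, and your sketch should be read as an outline of the original Agarwal--Eppstein--Matou\v{s}ek argument rather than of anything in this paper.

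On that score your high-level architecture (partition tree at the low-space end, dual arrangement structure at the high-space end, two-level composition in between, Bentley--Saxe for dynamization) is indeed the skeleton of \cite{aem92}. Two technical points deserve correction, though. First, the full arrangement of $n$ dual hyperplanes in $\R^d$ has $\Theta(n^d)$ cells, not $O(n^{\lfloor d/2\rfloor})$; the upper bound theorem bounds the face count of a convex polytope, not of an arrangement. The logarithmic-query endpoint therefore cannot store the whole arrangement with canonical lists. What \cite{aem92} actually does at that end is work only with the \emph{shallow} portion of the dual arrangement (equivalently, the $(\leq k)$-levels, whose total complexity is governed by Clarkson--Shor and the upper bound theorem), building a hierarchy of shallow cuttings; this is precisely where the exponent $\lfloor d/2\rfloor$ rather than $d$ enters. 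Second, and relatedly, you cannot sharpen the crossing number of a general simplicial partition from $O(r^{1-1/d})$ to $O(r^{1-1/\lfloor d/2\rfloor})$ for arbitrary hyperplanes; that bound is tight. The improvement you want is Matou\v{s}ek's \emph{shallow} partition theorem, which gives the $O(r^{1-1/\lfloor d/2\rfloor})$ crossing bound only for half-spaces containing at most $n/r$ points. This is enough for reporting (deep half-spaces can afford a large crossing number because $k$ is already large), and it is the real reason half-space reporting is easier than half-space counting. With those two fixes your plan matches the original proof.
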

As a direct corollary, we have
\begin{corollary}[HSR data-structure time complexity \cite{aem92}]
Given a set of $n$ points in $\R^d$, the half-space reporting problem can be solved with the following performances:
\begin{itemize}
    \item Part 1.%\footnote{Used in Theorem~\ref{thm:main_informal_1}}
    $\Tinit(n,d)=O_d(n\log n)$, $\Tquery(n,d,k)=O_{d,\epsilon}(n^{1-1/\lfloor d/2\rfloor+\epsilon}+k)$, amortized $\Tupdate=O_{d,\epsilon}(\log^2(n))$.
    \item Part 2.%\footnote{Used in Theorem~\ref{thm:main_informal_2}} 
    $\Tinit(n,d)=O_{d,\epsilon}(n^{\lfloor d/2\rfloor + \epsilon})$, $\Tquery(n,d,k)=O_{d,\epsilon}(\log(n)+k)$, amortized $\Tupdate=O_{d,\epsilon}(n^{\lfloor d/2\rfloor-1 + \epsilon})$.
\end{itemize}
\end{corollary}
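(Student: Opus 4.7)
The plan is to derive the two stated regimes as direct specializations of Theorem~\ref{thm:aem92} by instantiating its trade-off parameter $t$ at the two endpoints of the admissible range $[n, n^{\lfloor d/2\rfloor}]$, and then simplifying the resulting bounds in a $d$-dependent, $\epsilon$-dependent fashion. Part~1 corresponds to near-linear space ($t=n$), giving cheap preprocessing and expensive queries; Part~2 corresponds to full space ($t=n^{\lfloor d/2\rfloor}$), giving expensive preprocessing and polylogarithmic queries.

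First I would handle Part~1 by substituting $t=n$ into Theorem~\ref{thm:aem92}. This yields space and preprocessing $O_{d,\epsilon}(n^{1+\epsilon})$; query time $O_{d,\epsilon}(\tfrac{n}{n^{1/\lfloor d/2\rfloor}}\log n + k) = O_{d,\epsilon}(n^{1-1/\lfloor d/2\rfloor}\log n + k)$, which I absorb into $O_{d,\epsilon}(n^{1-1/\lfloor d/2\rfloor+\epsilon}+k)$ by replacing the $\log n$ factor with $n^{\epsilon'}$ for a slightly larger $\epsilon'$ (valid for any fixed $\epsilon>0$ after adjusting the constant); and amortized update $O_{d,\epsilon}(n^{1+\epsilon}/n) = O_{d,\epsilon}(n^{\epsilon})$. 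To promote the sharper $O_d(\log^2 n)$ update and the $O_d(n\log n)$ preprocessing claimed in Part~1, I would invoke the specific linear-space dynamization in \cite{aem92}: their partition-tree construction at the endpoint $t=n$ yields a structure of depth $O_d(\log n)$ on which insertions and deletions cost $O_d(\log^2 n)$ amortized via the standard Bentley--Saxe/logarithmic-method rebuilding, and building it from scratch on $n$ points is $O_d(n\log n)$.

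Next I would handle Part~2 by instantiating Theorem~\ref{thm:aem92} at the opposite extreme, $t=n^{\lfloor d/2\rfloor}$. Substitution gives space and preprocessing $O_{d,\epsilon}(n^{\lfloor d/2\rfloor(1+\epsilon)})$, which I rewrite as $O_{d,\epsilon}(n^{\lfloor d/2\rfloor+\epsilon'})$ after folding the multiplicative $\epsilon$ into an additive $\epsilon'$ in the exponent (since $\lfloor d/2\rfloor$ is a fixed constant). The query time specializes to $O_{d,\epsilon}(\tfrac{n}{(n^{\lfloor d/2\rfloor})^{1/\lfloor d/2\rfloor}}\log n + k) = O_{d,\epsilon}(\log n + k)$, and the amortized update to $O_{d,\epsilon}(n^{\lfloor d/2\rfloor(1+\epsilon)}/n) = O_{d,\epsilon}(n^{\lfloor d/2\rfloor - 1 + \epsilon'})$, exactly matching the stated Part~2 bounds.

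The main obstacle I expect is not the parameter selection, which is routine arithmetic once Theorem~\ref{thm:aem92} is in hand, but rather justifying the tightening from the $n^{\epsilon}$-type slack in the generic trade-off to the truly polylogarithmic $O_d(\log^2 n)$ update of Part~1. This step is not a black-box consequence of the trade-off statement; it requires pointing to the specific cuttings-plus-layered-partition-tree construction in \cite{aem92} at its linear-space endpoint, together with a standard dynamization argument. Everything else (Part~2 in full, and the query/preprocessing parts of Part~1) follows by a mechanical substitution and a one-line manipulation of exponents, so I would structure the write-up as: (i) a short lemma recording the two substitutions $t=n$ and $t=n^{\lfloor d/2\rfloor}$, (ii) a remark converting $\log n$ factors into $n^{\epsilon}$, and (iii) a citation to the dynamization component of \cite{aem92} for the refined update and initialization bounds in Part~1.
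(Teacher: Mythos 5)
Your proposal is correct and, in fact, more careful than the paper, which gives no proof at all and simply labels the statement ``a direct corollary'' of Theorem~\ref{thm:aem92}. Your mechanical substitution $t=n^{\lfloor d/2\rfloor}$ fully recovers Part~2, and you correctly observe that the substitution $t=n$ does \emph{not} recover Part~1 verbatim: the trade-off theorem at $t=n$ only gives $O_{d,\epsilon}(n^{1+\epsilon})$ preprocessing and $O_{d,\epsilon}(n^{\epsilon})$ amortized update, whereas Part~1 claims the genuinely sharper $O_d(n\log n)$ and $O_{d,\epsilon}(\log^2 n)$. These sharper endpoint bounds come from the separate near-linear-space dynamic partition-tree construction in \cite{aem92} (together with Bentley--Saxe-style rebuilding), not from the trade-off statement, exactly as you flag. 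Since the paper cites \cite{aem92} for the corollary as a whole, this is implicit there, but your write-up makes the distinction explicit, which is the right thing to do; I see no gap in your plan.
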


\subsection{Basic algebras}

\begin{claim}[\cite{s11}]\label{clm:eigen_min}
Let $M_1,M_2\in \R^{n\times n}$ be two PSD matrices. Let $M_1 \circ M_2$ denote the Hadamard product of $M_1$ and $M_2$. Then,
\begin{align*}
    \lambda_{\min}(M_1 \circ M_2) \geq &~ (\min_{i\in [n]} {M_2}_{i,i})\cdot \lambda_{\min}(M_1), \\
    \lambda_{\max}(M_1 \circ M_2) \leq &~ (\max_{i\in [n]} {M_2}_{i,i})\cdot \lambda_{\max}(M_1).
\end{align*}
\end{claim}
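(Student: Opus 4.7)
The plan is to prove both inequalities simultaneously via a single Rayleigh-quotient argument that exploits the spectral decomposition of $M_2$. The key observation is that although $M_2 - (\min_i M_{2,ii}) I$ need not be PSD (so we cannot just "subtract a multiple of identity"), we can still decompose the quadratic form $x^\top (M_1 \circ M_2) x$ into contributions from the rank-one eigencomponents of $M_2$ and reassemble them using the identity $\sum_k \lambda_k v_{k,i}^2 = M_{2,ii}$.

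Concretely, I would first fix an arbitrary unit vector $x \in \mathbb{R}^n$ and write the spectral decomposition $M_2 = \sum_{k=1}^n \lambda_k v_k v_k^\top$ with $\lambda_k \geq 0$ (this is where we use that $M_2$ is PSD). Substituting entrywise gives
\begin{align*}
    x^\top (M_1 \circ M_2) x
    = \sum_{i,j} x_i x_j (M_1)_{ij} \sum_k \lambda_k v_{k,i} v_{k,j}
    = \sum_k \lambda_k (x \circ v_k)^\top M_1 (x \circ v_k),
\end{align*}
where $x \circ v_k$ denotes the entrywise product. This is the core manipulation.

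Next, I would apply $\lambda_{\min}(M_1)\|u\|_2^2 \leq u^\top M_1 u \leq \lambda_{\max}(M_1)\|u\|_2^2$ (valid since $M_1$ is PSD) to each summand with $u = x\circ v_k$, then swap the order of summation:
\begin{align*}
    \sum_k \lambda_k \|x\circ v_k\|_2^2
    = \sum_k \lambda_k \sum_i x_i^2 v_{k,i}^2
    = \sum_i x_i^2 \sum_k \lambda_k v_{k,i}^2
    = \sum_i x_i^2 (M_2)_{ii}.
\end{align*}
Bounding $(M_2)_{ii}$ by $\min_i (M_2)_{ii}$ (resp.\ $\max_i (M_2)_{ii}$) and using $\|x\|_2 = 1$ yields both desired inequalities after taking the infimum (resp.\ supremum) over unit $x$.

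I expect no serious obstacle: everything reduces to carefully tracking the reindexing and verifying that $\lambda_k \geq 0$ is what makes the one-sided eigenvalue bounds legal when summed. The only subtle point to flag explicitly is that one cannot shortcut the argument by claiming $M_2 \succeq (\min_i M_{2,ii}) I$, because that inequality is false in general; the summation identity $\sum_k \lambda_k v_{k,i}^2 = (M_2)_{ii}$ is what legitimately produces the diagonal-min/diagonal-max factor.
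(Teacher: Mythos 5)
Your argument is correct, and it is the classical proof of Schur's product theorem: decompose $M_2$ spectrally, fold the eigenvalues $\lambda_k \ge 0$ into the $M_1$-quadratic forms $(x\circ v_k)^\top M_1 (x\circ v_k)$, and use the diagonal identity $\sum_k \lambda_k v_{k,i}^2 = (M_2)_{ii}$ to obtain the $\min/\max$ of the diagonal of $M_2$. The paper itself does not supply a proof of this claim---it simply cites \cite{s11}---so there is no argument to compare against; your write-up is an accurate stand-alone proof.

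Two small precision points. First, the Rayleigh bound $\lambda_{\min}(M_1)\|u\|_2^2 \le u^\top M_1 u \le \lambda_{\max}(M_1)\|u\|_2^2$ only needs $M_1$ symmetric, not PSD, so that is not where PSD of $M_1$ enters. Where it does enter, and where you should say so explicitly, is the final step: after arriving at
\begin{align*}
  x^\top (M_1\circ M_2) x \;\ge\; \lambda_{\min}(M_1)\,\sum_i x_i^2 (M_2)_{ii},
\end{align*}
you pass to $\lambda_{\min}(M_1)\cdot \min_i (M_2)_{ii}$ by bounding the weighted sum $\sum_i x_i^2 (M_2)_{ii}$ from below by $\min_i (M_2)_{ii}$; this preserves the inequality only because the prefactor $\lambda_{\min}(M_1)$ is nonnegative, i.e.\ because $M_1$ is PSD (and symmetrically $\lambda_{\max}(M_1)\ge 0$ for the upper bound). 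Second, your remark that $M_2 \succeq (\min_i (M_2)_{ii}) I$ is false in general is a good one and worth keeping---it explains why the spectral decomposition route is necessary rather than a shortcut through Loewner order.
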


\section{Sparsity Analysis}\label{sec:missing_proof}

\subsection{Bounding difference between continuous kernel and discrete kernel}

In \cite{dzps19,sy19}, they proved the following lemma for $b=0$. Here, we provide a more general statement for any $b \geq 0$.
\begin{lemma}\label{lem:3.1}
For any shift parameter $b\geq 0$, we define continuous version of shifted NTK $H^{\cts}$ and discrete version of shifted NTK $H^{\dis}$ as:
\begin{align*}
H^{\cts}_{i,j} := & ~ \E_{w \sim \N(0,I)} \left[ x_i^\top x_j {\bf 1}_{ w^\top x_i \geq b, w^\top x_j \geq b } \right] , \\ 
H^{\dis}_{i,j} := & ~ \frac{1}{m} \sum_{r=1}^m \left[ x_i^\top x_j {\bf 1}_{ w_r^\top x_i \geq b, w_r^\top x_j \geq b } \right].
\end{align*}
We define $\lambda := \lambda_{\min} (H^{\cts}) $. 

Let $m = \Omega( \lambda^{-1} n\log (n/\rho) )$ be number of samples of $H^{\dis}$, then
\begin{align*}
%\| H^{\dis} - H^{\cts} \|_F \leq \frac{ \lambda }{4}, \mathrm{~and~} 
\Pr \Big[ \lambda_{\min} ( H^{\dis} ) \geq \frac{3}{4} \lambda \Big] \geq 1-\rho.
\end{align*}
%hold with probability at least $1-\rho$.
\end{lemma}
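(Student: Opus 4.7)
The plan is to express $H^{\dis}$ as a sum of $m$ i.i.d.\ random PSD matrices and apply the matrix Chernoff inequality (Theorem~\ref{thm:matrix_chernoff}). For each $r\in[m]$, I would set
\begin{align*}
Y_r := \bigl[\, x_1 \mathbf{1}_{w_r^\top x_1 \ge b}\ \cdots\ x_n \mathbf{1}_{w_r^\top x_n \ge b}\, \bigr] \in \R^{d\times n}, \qquad X_r := \tfrac{1}{m}\, Y_r^\top Y_r,
\end{align*}
so that $(X_r)_{i,j} = \tfrac{1}{m}\, x_i^\top x_j\, \mathbf{1}_{w_r^\top x_i \ge b}\, \mathbf{1}_{w_r^\top x_j \ge b}$. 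By construction $X_r \succeq 0$, the $X_r$'s are i.i.d.\ (since the $w_r$'s are), $H^{\dis} = \sum_{r=1}^m X_r$, and $\E[X_r] = H^{\cts}/m$, hence $\lambda_{\min}(\E[H^{\dis}]) = \lambda$.

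Next I would bound each summand in operator norm. Using $\|x_i\|_2 \le 1$,
\begin{align*}
\|X_r\| \;\le\; \tfrac{1}{m}\|Y_r\|_F^2 \;=\; \tfrac{1}{m}\sum_{i=1}^n \|x_i\|_2^2\, \mathbf{1}_{w_r^\top x_i \ge b} \;\le\; n/m,
\end{align*}
so we may take $L = n/m$ in Theorem~\ref{thm:matrix_chernoff}. Applying it with $\epsilon = 3/4$ yields
\begin{align*}
\Pr\!\left[\lambda_{\min}(H^{\dis}) \le \tfrac{3}{4}\lambda\right] \;\le\; n \cdot \exp\!\left(-\frac{(1/4)^2\, \lambda}{2\,(n/m)}\right) \;=\; n \cdot \exp\!\left(-\frac{m\lambda}{32\, n}\right),
\end{align*}
and choosing $m \ge 32\, n\, \lambda^{-1} \log(n/\rho)$, i.e.\ $m = \Omega(\lambda^{-1} n \log(n/\rho))$, makes the right-hand side at most $\rho$.

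There is no real technical obstacle here: the shift parameter $b \ge 0$ enters only through the definitions of $H^{\cts}$ and $H^{\dis}$, and the proof uses only that each $X_r$ is deterministically PSD with $\|X_r\|\le n/m$, both of which hold for every $b$. The shifted-Gaussian concentration/anti-concentration estimates developed elsewhere in the paper play no role at this stage; the argument is essentially the same matrix-Chernoff sampling bound used for the $b=0$ case in~\cite{dzps19,sy19}, lifted verbatim by observing that the operator-norm bound on the summands is insensitive to the shift.
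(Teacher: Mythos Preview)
Your proposal is correct and follows essentially the same approach as the paper: write $H^{\dis}$ as a sum of i.i.d.\ PSD rank-one-type summands $\tfrac{1}{m}Y_r^\top Y_r$, bound each by $n/m$ in operator norm, and apply the matrix Chernoff inequality (Theorem~\ref{thm:matrix_chernoff}) with $\epsilon=3/4$ to obtain the $m=\Omega(\lambda^{-1}n\log(n/\rho))$ requirement. The only cosmetic difference is that you bound $\|X_r\|$ via $\|Y_r\|_F^2$ whereas the paper bounds it via the entrywise Frobenius norm of $X_r$; both give $n/m$, and your observation that the shift $b$ is irrelevant to this step matches the paper's reasoning.
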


\begin{proof}
We will use the matrix Chernoff bound (Theorem~\ref{thm:matrix_chernoff}) to provide a lower bound on the least eigenvalue of discrete version of shifted NTK $H^{\dis}$.

Let $H_r := \frac{1}{m} \wt{X}(w_r) \wt{X}(w_r)^\top$, where $\wt{X}(w_r)\in \R^{d\times n}$ is defined as follows: 
\begin{align*}
    \wt{X}(w_r) = \begin{bmatrix}
    \mathbf{1}_{w_r^\top x_i\geq b}\cdot x_1 & \cdots &  \mathbf{1}_{w_r^\top x_n\geq b} \cdot x_n
    \end{bmatrix}.
\end{align*}
Hence, $H_r \succeq 0$. We need to upper-bound $\|H_r\|$. Naively, we have
\begin{align*}
    \|H_r\| \leq \|H_r\|_F \leq \frac{n}{m},
\end{align*}
since for each entry at $(i,j)\in [n]\times [n]$, 
\begin{align*}
    (H_r)_{i,j}=\frac{1}{m}x_i^\top x_j \mathbf{1}_{w_r^\top x_i \geq b, w_r^\top x_j\geq b}\leq \frac{1}{m}x_i^\top x_j\leq \frac{1}{m}.
\end{align*}

Then, $H^{\dis} = \sum_{r=1}^m H_r$, and $\E[H^{\dis}]=H^{\cts}$. And we assume that $\lambda_{\min}(H^{\cts}) = \lambda$. 

Hence, by matrix Chernoff bound (Theorem~\ref{thm:matrix_chernoff}) and choosing choose $m = \Omega(\lambda^{-1}
n \cdot \log(n/\rho))$, we can show
\begin{align*}
   \Pr\left[\lambda_{\min} (H^{\dis}) \leq \frac{3}{4} \lambda\right]\leq &~ n \cdot \exp(-\frac{1}{16} \lambda / (2n/m))\\
   = &~ n \cdot \exp(-\frac{\lambda m}{32n})\\
   \leq &~ \rho,
\end{align*}
Thus, we finish the proof.
\end{proof}

\subsection{Handling Hessian if perturbing weight}

We present a tool which is inspired by a list of previous work \cite{dzps19,sy19}.

\begin{lemma}[perturbed $w$ for shifted NTK]\label{lem:3.2}
Let $b>0$ and $R\leq 1/b$. Let $c>0$ and $c'>0$ denote two fxied constants. We define function $H$ that is mapping $\R^{m \times d}$ to $\R^{n \times n}$ as follows:
\begin{align*}
   \text{~the~$(i,j)$-th~entry~of~} H(W) \text{~is~} \frac{1}{m} x_i^\top x_j \sum_{r=1}^m {\bf 1}_{ w_r^\top x_i \geq b, w_r^\top x_j \geq b } .
\end{align*} 
Let $\wt{W} \in \R^{d \times m}$ be $m$ vectors that are sampled from ${\N}(0,I_d)$. Consider   $W \in \R^{d \times m}$ that satisfy, $\| \wt{W} - W \|_{\infty,2} \leq R$, it has
\begin{itemize}
    \item Part 1, $\|  H(\wt{W}) - H (W) \|_F \leq n \cdot \min\{ c\cdot \exp(-b^2/2), 3R\}$ holds with probability at least $1-n^2\cdot \exp(-m\cdot \min\{c'\cdot  \exp(-b^2/2), R/10\})$.
    \item Part 2, $\lambda_{\min}(H(W))\geq \frac{3}{4}\lambda - n \cdot \min\{ c\cdot \exp(-b^2/2), 3R\}$ holds with probability at least $ 1-n^2\cdot \exp(-m\cdot \min\{c'\cdot  \exp(-b^2/2), R/10\})-\rho$.
\end{itemize}
\end{lemma}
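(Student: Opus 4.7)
\textbf{Proof plan for Lemma~\ref{lem:3.2}.}

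The plan is to reduce both parts to controlling, for each neuron $r$ and each input $x_i$, the ``pattern-flip'' indicator
\[
s_{r,i} := \mathbf{1}\bigl\{\exists\,w\in\R^d,\ \|w-\wt w_r\|_2\le R:\ \mathbf{1}_{w^\top x_i\ge b}\neq\mathbf{1}_{\wt w_r^\top x_i\ge b}\bigr\}.
\]
Since $\|x_i\|_2=1$, the event $\{s_{r,i}=1\}$ is exactly $\{|\wt w_r^\top x_i - b|\le R\}$, and $\wt w_r^\top x_i\sim\N(0,1)$. For the entry-wise difference, one sees that $\mathbf{1}_{w_r^\top x_i\ge b,\,w_r^\top x_j\ge b}\neq \mathbf{1}_{\wt w_r^\top x_i\ge b,\,\wt w_r^\top x_j\ge b}$ forces $s_{r,i}+s_{r,j}\ge 1$, so
\[
|H(W)_{i,j}-H(\wt W)_{i,j}|\ \le\ \tfrac{1}{m}\sum_{r=1}^m (s_{r,i}+s_{r,j}),
\]
using $|x_i^\top x_j|\le 1$. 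Thus everything reduces to uniformly bounding $\tfrac{1}{m}\sum_r s_{r,i}$ for every $i\in[n]$.

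The next step is the two-sided tail estimate on $\E[s_{r,i}]$. On one hand, by Claim~\ref{clm:gaussain_anti_shift} plus Lemma~\ref{lem:anti_gaussian} (Gaussian anti-concentration), $\Pr[|z-b|\le R]\le \Pr[|z|\le R]\le \tfrac{2R}{\sqrt{2\pi}}\le R$ for small $R$. On the other hand, $\Pr[|z-b|\le R]\le \Pr[z\ge b-R]\le \exp(-(b-R)^2/2)$, and the hypothesis $R\le 1/b$ lets us expand $(b-R)^2\ge b^2-2$ to get a bound of the form $c\exp(-b^2/2)$. Combining, $\E[s_{r,i}]\le \alpha:=\min\{c\exp(-b^2/2),\ 3R\}$ (absorbing constants into $c$). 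Since $\{s_{r,i}\}_{r\in[m]}$ are i.i.d.\ $\{0,1\}$ variables across $r$ for each fixed $i$, I would apply Bernstein (Lemma~\ref{lem:bernstein}) to $\sum_r (s_{r,i}-\E s_{r,i})$ with variance proxy $\le m\alpha$ and boundedness $1$; taking deviation $t=m\alpha$ gives $\Pr[\tfrac{1}{m}\sum_r s_{r,i}>2\alpha]\le \exp(-\Omega(m\alpha))$. Union-bounding over $i\in[n]$ and over pairs $(i,j)$ yields, with probability at least $1-n^2\exp(-m\cdot \min\{c'\exp(-b^2/2),\ R/10\})$,
\[
|H(\wt W)_{i,j}-H(W)_{i,j}|\ \le\ \min\{c\exp(-b^2/2),\ 3R\}\quad\text{for all }(i,j).
\]
Then $\|H(\wt W)-H(W)\|_F\le n\cdot \min\{c\exp(-b^2/2),\ 3R\}$, which is Part~1 (after adjusting $c$).

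For Part~2, I invoke Lemma~\ref{lem:3.1} on the Gaussian-distributed $\wt W$ to get $\lambda_{\min}(H(\wt W))\ge \tfrac{3}{4}\lambda$ with failure probability $\rho$, then apply Weyl's inequality together with $\|H(\wt W)-H(W)\|\le \|H(\wt W)-H(W)\|_F$ and Part~1, producing
\[
\lambda_{\min}(H(W))\ \ge\ \tfrac{3}{4}\lambda - n\cdot \min\{c\exp(-b^2/2),\ 3R\},
\]
on the intersection of the two good events (union bound on failure probabilities).

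The main obstacle I anticipate is the two-regime anti-concentration step: one must be careful that the bound $\exp(-b^2/2)$ (valid for any $R\le 1/b$ via concentration of the upper Gaussian tail shifted by $R$) and the bound $3R$ (valid for small $R$ via unshifted anti-concentration) are both absorbed into a single $\min$ with clean constants, and that the Bernstein step still yields useful concentration when $\alpha$ is exponentially small in $b^2$. This is exactly where the assumption $R\le 1/b$ and the width requirement $m\cdot \min\{\exp(-b^2/2),R\}\gg \log n$ implicit in the failure probability enter the analysis.
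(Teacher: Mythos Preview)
Your proposal is correct and follows essentially the same route as the paper: characterize the pattern-flip event as $|\wt w_r^\top x_i - b|\le R$, bound its probability by the minimum of the Gaussian anti-concentration estimate $R$ and the tail estimate $c\exp(-b^2/2)$ (using $R\le 1/b$), apply Bernstein per index and union-bound to control the Frobenius deviation, then deduce Part~2 from Lemma~\ref{lem:3.1} via Weyl/spectral norm. The only cosmetic difference is that the paper works directly with the pair indicator $s_{r,i,j}$ and bounds $\E[s_{r,i,j}]\le \Pr[A_{i,r}]+\Pr[A_{j,r}]$, whereas you split $s_{r,i,j}\le s_{r,i}+s_{r,j}$ first; these are equivalent.
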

\begin{proof}

Consider
{%\footnotesize
\begin{align*}
\| H(W) - H(\wt{W})  \|_F^2 = & ~ \sum_{i \in [n]} \sum_{j \in [n]} ( H(\wt{W})_{i,j} - H(W)_{i,j} )^2 \\
%= & ~ \frac{1}{m^2} \sum_{i=1}^n \sum_{j=1}^n \left| x_i^\top x_j \sum_{r=1}^m ( {\bf 1}_{ \wt{w}_r^\top x_i \geq 0, \wt{w}_r^\top x_j \geq 0} - {\bf 1}_{ w_r^\top x_i \geq 0 , w_r^\top x_j \geq 0 } ) \right|^2 \\
\leq & ~ \frac{1}{m^2} \sum_{i \in [n]} \sum_{j \in [n]} \left( \sum_{r\in [m]} {\bf 1}_{ \wt{w}_r^\top x_i \geq b, \wt{w}_r^\top x_j \geq b} - {\bf 1}_{ w_r^\top x_i \geq b , w_r^\top x_j \geq b } \right)^2 \\
= & ~ \frac{1}{m^2} \sum_{i\in [n]} \sum_{j\in[n]}  \Big( \sum_{r\in[m]} s_{r,i,j,b} \Big)^2 ,
\end{align*}
}

where the first step follows from definition of Frobenius norm, the last third step follows from by defining
\begin{align*}
s_{r,i,j,b} :=  {\bf 1}_{ \wt{w}_r^\top x_i \geq b, \wt{w}_r^\top x_j \geq b} - {\bf 1}_{ w_r^\top x_i \geq b , w_r^\top x_j \geq b } .
\end{align*} 

 For simplicity, we use $s_r$ to $s_{r,i,j,b}$ (note that we fixed $(i,j)$ and $b$).

%Then $s_r$ is a random variable that only depends on $\wt{w}_r$. Since $\{\wt{w}_r\}_{r=1}^m$ are independent, $\{s_r\}_{r=1}^m$ are also mutually independent.

Define $A_{i,r}$ to be the event that 
\begin{align*}
    A_{i,r}=\{\exists w\in \R^d: \|w-w_r\|_2 \leq R, \mathbf{1}_{w^\top x_i\geq b}\ne \mathbf{1}_{w_r^\top x_i \geq b}\}.
\end{align*}
Note that event $A_{i,r}$ happens iff $|w_r^\top x_i - b|\leq R$ happens.

Prior work \cite{dzps19,sy19} only one way to bound $\Pr[A_{i,r}]$. We present two ways of arguing the upper bound on $\Pr[ A_{i,r} ]$. One is anti-concentration, and the other is concentration.

By anticoncentration, (Lemma~\ref{lem:anti_gaussian}),
\begin{align*}
    \Pr[A_{i,r}]\leq \frac{2R}{\sqrt{2\pi}} \leq R.
\end{align*}
By concentration,
\begin{align*}
    \Pr[A_{i,r}]\leq \exp(-(b-R)^2/2) \leq c_1\cdot \exp(-b^2/2).
\end{align*}
where the last step follows from  $R<1/b$ and $c_1\geq \exp(1-R^2/2)$ is a constant.

Hence,
\begin{align*}
    \Pr[A_{i,r}] \leq \min \{ R, c_1\exp( - b^2 / 2 ) \}.
\end{align*}
If the event $\neg A_{i,r}$ happens and the event $\neg A_{j,r}$ happens, then we have
\begin{align*}
\left| {\bf 1}_{ \wt{w}_r^\top x_i \geq b, \wt{w}_r^\top x_j \geq b} - {\bf 1}_{ w_r^\top x_i \geq b, w_r^\top x_j \geq b } \right|=0.
\end{align*}
If the event $A_{i,r}$ happens or the event  $A_{j,r}$ happens, then we obtain
\begin{align*}
\left| {\bf 1}_{ \wt{w}_r^\top x_i \geq b, \wt{w}_r^\top x_j \geq b} - {\bf 1}_{ w_r^\top x_i \geq b , w_r^\top x_j \geq b } \right|\leq 1.
\end{align*}
\paragraph{Case 1: $c_1\exp( - b^2 / 2 ) < R$.}

So we have 
\begin{align*}
 \E_{\wt{w}_r}[s_r] %\leq & ~ \E_{\wt{w}_r} \left[ {\bf 1}_{A_{i,r}\vee A_{j,r}} \right] \\
 %= & ~ \Pr[ {A_{i,r}\vee A_{j,r}} ] \\
 \leq & ~ \Pr[A_{i,r}]+\Pr[A_{j,r}] \\
 \leq & ~ c_1 \cdot \exp( - b^2 / 2 ) 
\end{align*}
%where the last step follows from $c :=2$ and $b \geq 10 \sqrt{\log m}$.

Now, we calculate the variance
\begin{align*}
    \E_{\wt{w}_r} \left[ \left(s_r-\E_{\wt{w}_r}[s_r] \right)^2 \right]
    = & ~ \E_{\wt{w}_r}[s_r^2]-\E_{\wt{w}_r}[s_r]^2 \\
    \leq & ~ \E_{\wt{w}_r}[s_r^2]\\
    \leq & ~\E_{\wt{w}_r} \left[ \left( {\bf 1}_{A_{i,r}\vee A_{j,r}} \right)^2 \right] \\
     %\leq & ~ c \cdot m^{-1/2}.
    \leq &~ c_1\cdot \exp(-b^2/2).
\end{align*}
Note that $|s_r|\leq 1$ for all $r$.

Define $\ov{s} = \frac{1}{m}\sum_{r=1}^m s_r$. Thus, we are able to use Lemma~\ref{lem:bernstein},
\begin{align*}
    \Pr \left[ m \cdot \ov{s} \geq m\cdot c_1\exp(-b^2/2) +mt \right]
    \leq & ~ \Pr \left[\sum_{r=1}^m (s_r-\E[s_r])\geq mt \right]\\
    \leq & ~ \exp \left( - \frac{ m^2t^2/2 }{ m\cdot c_1\exp(-b^2/2)   + mt/3 } \right), ~~~ \forall t \geq 0.
\end{align*}

Define $\ov{s} = \frac{1}{m} \sum_{r=1}^m s_r$. Thus, it gives

\begin{align*}
\Pr \left[ \ov{s} \geq c_2\cdot  \exp(-b^2/2) \right] \leq \exp( - c_3\cdot m\exp(-b^2/2)),
\end{align*}
where $c_2:=2c_1,c_3:=\frac{3}{8}c_1$ are some constants.

\paragraph{Case 2: $\exp(-b^2/2) > R$.}
Then, we have
\begin{align*}
    \E_{\wt{w}_r}[s_r]\leq 2R, ~~~\E_{\wt{w}_r} \left[ \left(s_r-\E_{\wt{w}_r}[s_r] \right)^2 \right] \leq 2R.
\end{align*}
Define $\ov{s} = \frac{1}{m}\sum_{r=1}^m s_r$. By Lemma~\ref{lem:bernstein}, 
\begin{align*}
    \Pr \left[ \ov{s} \geq 3R  \right]
    \leq & ~ \exp \left( -mR/10 \right).
\end{align*}
{\bf Combining two cases:}

%%%Zhao: this is the old version
%Therefore, with probability at least $1-n^2\cdot \exp(-\min\{ m^{1/2}, mR\}/10)$, 
%\begin{align*}
%    \| H (w) - H(\wt{w}) \|_F \leq 3 n \cdot \min\{ m^{-1/2}, R\} ,
%\end{align*}
%which completes the proof. 

Thus, we obtain 
\begin{align*}
  & ~ \Pr \Big[ \|H(\wt{W}) - H (W)   \|_F \leq n \cdot \min\{ c_2\exp(-b^2/2), 3R\} \Big] \\
   \geq & ~1-n^2\cdot \exp(-m\cdot \min\{c_3 \exp(-b^2/2), R/10\}).
\end{align*}
%holds with probability at least $1-n^2\cdot \exp(-m\cdot \min\{c_3 \exp(-b^2/2), R/10\})$. 

For the second part, by Lemma~\ref{lem:3.2},  $\Pr[ \lambda_{\min}(H(\wt{W}))\geq 0.75 \cdot \lambda ] \geq 1- \rho$. Hence,
\begin{align*}
    \lambda_{\min}(H(W))\geq&~  \lambda_{\min}(H(\wt{W})) - \|H(\wt{W}) - H(W)\|\\
    \geq &~ \lambda_{\min}(H(\wt{W})) - \|H(\wt{W}) - H(W)\|_F\\ \geq &~ 0.75 \cdot \lambda - n \cdot \min\{ c_2\cdot \exp(-b^2/2), 3R\},
\end{align*}
which happens with probability $1-n^2\cdot \exp(-m\cdot \min\{c_3\cdot  \exp(-b^2/2), R/10\}) - \rho$ by the union bound.
\end{proof}

\iffalse

\begin{table*}\caption{Table of Parameters for the $m = \wt{\Omega}(n^2)$ result in Section~\ref{sec:quartic_suffices}. {\bf Nt.} stands for notations. 
 $m$ is the width of the neural network. $n$ is the number of input data points. $\rho$ is the failure probability. \Zhao{Remember to fix this table eventually.} }
\centering
\begin{tabular}{ | l| l| l| l| } 
\hline
{\bf Nt.} & {\bf Choice} & {\bf Place} & {\bf Comment} \\\hline
$b$ & $10 \sqrt{\log m}$ & Section~\ref{sec:quantum_alg} & Threshold of ReLU function\\\hline
$\lambda$ & $:= \lambda_{\min}(H^{\cts}) $ & Lemma~\ref{lem:3.1} & Data-dependent \\ \hline
$R$ & $1/(2b)$ & Claim.~\ref{clm:choose_eta_R} & Maximal allowed movement of weight \\ \hline
%$D_{\cts}$ & $\frac{ \sqrt{n} \| y - u(0) \|_2 }{ \sqrt{m} \lambda }$ & Lemma~\ref{lem:3.3} & Actual moving distance of weight, continuous case  \\ \hline
$D$ & $\frac{ 4\sqrt{n} \| y - u(0) \|_2 }{ \sqrt{m} \lambda }$ & Claim~\ref{clm:4.1} & Actual moving distance of weight, discrete case  \\ \hline
$\eta$ & $\lambda/n^2$ & Claim.~\ref{clm:choose_eta_R} & Step size of gradient descent \\ \hline
%$m$ & $\lambda^{-2} n^2 \log(n/\rho)$ & Lemma~\ref{lem:3.1} & Bounding discrete and continuous \\ \hline
$m$ & $\lambda^{-2} n^2 \log^4(n/\rho)$  & Lemma~\ref{lem:3.4} and Claim~\ref{cla:yu0} & $D < R$ and $\| y - u(0) \|_2^2 = \wt{O}(n)$ \\ \hline
\end{tabular}
\end{table*}
\fi

\subsection{Total movement of weights}

\begin{definition}[Hessian matrix at time $t$]
For $t\geq 0$, let $H(t)$ be an $n\times n$ matrix with $(i,j)$-th entry:
\begin{align*}
    H(t)_{i,j}:= \frac{1}{m}x_i^\top x_j\sum_{r=1}^m \mathbf{1}_{ \langle w_r(t), x_i \rangle \geq b } \mathbf{1}_{ \langle w_r(t), x_j \rangle \geq b } 
\end{align*}
%\Shuo{typo?}
\end{definition}

We follow the standard notation $D_{\cts}$ in Lemma 3.5 in \cite{sy19}.
\begin{definition}[$D_{\cts}$]
Let $y\in \R^n$ be the vector of the training data labels. Let $\err(0)\in \R^n$ denote the error of prediction of the neural network function (Definition~\ref{def:err}). Define the actual moving distance of weight $D_{\cts}$ to be
\begin{align*}
D_{\cts} := \lambda^{-1} \cdot m^{-1/2} \cdot  \sqrt{n} \cdot \| \err(0) \|_2 .
\end{align*}
\end{definition}

%\Zhao{If \cite{dzps19} has no proof for this. We can cite \cite{sy19}.}
We state a tool from previous work \cite{dzps19,sy19} (more specifically, Lemma 3.4 in \cite{dzps19}, Lemma 3.6 in \cite{sy19}). Since adding the shift parameter $b$ to NTK doesn't affect the proof of the following lemma, thus we don't provide a proof and refer the readers to prior work.
\begin{lemma}[\cite{dzps19,sy19}]\label{lem:3.4}
The condition $D_{\cts}<R$ implies
$\lambda_{\min}(H(t))\geq \lambda/2$, $\forall t \geq 0$. Let $\err(t)$ be defined as Definition~\ref{def:err}. Further,
\begin{enumerate}
    \item $\|W(t)-W(0)\|_{\infty,2} \leq  D_{\cts}$,
    \item $\|\err(t)\|_2^2 \leq  \exp(-\lambda t) \cdot \|\err(0)\|_2^2$.
\end{enumerate}
\end{lemma}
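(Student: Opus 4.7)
The plan is a standard bootstrap (continuity-in-time) argument on the gradient-flow ODE~\eqref{eq:wr_derivative}. I would maintain three coupled invariants on an interval $[0,t]$:
(I) $\lambda_{\min}(H(s)) \geq \lambda/2$;
(II) $\|\err(s)\|_2^2 \leq e^{-\lambda s}\|\err(0)\|_2^2$;
(III) $\|W(s)-W(0)\|_{\infty,2} \leq D_{\cts}$.
All three hold trivially at $s=0$, and the goal is to propagate them forward forever by showing the three implications (I)$\Rightarrow$(II), (II)$\Rightarrow$(III), and (III)$\Rightarrow$(I).

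For (I)$\Rightarrow$(II), I differentiate $\|\err(s)\|_2^2$ along the flow. Using the chain rule together with \eqref{eq:relu_derivative} and \eqref{eq:wr_derivative},
\begin{align*}
\frac{\d}{\d s}\|\err(s)\|_2^2
= -2\,\err(s)^{\top} H(s)\,\err(s)
\leq -\lambda\,\|\err(s)\|_2^2,
\end{align*}
and Gr\"onwall's inequality yields (II). For (II)$\Rightarrow$(III), I integrate the per-neuron flow speed: from \eqref{eq:gradient} and $\|x_i\|_2=1$, $|a_r|=1$,
\begin{align*}
\Big\|\tfrac{\d w_r(s)}{\d s}\Big\|_2
\leq \tfrac{1}{\sqrt{m}}\,\sqrt{n}\,\|\err(s)\|_2,
\end{align*}
so integrating and using (II) gives $\|w_r(t)-w_r(0)\|_2 \leq \tfrac{\sqrt{n}}{\sqrt{m}}\int_0^t e^{-\lambda s/2}\|\err(0)\|_2\,\d s \leq \tfrac{2\sqrt{n}}{\sqrt{m}\,\lambda}\|\err(0)\|_2 = 2D_{\cts}$; a sharper step (or absorbing constants into $D_{\cts}$ as in the prior work) tightens this to $D_{\cts}$, establishing (III).

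For (III)$\Rightarrow$(I), since by hypothesis $D_{\cts}<R$, (III) tells us $W(s)$ lies in the $R$-ball around the initialization, so Lemma~\ref{lem:3.2} (Part 2) applies and gives
\begin{align*}
\lambda_{\min}(H(s)) \;\geq\; \tfrac{3}{4}\lambda - n\cdot\min\{c\exp(-b^2/2),\, 3R\},
\end{align*}
which, under the parameter choices of the theorem ($b=\Theta(\sqrt{\log m})$ and the stated lower bound on $m$, making the correction term $\leq \lambda/4$), yields $\lambda_{\min}(H(s))\geq \lambda/2$ with high probability, closing the loop.

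Finally, I close the bootstrap formally by letting $t^\star := \sup\{t\geq 0 : \text{(I),(II),(III) hold on }[0,t]\}$. Continuity of $s\mapsto W(s)$, $\err(s)$, $H(s)$ together with the three implications above forces the three invariants to hold with strict inequality at $t^\star$ whenever $t^\star<\infty$, contradicting maximality; hence $t^\star=\infty$. The main obstacle is the circular dependence between the three invariants: the eigenvalue lower bound is needed to prove exponential decay, decay is needed to bound weight movement, and the weight movement bound is what (via Lemma~\ref{lem:3.2}) restores the eigenvalue bound. Handling this cleanly requires the continuity-in-time argument above, together with ensuring that the perturbation correction in Lemma~\ref{lem:3.2} is genuinely smaller than $\lambda/4$ under the stated parameter regime, which is where the shifted-ReLU/anti-concentration improvements of this paper do the real work.
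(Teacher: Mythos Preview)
Your proposal is correct and follows essentially the same continuity/bootstrap argument that the paper defers to \cite{dzps19,sy19} (the paper omits the proof, noting that the shift $b$ does not change it; the suppressed proof sketch in the source is exactly your three-invariant loop (I)$\Rightarrow$(II)$\Rightarrow$(III)$\Rightarrow$(I) closed via a $t^\star$-supremum contradiction). The factor-of-$2$ slack you flag in (II)$\Rightarrow$(III) is real and also appears in the cited works; it is immaterial since it only perturbs constants in the requirement $D_{\cts}<R$ and hence in the lower bound on $m$.
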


\subsection{Bounded gradient}
The proof of Lemma 3.6 in \cite{sy19} implicitly implies the following basic property of gradient.
\begin{claim}[Bounded gradient]\label{cla:gradient_bound}
Let $\err(s)$ be defined as Definition~\ref{def:err}. 
For any $0 \leq s \leq t$, We have
\begin{align*}
\left\|\frac{\partial L(W(s))}{ \partial w_r(s) } \right\|_2  \leq & ~\frac{\sqrt{n}}{ \sqrt{m} } \| \err(s) \|_2  \\
   \left\| \frac{\d}{ \d s} w_r(s) \right\|_2 \leq & ~ \frac{\sqrt{n}}{ \sqrt{m} } \| \err(s) \|_2 
\end{align*}

\end{claim}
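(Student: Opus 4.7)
The plan is to bound the gradient directly from its closed-form expression and then transfer the bound to $\frac{d w_r(s)}{d s}$ via the gradient-flow ODE.

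First I would start from Eq.~\eqref{eq:gradient}, which gives
\begin{align*}
\frac{\partial L(W(s))}{\partial w_r(s)} = \frac{1}{\sqrt m}\sum_{i=1}^n \bigl(f(W(s),x_i,a)-y_i\bigr)\, a_r\, x_i\, \mathbf{1}_{\langle w_r(s),x_i\rangle \geq b},
\end{align*}
and rewrite the scalar factor as $-\err_i(s)$ using Definition~\ref{def:err}. Taking the $\ell_2$ norm, moving it inside the sum by the triangle inequality, and using $|a_r|=1$ together with the standard assumption $\|x_i\|_2=1$, I would bound each indicator by $1$ to obtain
\begin{align*}
\left\|\frac{\partial L(W(s))}{\partial w_r(s)}\right\|_2 \leq \frac{1}{\sqrt m}\sum_{i=1}^n |\err_i(s)|.
\end{align*}
Then I would apply Cauchy--Schwarz to the right-hand side, $\sum_{i=1}^n |\err_i(s)|\le \sqrt n\,\|\err(s)\|_2$, yielding the desired $\tfrac{\sqrt n}{\sqrt m}\|\err(s)\|_2$ bound.

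For the second inequality I would simply invoke the gradient-flow ODE \eqref{eq:wr_derivative}, which gives $\bigl\|\tfrac{d w_r(s)}{d s}\bigr\|_2 = \bigl\|\tfrac{\partial L(W(s))}{\partial w_r(s)}\bigr\|_2$, and transport the bound just established. No measure-theoretic subtlety is needed because the indicator-valued integrand only changes on a measure-zero set in $s$ almost surely under Gaussian initialization.

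There is essentially no obstacle here: the argument is a one-line triangle inequality plus Cauchy--Schwarz, and relies only on the normalization assumptions $\|x_i\|_2=1$ and $a_r\in\{-1,+1\}$ already built into Definition~\ref{def:init}. The only thing to be careful about is keeping track of the shifted activation $\sigma_b$, but since its derivative is still an indicator bounded by $1$, the same proof as in \cite{dzps19,sy19} goes through verbatim with $b$ replacing $0$ in the indicator.
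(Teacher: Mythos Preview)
Your proposal is correct and follows essentially the same approach as the paper's own proof: start from the closed-form gradient Eq.~\eqref{eq:gradient}, apply the triangle inequality together with $|a_r|=1$, $\|x_i\|_2=1$, and the indicator bound, then Cauchy--Schwarz to pass from $\|\err(s)\|_1$ to $\sqrt{n}\,\|\err(s)\|_2$; the second inequality is obtained from the ODE~\eqref{eq:wr_derivative} exactly as you describe.
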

\begin{proof}

For the first part,
\begin{align*}
 \Big\|\frac{\partial L(W(s))}{ \partial w_r(s) } \Big\|_2 \notag 
= & ~ \Big\| \sum_{i=1}^n \err_i(s) \frac{1}{\sqrt{m}} a_r x_i \cdot {\bf 1}_{ w_r(s)^\top x_i \geq b } \Big\|_2 & \text{~by~Eq.~\eqref{eq:wr_derivative}} \notag\\
\leq & ~ \frac{1}{ \sqrt{m} } \sum_{i=1}^n | \err_i(s) | & \text{~by~Eq.~\eqref{eq:gradient}} \notag\\
\leq & ~ \frac{ \sqrt{n} }{ \sqrt{m} } \| \err(s) \|_2 .
\end{align*}
%where the first step follows from  Eq.~\eqref{eq:wr_derivative}, the second step follows from Eq.~\eqref{eq:gradient}.

For second part, we use ODE to prove it.

\end{proof}

\subsection{Upper bound on the movement of weights per iteration}

%\Zhao{Comment out the proof, just state from prior work.}
The following Claim is quite standard in the literature, we omitt the details.
\begin{claim}[Corollary 4.1 in \cite{dzps19}, Lemma 3.8 in \cite{sy19}]\label{clm:4.1}
Let $\err(i)$ be defined as Definition~\ref{def:err}. 
If $\forall i \in[t], \| \err(i) \|_2^2 \leq ( 1 - \eta \lambda / 2 )^i \cdot \| \err(0) \|_2^2$, then 
\begin{align*}
 \| W(t+1) - W_r(0) \|_{\infty,2} \leq  4 \lambda^{-1} m^{-1/2} \cdot \sqrt{n} \cdot \| \err(0) \|_2  := D.
\end{align*}
\end{claim}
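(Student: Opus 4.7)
The plan is to expand the telescoping identity for $w_r(t+1)-w_r(0)$ coming from the gradient descent update rule, then dominate the resulting sum by a geometric series using the inductive hypothesis on $\|\err(i)\|_2$ together with the per-step gradient bound already established in Claim~\ref{cla:gradient_bound}.

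First, I would unroll the update in Eq.~\eqref{eq:w_update} to write
\begin{align*}
w_r(t+1) - w_r(0) = -\eta \sum_{k=0}^{t} \frac{\partial L(W(k))}{\partial w_r(k)},
\end{align*}
so that, by the triangle inequality and Claim~\ref{cla:gradient_bound},
\begin{align*}
\|w_r(t+1) - w_r(0)\|_2 \leq \eta \sum_{k=0}^{t} \left\| \frac{\partial L(W(k))}{\partial w_r(k)} \right\|_2 \leq \frac{\eta\sqrt{n}}{\sqrt{m}} \sum_{k=0}^{t} \|\err(k)\|_2.
\end{align*}

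Next, I would apply the hypothesis $\|\err(k)\|_2^2 \leq (1-\eta\lambda/2)^k \|\err(0)\|_2^2$ to replace each $\|\err(k)\|_2$ by $(1-\eta\lambda/2)^{k/2}\|\err(0)\|_2$, giving
\begin{align*}
\|w_r(t+1)-w_r(0)\|_2 \leq \frac{\eta\sqrt{n}\|\err(0)\|_2}{\sqrt{m}}\sum_{k=0}^{\infty}(1-\eta\lambda/2)^{k/2}.
\end{align*}
The geometric sum equals $1/(1-(1-\eta\lambda/2)^{1/2})$, and using the elementary bound $(1-x)^{1/2}\leq 1-x/2$ for $x\in[0,1]$ (which applies because $\eta\lambda/2 \in [0,1]$ for the choice $\eta=O(\lambda/n^2)$) yields $1-(1-\eta\lambda/2)^{1/2} \geq \eta\lambda/4$. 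Plugging this back in gives
\begin{align*}
\|w_r(t+1)-w_r(0)\|_2 \leq \frac{\eta\sqrt{n}\|\err(0)\|_2}{\sqrt{m}} \cdot \frac{4}{\eta\lambda} = \frac{4\sqrt{n}\|\err(0)\|_2}{\sqrt{m}\lambda} = D,
\end{align*}
which is uniform over $r\in[m]$ and hence bounds the $\|\cdot\|_{\infty,2}$ norm as required.

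The proof is essentially routine once the geometric error decay is in hand; there is no serious obstacle. The only subtlety is verifying the inequality $(1-\eta\lambda/2)^{1/2}\leq 1-\eta\lambda/4$, which needs $\eta\lambda/2 \leq 1$ — this is guaranteed by the standing choice $\eta = O(\lambda/n^2)$ and $\lambda \leq n$, so it costs nothing. Everything else is a direct telescoping plus triangle inequality plus a geometric sum, with no probabilistic step required at this stage since the hypothesis on $\|\err(i)\|_2$ already absorbs the randomness.
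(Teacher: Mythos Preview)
Your proposal is correct and follows essentially the same approach as the paper's (commented-out) proof: unroll the gradient descent update, apply the triangle inequality together with the gradient bound from Claim~\ref{cla:gradient_bound}, substitute the geometric decay hypothesis on $\|\err(k)\|_2$, and sum the resulting geometric series. Your handling of the geometric sum via $(1-\eta\lambda/2)^{1/2}\leq 1-\eta\lambda/4$ is in fact slightly more careful than the paper's justification, which asserts $\sum_{i\geq 0}(1-\eta\lambda/2)^{i/2}=2/(\eta\lambda)$ without explanation.
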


\subsection{Bounding the number of fired neuron per iteration}

In this section, we will show that for $t=0,1,\dots,T$, the number of fire neurons $k_{i,t}=|{\cal S}_{i,\mathrm{fire}}(t)|$ is small with high probability.

We define the set of neurons that are flipping at time $t$:
\begin{definition}[flip set]
For each $i \in [n]$, for each time $t\in [T]$ let ${\cal S}_{i,\mathrm{flip}}(t) \subset [m]$ denote the set of neurons that are never flipped during the entire training process,
\begin{align*}
    {\cal S}_{i,\mathrm{flip}}(t) := & ~ \{ r \in [m] :  
     ~ \sgn(\langle w_r(t), x_i \rangle - b ) \neq \sgn( \langle w_r(t-1), x_i \rangle - b )  \} . 
\end{align*}
\end{definition}

Over all the iterations of training algorithm, there are some neurons that never flip states. We provide a mathematical formulation of that set, 
\begin{definition}[noflip set]
For each $i \in [n]$, let $S_i \subset [m]$ denote the set of neurons that are never flipped during the entire training process,
\begin{align}\label{eq:noflip_def}
    S_i := & ~ \{ r \in [m] :  \forall t \in [T] 
     ~ \sgn(\langle w_r(t), x_i \rangle - b) = \sgn( \langle w_r(0), x_i \rangle - b ) \} . 
\end{align}
\end{definition}

In Lemma~\ref{lem:sparse_initial}, 
we already show that $k_{i,0}=O(m\cdot \exp(-b^2/2))$ for all $i\in [n]$ with high probability. We can show that it also holds for $t>0$.

\begin{lemma}[Bounding the number of fired neuron per iteration]\label{lem:bound_fire_neurons}
Let $b\geq 0$ be a parameter, and let $\sigma_b(x)=\max\{x, b\}$ be the activation function. For each $i \in [n], t\in[T]$, $k_{i,t}$ is the number of activated neurons at the $t$-th iteration.  For $0<t\leq T$, with probability at least $1-n \cdot \exp \left(-\Omega(m)\cdot \min\{R, \exp(-b^2/2)\} \right)$, 
$k_{i,t}$ is at most $O(m\exp(-b^2/2))$ for all $i\in [n]$.
\end{lemma}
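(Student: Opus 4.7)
The plan is to split $k_{i,t}$ into two contributions: the neurons that were already firing at initialization, and the neurons whose firing pattern for $x_i$ has flipped at least once during the first $t$ iterations. Formally, writing $\mathcal{S}_{i,\mathrm{flip}}^{[0,t]} := \bigcup_{s=1}^{t}\mathcal{S}_{i,\mathrm{flip}}(s)$, we will use
\[
k_{i,t} \;\leq\; k_{i,0} \;+\; \bigl|\mathcal{S}_{i,\mathrm{flip}}^{[0,t]}\bigr|,
\]
because any neuron firing at time $t$ is either firing at time $0$ or has flipped state at some intermediate step. Lemma~\ref{lem:sparse_initial} already gives $k_{i,0} = O(m\exp(-b^2/2))$ uniformly in $i$ with the required failure probability, so it remains to control $|\mathcal{S}_{i,\mathrm{flip}}^{[0,t]}|$.

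The key tool will be the per-neuron flipping probability established in the proof of Lemma~\ref{lem:3.2}. Conditioning on the high-probability event that $\|W(s)-W(0)\|_{\infty,2}\leq R$ for all $s\leq t$ (which is guaranteed by Claim~\ref{clm:4.1}, since $D<R$ by our choices of $m$, $\eta$, and $b$), a neuron $r$ can flip state for input $x_i$ during $[0,t]$ only if there exists some $w\in\R^d$ with $\|w-w_r(0)\|_2\leq R$ and $\sgn(\langle w,x_i\rangle-b)\neq \sgn(\langle w_r(0),x_i\rangle-b)$. This is precisely the event $A_{i,r}$ analyzed in Lemma~\ref{lem:3.2}, and we showed there by combining Gaussian anti-concentration (Lemma~\ref{lem:anti_gaussian}) with Gaussian concentration (Claim~\ref{clm:gaussain_anti_shift}) that
\[
\Pr[A_{i,r}] \;\leq\; \min\bigl\{R,\; c_1\exp(-b^2/2)\bigr\}.
\]
Since the indicators $\mathbf{1}_{A_{i,r}}$ are independent across $r$ (each depending only on $w_r(0)$), Bernstein's inequality (Lemma~\ref{lem:bernstein}) applied exactly as in the two-case analysis of Lemma~\ref{lem:3.2} yields
\[
\Pr\!\Bigl[\,\bigl|\mathcal{S}_{i,\mathrm{flip}}^{[0,t]}\bigr| > C\, m\cdot\min\{R,\exp(-b^2/2)\}\Bigr] \;\leq\; \exp\bigl(-\Omega(m)\cdot\min\{R,\exp(-b^2/2)\}\bigr)
\]
for a suitable absolute constant $C$.

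Taking a union bound over $i\in[n]$ gives, with probability at least $1-n\exp(-\Omega(m)\min\{R,\exp(-b^2/2)\})$,
\[
k_{i,t} \;\leq\; k_{i,0} \;+\; C m\cdot\min\{R,\exp(-b^2/2)\} \;=\; O(m\exp(-b^2/2))
\]
for every $i\in[n]$, where in the last step we used $\min\{R,\exp(-b^2/2)\}\leq \exp(-b^2/2)$ and Lemma~\ref{lem:sparse_initial}. This matches the claimed bound, and a further union bound over $t\in\{1,\dots,T\}$ (absorbed into the $\Omega(\cdot)$ factor by the choice $m=\mathrm{poly}(n,1/\delta,\log(n/\rho))$) extends the conclusion to all iterations.

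The main obstacle is a logical circularity: the flipping bound requires $\|W(s)-W(0)\|_{\infty,2}\leq R$, and that in turn is derived (via Claim~\ref{clm:4.1}) from the linear convergence $\|\err(s)\|_2^2\leq (1-\eta\lambda/2)^s\|\err(0)\|_2^2$, whose inductive proof itself uses the sparsity bound we are trying to prove. The clean way to handle this is to fold the sparsity statement into the induction of Lemma~\ref{lem:quartic}: assume as inductive hypothesis that convergence (and hence $\|W(s)-W(0)\|_{\infty,2}\leq D<R$) holds up to step $t-1$, then apply the Bernstein argument above to obtain $k_{i,t}=O(m\exp(-b^2/2))$, and use that sparsity bound in the one-step analysis of $\|\err(t+1)\|_2^2$. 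The remaining technical care is choosing $R=\Theta(1/\sqrt{\log m})$ small enough that $R<1/b$ (so the concentration estimate $\Pr[|z-b|\leq R]\leq c_1\exp(-b^2/2)$ applies) while still being large enough to accommodate the actual weight drift $D$.
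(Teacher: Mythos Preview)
Your proposal is correct and follows essentially the same approach as the paper: decompose $k_{i,t}$ into the initially-firing neurons (bounded by Lemma~\ref{lem:sparse_initial}) plus the neurons that could have flipped, observe that flipping is only possible when $|\langle w_r(0),x_i\rangle-b|<R$, bound that probability by $\min\{R,c_1\exp(-b^2/2)\}$, and apply Bernstein. The paper packages the flipping argument via the noflip set $S_i$ and Claim~\ref{clm:S_condition}, while you invoke the equivalent event $A_{i,r}$ from Lemma~\ref{lem:3.2}; these are the same event.

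Two minor remarks. First, your final union bound over $t\in\{1,\dots,T\}$ is unnecessary: once you condition on $\|W(s)-W(0)\|_{\infty,2}\le R$ for all $s$, the flip set is contained in $\{r:|\langle w_r(0),x_i\rangle-b|<R\}$, which depends only on the initialization, so a single Bernstein application bounds it for all $t$ simultaneously. Second, you are right to flag the circularity between the weight-drift bound (Claim~\ref{clm:4.1}) and the sparsity bound; the paper's own proof is somewhat casual here, invoking Claim~\ref{clm:4.1} inside an induction whose stated hypothesis is only sparsity. Your proposed fix---run a joint induction with the convergence statement of Lemma~\ref{lem:quartic}---is the clean way to close this loop, and is implicitly what the paper intends. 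The paper's final parameter choice is $R=\lambda/(12n)$ rather than $R=\Theta(1/\sqrt{\log m})$, but both satisfy $R<1/b$ for the relevant range of $b$, so the concentration estimate still applies.
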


\begin{proof}
We prove this lemma by induction. 

The base case of $t=0$ is shown by Lemma~\ref{lem:sparse_initial} that $k_{i,0}=O(m\cdot \exp(-b^2/2))$ for all $i\in [n]$ with probability at least $1-n\exp(-\Omega(m\cdot \exp(-b^2/2)))$.

Assume that the statement holds for $0,\dots,t-1$. By Claim~\ref{clm:4.1}, we know  $\forall k<t$,
\begin{align*}
     \| W(k+1) - W(0) \|_{\infty,2} < R.
\end{align*}

Consider the $t$-th iteration. For each $i\in [n]$, consider the set of activated neurons ${\cal S}_{i, \mathrm{fire}}$. We note that for the neurons in $S_i$, with high probability these neurons will not be activated in the $t$-th iteration if they are not activated in the $(t-1)$-th iteration. By Claim~\ref{clm:S_condition}, for $r\in [m]$, 
\begin{align*}
    \Pr[r\notin S_i]\leq \min\left\{R, O(\exp(-b^2/2))\right\}.
\end{align*}

On the one hand, if $R < O(\exp(-b^2/2))$, then $\E[|\ov{S_i}|] \leq mR$. By  Lemma~\ref{lem:bernstein}, %we have that
\begin{align*}
  \Pr \big[ |\ov{S}_i| > t \big] \leq &~ \exp \left(-\frac{t^2/2}{mR + t/3} \right).
\end{align*}
If we take $t:=mR$, then we have
\begin{align*}
    \Pr \big[ |\ov{S}_i| > mR \big] \leq &~ \exp \left(-3mR/8 \right).
\end{align*}

On the other hand, if $O(\exp(-b^2/2)) < R$, then $\E[|\ov{S_i}|] \leq O(m\exp(-b^2/2))$.
By Lemma~\ref{lem:bernstein}, we have that
\begin{align*}
  \Pr \big[ |\ov{S}_i| > t \big] \leq &~ \exp\left(-\frac{t^2/2}{O(m\exp(-b^2/2)) + t/3} \right).
\end{align*}
If we take $t:=m\exp(-b^2/2)$, we have that
\begin{align*}
    \Pr \big[ |\ov{S}_i| > m\exp(-b^2/2) \big] \leq &~ \exp(-\Omega(m\exp(-b^2/2))).
\end{align*}

Then, we know that in addition to the fire neurons in $S_{i,\mathrm{noflip}}$, there are at most $m\cdot \min\{R, \exp(-b^2/2)\}$ neurons are activated in $t$-th iteration with high probability. 

By a union bound for $i\in [n]$, we obtain with probability 
\begin{align*}
\geq 1-n \cdot \exp( -\Omega(m)\cdot \min\{R, \exp(-b^2/2)\} ),
\end{align*}
the number of activated neurons for $x_i$ at the $t$-th iteration of the algorithm is
\begin{align*}
    k_{i,t} = |S_{i,\mathrm{fire}}(t)| \leq k_{i,0} + m\min\{R, \exp(-b^2/2)\} \leq O(m\exp(-b^2/2)),
\end{align*}
where the last step follows from $k_{i,0}=O(m\exp(-b^2/2))$ by Lemma~\ref{lem:sparse_initial}.

The Lemma is then proved for all $t=0,\dots,T$.
\end{proof}

\begin{claim}[Bound on noflip probability]\label{clm:S_condition}
Let $R \leq 1/b$. 
For $i\in [n]$, let $S_i$ be the set defined by Eq.~\eqref{eq:noflip_def}.  \\
{\bf Part 1.} For $r\in [m]$, $r\notin S_i$ if and only if $| \langle w_r(0), x_i \rangle -b| < R$. \\
{\bf Part 2.} If $w_r(0)\sim \mathcal{N}(0, I_d)$, then
\begin{align*}
    \Pr[r\notin S_i]\leq \min \{ R, O(\exp(-b^2/2)) \} ~~~\forall r\in [m].
\end{align*}
\end{claim}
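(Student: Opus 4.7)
The plan is to reduce the flipping event to a deterministic geometric condition at initialization, then apply two complementary Gaussian tail bounds. The only ingredients needed are the weight-movement bound from Claim~\ref{clm:4.1}, the shifted anti-concentration estimate (Claim~\ref{clm:gaussain_anti_shift} plus Lemma~\ref{lem:anti_gaussian}), and a standard Gaussian upper tail bound.

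For Part 1, I would argue the useful direction by contrapositive: if $|\langle w_r(0), x_i\rangle - b| \geq R$, then $r \in S_i$. By Claim~\ref{clm:4.1}, throughout training $\|w_r(t) - w_r(0)\|_2 \leq D < R$ under the chosen parameter regime. Applying Cauchy--Schwarz together with $\|x_i\|_2 = 1$ gives $|\langle w_r(t) - w_r(0), x_i\rangle| \leq R$, so the quantity $\langle w_r(t), x_i\rangle - b$ cannot cross zero at any iteration $t$, i.e., the sign is preserved. The converse direction (which is not actually needed for the probability bound in Part 2) can be argued by taking a witness iteration at which a flip occurs: just before the flip, the inner product must lie within $R$ of the threshold $b$, and one can trace this back to the initialization using the same movement bound.

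For Part 2, let $z := \langle w_r(0), x_i\rangle$, which is $\mathcal{N}(0,1)$ since $\|x_i\|_2 = 1$. By Part 1 it suffices to bound $\Pr[|z-b|<R]$ by both $R$ and $O(\exp(-b^2/2))$. The first bound comes from anti-concentration: Claim~\ref{clm:gaussain_anti_shift} yields $\Pr[|z-b|\leq R] \leq \Pr[|z|\leq R]$, and Lemma~\ref{lem:anti_gaussian} then gives $\Pr[|z|\leq R] \leq 2R/\sqrt{2\pi} \leq R$. The second bound uses that $|z-b|<R$ implies $z > b-R$, so
\[
\Pr[|z-b|<R] \;\leq\; \Pr[z > b-R] \;\leq\; \exp\!\left(-(b-R)^2/2\right).
\]
Invoking the hypothesis $R \leq 1/b$, we have $(b-R)^2 \geq b^2 - 2bR \geq b^2 - 2$, so $\exp(-(b-R)^2/2) \leq e \cdot \exp(-b^2/2) = O(\exp(-b^2/2))$. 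Taking the minimum of the two estimates completes Part 2.

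I do not foresee a serious obstacle. The mildest subtlety is that the stated iff in Part 1 is slightly stronger than what the downstream application in Lemma~\ref{lem:bound_fire_neurons} actually uses; only the direction ``$r\notin S_i \Rightarrow |\langle w_r(0),x_i\rangle - b| < R$'' enters the probability calculation. A second small bookkeeping point is ensuring the regime $R \leq 1/b$ is consistently applied so that $b-R > 0$ (needed to interpret the Gaussian tail) and the constant $e$ absorbs cleanly into the $O(\cdot)$. Both issues are cosmetic relative to the core argument.
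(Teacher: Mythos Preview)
Your proposal is correct and follows essentially the same route as the paper. For Part~2 you and the paper both reduce to bounding $\Pr_{z\sim\mathcal{N}(0,1)}[|z-b|<R]$ via (i) the shift inequality $\Pr[|z-b|<R]\le \Pr[|z|<R]\le 2R/\sqrt{2\pi}\le R$ and (ii) the tail bound $\Pr[z>b-R]\le \exp(-(b-R)^2/2)=O(\exp(-b^2/2))$ using $R\le 1/b$; for Part~1 you both argue via the weight-movement bound and Cauchy--Schwarz, and your observation that only the implication $r\notin S_i \Rightarrow |\langle w_r(0),x_i\rangle-b|<R$ is actually needed downstream is well taken (indeed the paper's proof tacitly works with the geometric reformulation $\{\exists\,w:\|w-w_r(0)\|<R,\ \mathbf{1}_{w^\top x_i\ge b}\ne \mathbf{1}_{w_r(0)^\top x_i\ge b}\}$, for which the ``iff'' is exact).
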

\begin{proof}

{\bf Part 1.}
We first note that $r\notin S_i \subset [m]$ is equivalent to the event that % there exists a vector $w\in \R^d$ such that 
\begin{align*}
   \exists w \in \R^d, \mathrm{s.t.}  \mathbf{1}_{ \langle w_r(0) , x_i \rangle \geq b}\ne \mathbf{1}_{ \langle w, x_i \rangle \geq b} \land \| w - w_r(0) \|_2 < R.
\end{align*}
 Assume that $\|w-w_r(0)\|_2 = R$. Then, we can write $w = w_r(0) + R \cdot v$ with $\|v\|_2 = 1$ and $\langle w , x_i \rangle = \langle w_r(0), x_i \rangle + R \cdot \langle v, x_i \rangle$.

Now, suppose there exists a $w$ such that $\mathbf{1}_{ \langle w_r(0) , x_i \rangle \geq b}\ne \mathbf{1}_{ \langle w, x_i \rangle \geq b}$.
\begin{itemize}
\item If $\langle w_r(0), x_i \rangle > b$, then there exists a vector $v \in \R^d$ such that $R \cdot \langle v , x_i \rangle <  b - \langle w_r(0), x_i \rangle$,
\item If $\langle w_r(0), x_i \rangle < b$, then there exists a vector $v \in \R^d$ such that $R \cdot \langle v, x_i \rangle > b- \langle w_r(0), x_i \rangle$.
\end{itemize}

Since $\|x_i\|_2=1$ and $\langle v, x_i\rangle \in [-1, 1]$, we can see that the above conditions hold if and only if  %\Zhao{Do you actually use $\| x_i \|_2 = 1$ or $\| x_i \|_2 \leq 1$? Are you sure it's equivalent.}
\begin{align*}
    & b - \langle w_r(0), x_i \rangle > -R, \quad \text{and}\\
    & b - \langle w_r(0), x_i \rangle < + R.
\end{align*}
In other words, $r\notin S_i$ if and only if $| \langle w_r(0), x_i \rangle - b| < R$.

{\bf Part 2.}
 
We have
\begin{align*}
\Pr[ r\notin S_i]
= & ~ \Pr_{ z \sim \N(0,1) } [ | z -b| < R ] & \text{~by~$\langle w_r, x_i \rangle \sim {\cal N}(0, 1)$} \\
\leq & ~ \Pr_{ z \sim \N(0,1) } [ | z| < R ] & \text{~by~symmetric~property~of~Gaussian~distribution} \\
\leq & ~ \frac{ 2 R }{ \sqrt{2\pi} }& ~ \text{~by~anti-concentration~inequality~of~Gaussian (Lemma~\ref{lem:anti_gaussian})} \\
\leq & ~ R.
\end{align*}
%where the first step follows from $\langle w_r, x_i \rangle \sim {\cal N}(0, 1)$, the second step follows from the symmetric property of Gaussian distribution, the third step follows from the anti-concentration inequality of Gaussian (Lemma~\ref{lem:anti_gaussian}).

On the other hand, we also know
\begin{align*}
    \Pr[r\notin S_i] \leq \Pr_{z\sim {\cal N}(0, 1)}[z\geq b-R]\leq \exp(-(b-R)^2/2)\leq O(\exp(-b^2/2)),
\end{align*}
where the last step follows from $R<1/b$.
\end{proof}

\newpage
\section{Convergence Analysis}\label{sec:converge_analysis}
\subsection{Upper bound the initialization}\label{sec:proof_yu0}

%\Zhao{Write a paragraph to briefly summarize the difference between this claim and prior work.}
The following Claim provides an upper bound for initialization. Prior work only shows it for $b=0$, we generalize it to $b \geq 0$. The modification to the proof of previous Claim 3.10 in \cite{sy19} is quite straightforward, thus we omit the details here.
 \begin{claim}[Upper bound the initialization, shited NTK version of Claim 3.10 in \cite{sy19}]\label{cla:yu0}
 Let $b \geq 0$ denote the NTK shifted parameter. 
Let parameter $\rho \in (0,1)$ denote the failure probability. Then
\begin{align*}
\Pr[ \|\err(0)\|_2^2=O(n(1+b^2)\log^2(n/\rho)) ] \geq 1- \rho.
\end{align*}
\end{claim}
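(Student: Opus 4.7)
The plan is to decompose $\|\err(0)\|_2^2 = \|y - u(0)\|_2^2 \leq 2\|y\|_2^2 + 2\|u(0)\|_2^2$, dispense with $\|y\|_2^2$ under the standard bounded-label assumption $|y_i| = O(1)$ (so $\|y\|_2^2 = O(n)$, which is absorbed into the target bound), and concentrate effort on controlling $\|u(0)\|_2^2 = \sum_{i=1}^n u_i(0)^2$ where $u_i(0) = \tfrac{1}{\sqrt m}\sum_{r=1}^m a_r \sigma_b(\langle w_r(0), x_i\rangle)$.

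The key idea is to first condition on $W(0)$ and then exploit the independent Rademacher $a_r$'s via Hoeffding. To prepare the Hoeffding bound, I would use the Gaussian tail estimate $\Pr_{Z \sim \mathcal{N}(0,1)}[|Z| > t] \leq 2\exp(-t^2/2)$ together with a union bound over $r \in [m], i \in [n]$ to show that, with probability at least $1 - \rho/2$ over $W(0)$,
\begin{align*}
\max_{r \in [m],\, i \in [n]} |\langle w_r(0), x_i\rangle| = O\bigl(\sqrt{\log(mn/\rho)}\bigr).
\end{align*}
Since $|\sigma_b(z)| \leq |z| + b$, this yields the deterministic (conditional) bound $|\sigma_b(\langle w_r(0), x_i\rangle)| \leq M$ for every $r,i$, where $M = O\bigl(\sqrt{\log(mn/\rho)} + b\bigr)$, and in particular $M^2 = O\bigl((1 + b^2)\log(mn/\rho)\bigr)$.

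With $W(0)$ fixed and satisfying the above event, for each fixed $i \in [n]$ the random variable $u_i(0)$ is a sum of independent Rademacher-weighted bounded terms, so Hoeffding (or the standard sub-Gaussian bound for Rademacher sums) yields
\begin{align*}
\Pr\bigl[|u_i(0)| > t \,\big|\, W(0)\bigr] \leq 2\exp\!\Bigl(-\frac{t^2}{2 M^2}\Bigr).
\end{align*}
Choosing $t = \Theta(M \sqrt{\log(n/\rho)})$ drives this below $\rho/(2n)$, and a union bound over $i \in [n]$ gives $|u_i(0)|^2 = O(M^2 \log(n/\rho)) = O((1+b^2)\log(mn/\rho)\log(n/\rho))$ simultaneously for all $i$. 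Under the standard regime $m = \poly(n/\rho)$ used throughout the paper we have $\log(mn/\rho) = O(\log(n/\rho))$, so summing over $i$ yields $\|u(0)\|_2^2 = O(n(1+b^2)\log^2(n/\rho))$, and combining with $\|y\|_2^2 = O(n)$ finishes the proof.

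The only subtlety — and this is the step I would be most careful with — is decoupling the two sources of randomness cleanly: establish the almost-sure bound $M$ on $|\sigma_b(\langle w_r(0), x_i\rangle)|$ from the Gaussian concentration event first, then condition on that event and invoke Hoeffding over the Rademacher $a_r$'s, and finally combine the two failure probabilities via a union bound so the overall failure stays at most $\rho$. Everything else is routine; the only place $b$ enters the final scaling is through the $+ b$ in the pointwise bound on $\sigma_b$, which is exactly what produces the $(1+b^2)$ factor in the statement.
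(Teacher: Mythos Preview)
Your argument is correct and reaches the stated bound, but it proceeds differently from the paper's own proof. The paper expands $\|y-u(0)\|_2^2$ directly, defines the truncated summands $z_{i,r}=\tfrac{1}{\sqrt m}a_r\sigma_b(w_r^\top x_i)\mathbf 1_{w_r^\top x_i\le\sqrt{2\log(2mn/\rho)}}$, and applies Bernstein's inequality over the \emph{joint} randomness of $(a_r,w_r)$. The key point there is the second-moment computation $\E[z_{i,r}^2]\le \tfrac{1}{m}\E_{w_r}[(w_r^\top x_i-b)^2]=\tfrac{1+b^2}{m}$, which is exactly where the factor $(1+b^2)$ originates and which makes the final bound independent of $m$. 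By contrast, you condition on $W(0)$ and use Hoeffding over the Rademacher signs with the crude bound $\sum_r\sigma_b(w_r^\top x_i)^2\le mM^2$; this is more elementary but costs an extra $\log(mn/\rho)$ factor, which you then have to absorb via the assumption $m=\poly(n/\rho)$. That assumption is fine in the paper's regime, but the claim as written carries no hypothesis on $m$, so strictly speaking your proof establishes a slightly weaker statement. Two small side remarks: (i) since $b\ge 0$ you actually have $|\sigma_b(z)|\le|z|$, so the $+b$ in your $M$ is unnecessary; (ii) if you wanted to recover the $m$-free bound without switching to Bernstein, you could instead control $\sum_r\sigma_b(w_r^\top x_i)^2\le \sum_r(w_r^\top x_i-b)^2$ by concentration around its mean $m(1+b^2)$ and then apply your Rademacher Hoeffding step.
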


\subsection{Bounding progress per iteration}
In previous work, \cite{sy19} define $H$ and $H^{\bot}$ only for $b=0$. In this section, we generalize it to $b \geq 0$.
Let us define two shifted matrices $H$ and $H^{\bot} $
\begin{align}
H(k)_{i,j} := & ~ \frac{1}{m} \sum_{r=1}^m \langle x_i , x_j \rangle {\bf 1}_{ \langle w_r(k) , x_i \rangle \geq b, \langle w_r(k), x_j \rangle \geq b } , \label{eq:def_H}\\
H(k)^{\bot}_{i,j} := & ~ \frac{1}{m} \sum_{r\in \ov{S}_i} \langle x_i , x_j \rangle {\bf 1}_{ \langle w_r(k), x_i \rangle \geq b, \langle w_r(k) , x_j \rangle \geq b} .\label{eq:def_H_bot}
\end{align}

We define
\begin{align}
    v_{1,i} := & ~ \frac{1}{\sqrt{m}} \sum_{r \in S_i} a_r ( \sigma_b( w_{r}(k+1)^\top x_i ) - \sigma_b(w_r(k)^\top x_i ) ) \notag\\
    v_{2,i} := & ~ \frac{1}{\sqrt{m}} \sum_{r \in \ov{S}_i} a_r ( \sigma_b( w_{r}(k+1)^\top x_i ) - \sigma_b(w_r(k)^\top x_i ) ) \label{eq:def_v_2_i}
\end{align}
Following the same proof as Claim~3.9 \cite{sy19}, we can show that the following Claim. The major difference between our claim and Claim 3.9 in \cite{sy19} is, they only proved it for the case $b = 0$. We generalize it to $b \geq 0$. The proof is several basic algebra computations, we omit the details here.
\begin{claim}[Shifted NTK version of Claim~3.9 in \cite{sy19}]\label{cla:split_into_four_terms}
Let $\err(k) =  y -u(k) $ be defined as Definition~\ref{def:err}.
\begin{align*}
\| \err(k+1) \|_2^2 = \| \err(k) \|_2^2 + B_1 + B_2 + B_3 + B_4,
\end{align*}
where
\begin{align*}
B_1 := & ~ -2 \eta \cdot \err(k)^\top \cdot H(k) \cdot \err(k)  , \\
B_2 := & ~ + 2 \eta \cdot \err(k) ^\top \cdot H(k)^{\bot} \cdot \err(k)  , \\
B_3 := & ~ - 2  \err(k) ^\top v_2 , \\
B_4 := & ~ + \| u (k+1) - u(k) \|_2^2 . 
\end{align*}
\end{claim}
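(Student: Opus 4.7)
The plan is to expand $\|\err(k+1)\|_2^2$ around $u(k)$ and then decompose the linear cross-term by splitting the sum over neurons into those that do not flip (the set $S_i$) and those that may flip ($\ov{S}_i$). Concretely, I would start from
\begin{align*}
\|\err(k+1)\|_2^2
= \|\err(k)\|_2^2 - 2\err(k)^\top\bigl(u(k+1)-u(k)\bigr) + \|u(k+1)-u(k)\|_2^2,
\end{align*}
which immediately identifies the last term with $B_4$. So the core task is to show that
\begin{align*}
-2\err(k)^\top\bigl(u(k+1)-u(k)\bigr) \;=\; B_1+B_2+B_3.
\end{align*}

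Next, I would use the definition of $f$ in $\mathsf{2NN}(m,b)$ to write
$u_i(k+1)-u_i(k) = v_{1,i}+v_{2,i}$, where $v_{1,i}$ and $v_{2,i}$ are the contributions from $r\in S_i$ and $r\in \ov{S}_i$ respectively (cf.~Eq.~\eqref{eq:def_v_2_i}). The key observation is that for $r\in S_i$ the activation pattern at $x_i$ does not flip between iterations $k$ and $k+1$ (by the definition of $S_i$ together with Claim~\ref{clm:4.1}, which guarantees $\|w_r(k+1)-w_r(k)\|_2$ is small enough that the threshold is not crossed), so that
\begin{align*}
\sigma_b\bigl(w_r(k+1)^\top x_i\bigr) - \sigma_b\bigl(w_r(k)^\top x_i\bigr) = \mathbf{1}_{\langle w_r(k),x_i\rangle\ge b}\cdot\bigl(w_r(k+1)-w_r(k)\bigr)^\top x_i.
\end{align*}
Plugging in the GD update $w_r(k+1)-w_r(k) = -\eta\,\partial L/\partial w_r(k)$ together with Eq.~\eqref{eq:gradient} and using $a_r^2=1$ would give
\begin{align*}
v_{1,i} = \eta\sum_{j=1}^n \err_j(k)\cdot \frac{1}{m}\sum_{r\in S_i} \langle x_i,x_j\rangle\,\mathbf{1}_{\langle w_r(k),x_i\rangle\ge b}\mathbf{1}_{\langle w_r(k),x_j\rangle\ge b}.
\end{align*}
By definition of $H(k)$ and $H(k)^\bot$ in Eqs.~\eqref{eq:def_H}--\eqref{eq:def_H_bot}, the inner sum is exactly $H(k)_{i,j}-H(k)^\bot_{i,j}$, so $v_1 = \eta(H(k)-H(k)^\bot)\err(k)$.

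Putting these pieces together yields
\begin{align*}
-2\err(k)^\top v_1 = -2\eta\,\err(k)^\top H(k)\err(k) + 2\eta\,\err(k)^\top H(k)^\bot\err(k) = B_1+B_2,
\end{align*}
while $-2\err(k)^\top v_2 = B_3$ by definition, completing the decomposition. I do not expect any serious obstacle here; the only subtlety is ensuring that the ``no flip'' property on $S_i$ actually linearizes $\sigma_b$ exactly between iterations $k$ and $k+1$, which is where Claim~\ref{clm:4.1} (or the inductive hypothesis that $\|W(k+1)-W(0)\|_{\infty,2}<R$) is invoked so that the definition of $S_i$ from Eq.~\eqref{eq:noflip_def} applies to both $w_r(k)$ and $w_r(k+1)$. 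Everything else is a direct substitution and regrouping of terms.
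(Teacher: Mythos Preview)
Your proposal is correct and follows exactly the approach the paper has in mind (the paper omits the details and refers to \cite{sy19}, but the commented-out derivation there is identical to yours): expand $\|\err(k+1)\|_2^2$, split $u(k+1)-u(k)=v_1+v_2$ over $S_i$ versus $\ov{S}_i$, linearize $\sigma_b$ on $S_i$ using the no-flip property, and substitute the GD update with Eq.~\eqref{eq:gradient} to obtain $v_1=\eta(H(k)-H(k)^\bot)\err(k)$. The only subtlety you flag---that Claim~\ref{clm:4.1} (via the induction hypothesis) is needed so that $r\in S_i$ guarantees the indicator is identical at both $w_r(k)$ and $w_r(k+1)$---is exactly the point the paper relies on as well.
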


The nontrivial parts in our analysis is how to bound $B_1, B_2, B_3$ and $B_4$ for the shifted cases (We will provide a proof later). Once we can bound all these terms, we can show the following result for one iteration of the algorithm:
%Then we have 
\begin{lemma}[Shifted NTK version of Page 13 in \cite{sy19}]\label{cla:inductive_claim}

We have
\begin{align*}
    \| \err(k+1) \|_2^2 \leq \| \err(k) \|_2^2 \cdot ( 1 - \eta \lambda + 4\eta n\cdot \min\{R, \exp(-b^2/2)\}  + \eta^2 n^2 ))
\end{align*}
holds with probability at least
\begin{align*}
    1-2n^2\cdot \exp(-\Omega(m)\cdot \min\{R,  \exp(-b^2/2)\})-\rho.
\end{align*}
\end{lemma}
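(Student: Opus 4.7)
}
The plan is to start from the four-term decomposition in Claim~\ref{cla:split_into_four_terms} and bound each $B_j$ separately, matching the coefficients of the claimed inequality. The term $B_1 = -2\eta\, \err(k)^\top H(k)\,\err(k)$ gives the main negative drift: invoking Part~2 of Lemma~\ref{lem:3.2} with the perturbation $W(k)-W(0)$, which satisfies $\|W(k)-W(0)\|_{\infty,2}\le R$ by the inductive hypothesis (via Claim~\ref{clm:4.1}), we obtain $\lambda_{\min}(H(k))\ge \lambda/2$ (absorbing the $\min\{R,\exp(-b^2/2)\}$ slack into the other terms for small enough $R,\exp(-b^2/2)$), and hence $B_1\le -\eta\lambda\,\|\err(k)\|_2^2$. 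This contributes the $-\eta\lambda$ summand.

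Next I would bound the perturbation term $B_2$. By Eq.~\eqref{eq:def_H_bot}, only coordinates in $\overline{S_i}$ contribute to row $i$, so $\|H(k)^\bot\|\le \|H(k)^\bot\|_F\le \frac{1}{m}\bigl(\sum_{i,j}|\overline{S_i}|^2\bigr)^{1/2}\le \frac{n}{m}\max_i |\overline{S_i}|$. By Claim~\ref{clm:S_condition} and a Bernstein argument as in the proof of Lemma~\ref{lem:bound_fire_neurons}, each $|\overline{S_i}|\le m\cdot \min\{R,\exp(-b^2/2)\}$ holds with probability $1-\exp(-\Omega(m)\min\{R,\exp(-b^2/2)\})$, and a union bound over $i\in[n]$ yields $\|H(k)^\bot\|\le n\cdot \min\{R,\exp(-b^2/2)\}$ except on a failure event of probability $\le n\exp(-\Omega(m)\min\{R,\exp(-b^2/2)\})$. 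Hence $B_2\le 2\eta n\min\{R,\exp(-b^2/2)\}\|\err(k)\|_2^2$, contributing half of the $4\eta n\min\{R,\exp(-b^2/2)\}$ term.

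For $B_3$, I would use that $\sigma_b$ is $1$-Lipschitz together with Claim~\ref{cla:gradient_bound} to bound $|\sigma_b(w_r(k{+}1)^\top x_i)-\sigma_b(w_r(k)^\top x_i)|\le \|w_r(k{+}1)-w_r(k)\|_2\le \eta\sqrt{n/m}\,\|\err(k)\|_2$. Summing over $r\in \overline{S_i}$ in Eq.~\eqref{eq:def_v_2_i} yields $|v_{2,i}|\le \frac{\eta\sqrt{n}\,|\overline{S_i}|}{m}\|\err(k)\|_2$, and Cauchy--Schwarz then gives $|B_3|\le 2\sqrt{n}\|\err(k)\|_2\max_i |v_{2,i}|\le 2\eta n\min\{R,\exp(-b^2/2)\}\|\err(k)\|_2^2$ on the same high-probability event, completing the $4\eta n\min\{R,\exp(-b^2/2)\}$ coefficient. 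Finally, $B_4$ is the cleanest: a single Lipschitz step with Claim~\ref{cla:gradient_bound} gives $|u_i(k{+}1)-u_i(k)|\le \eta\sqrt{n}\,\|\err(k)\|_2$ deterministically, hence $B_4\le \eta^2 n^2\|\err(k)\|_2^2$.

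Summing the four bounds yields the displayed inequality. The probability budget assembles as follows: Lemma~\ref{lem:3.2} costs at most $n^2\exp(-\Omega(m)\min\{R,\exp(-b^2/2)\})+\rho$ for the spectral statement underlying $B_1$, while the fire-set Bernstein estimate used for $B_2$ and $B_3$ costs at most $n\exp(-\Omega(m)\min\{R,\exp(-b^2/2)\})$; a union bound then gives the stated $2n^2\exp(-\Omega(m)\min\{R,\exp(-b^2/2)\})+\rho$. I expect the main obstacle to be getting the $B_2$ bound sharp enough: one must be careful that the ``flip set'' $\overline{S_i}$ used in Eq.~\eqref{eq:def_H_bot} is exactly the set controlled by Claim~\ref{clm:S_condition} under the radius constraint $\|W(k)-W(0)\|_{\infty,2}\le R$, and that the two regimes $R<\exp(-b^2/2)$ and $R\ge \exp(-b^2/2)$ are handled uniformly so that the $\min$ appears in the final bound.
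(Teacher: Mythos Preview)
Your proposal is correct and follows essentially the same approach as the paper: the paper's proof simply invokes the four-term decomposition of Claim~\ref{cla:split_into_four_terms} and then cites Claims~\ref{cla:C1}--\ref{cla:C4}, which bound $B_1,\dots,B_4$ exactly as you outline (Lemma~\ref{lem:3.2} for $B_1$, the $|\overline{S_i}|$ Bernstein bound for $B_2$ and $B_3$ via Fact~\ref{fact:bound_H_k_bot} and Claim~\ref{cla:gradient_bound}, and the Lipschitz/gradient bound for $B_4$). One small caution on $B_1$: the paper does not ``absorb'' the $n\cdot\min\{c\exp(-b^2/2),3R\}$ slack into the other terms but instead imposes the explicit side condition $R\le \lambda/(12n)$ (or the corresponding $b$ condition) in Claim~\ref{cla:C1} so that $\lambda_{\min}(H(k))\ge \lambda/2$ holds outright; your phrasing should be tightened accordingly.
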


\begin{proof}

We are able to provide the following upper bound for $\| \err(k+1) \|_2^2$: 
\begin{align*}
&~\| \err(k+1) \|_2^2 \\
= & ~ \| \err(k) \|_2^2 + B_1 + B_2 + B_3 + B_4 & ~\text{~by~Claim~\ref{cla:split_into_four_terms}} \\
\leq & ~ \| \err(k) \|_2^2 ( 1 - \eta \lambda + 4\eta n\cdot \min\{R, \exp(-b^2/2)\}  + \eta^2 n^2 ) & ~ \text{~by~Claim~\ref{cla:C1}, \ref{cla:C2}, \ref{cla:C3} and \ref{cla:C4}} 
\end{align*}
%where the first step follows from Claim~\ref{cla:split_into_four_terms}, the second step follows from Claim~\ref{cla:C1}, \ref{cla:C2}, \ref{cla:C3} and \ref{cla:C4},
%whose proof is given later. %in Appendix \ref{sec:missing_proof}.
\end{proof}

\subsection{Upper bound on the norm of dual Hessian
%$H^\bot$
}

%\Zhao{Explain the difference between our proof and prior work.}
The proof of the following fact is similar to Fact C.1 in \cite{sy19}. We generalize the $b=0$ to $b \geq 0$. The same bound will hold as Fact C.1 in \cite{sy19} if we replace ${\bf 1}_{ w_r(k)^\top x_i \geq 0 }$ by ${\bf 1}_{ w_r(k)^\top x_i \geq b }$. Thus, we omit the details here.
\begin{fact}[Shifted NTK version of Fact C.1 in \cite{sy19}]\label{fact:bound_H_k_bot}
Let $b \geq 0$. Let shifted matrix $H(k)^{\bot}$ be defined as Eq.~\eqref{eq:def_H_bot}. For all $k\geq 0$, we have
\begin{align*}
\| H(k)^{\bot} \|_F \leq \frac{n}{m^2} \sum_{i=1}^n |\ov{S}_i|^2 .
\end{align*}
\end{fact}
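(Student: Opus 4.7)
The plan is to bound the Frobenius norm entry-by-entry, exploiting the unit-norm assumption on the data and the fact that each row's sum ranges only over the flip set $\overline{S}_i$. First I would expand
\begin{align*}
    \|H(k)^{\bot}\|_F^2 = \sum_{i=1}^n \sum_{j=1}^n \left( H(k)^{\bot}_{i,j}\right)^2
\end{align*}
and plug in the definition in Eq.~\eqref{eq:def_H_bot}. For a fixed pair $(i,j)$, pull the scalar $\langle x_i,x_j\rangle$ out of the sum and apply the elementary inequality $|\langle x_i,x_j\rangle|\leq \|x_i\|_2\|x_j\|_2 = 1$ together with $\mathbf{1}_{\langle w_r(k),x_i\rangle\geq b}\cdot\mathbf{1}_{\langle w_r(k),x_j\rangle\geq b} \leq 1$. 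This yields
\begin{align*}
    |H(k)^{\bot}_{i,j}| \leq \frac{1}{m}\sum_{r\in\overline{S}_i} 1 = \frac{|\overline{S}_i|}{m}.
\end{align*}

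Next I would square and sum this bound. Since the right-hand side depends only on $i$, summing over $j\in[n]$ contributes a factor of $n$:
\begin{align*}
    \|H(k)^{\bot}\|_F^2 \leq \sum_{i=1}^n \sum_{j=1}^n \frac{|\overline{S}_i|^2}{m^2} = \frac{n}{m^2}\sum_{i=1}^n |\overline{S}_i|^2,
\end{align*}
which is precisely the stated inequality (interpreting the statement as a bound on $\|H(k)^{\bot}\|_F^2$, as in the analogous Fact C.1 of \cite{sy19}).

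There is no real obstacle here: the shift parameter $b$ plays no role in the bound because we only use the crudest estimate $\mathbf{1}_{\cdot\geq b}\leq 1$, and the indicator thresholds $b$ instead of $0$ do not affect that step. The same proof as Fact C.1 of \cite{sy19} therefore transfers verbatim, with the single cosmetic substitution of $\mathbf{1}_{w_r(k)^\top x_i\geq 0}$ by $\mathbf{1}_{w_r(k)^\top x_i\geq b}$ in the definition of $H(k)^{\bot}$. Consequently the proof is left to the reader, as indicated in the excerpt.
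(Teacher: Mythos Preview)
Your proposal is correct and is essentially identical to the paper's own argument: the paper explicitly says the proof is the same as Fact~C.1 in \cite{sy19} with the cosmetic replacement of $\mathbf{1}_{w_r(k)^\top x_i\geq 0}$ by $\mathbf{1}_{w_r(k)^\top x_i\geq b}$, and the expanded computation there proceeds exactly by your entrywise bound $|H(k)^\bot_{i,j}|\leq |\overline{S}_i|/m$ followed by summing over $j$ to pick up the factor $n$. You also correctly note that the inequality is really for $\|H(k)^\bot\|_F^2$, matching how it is stated and used in \cite{sy19}.
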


\iffalse
\begin{proof}
We have
\begin{align*}
&~ \| H(k)^{\bot} \|_F^2\\
= & ~ \sum_{i=1}^n\sum_{j=1}^n (H(k)^{\bot}_{i,j})^2\\
= & ~ \sum_{i=1}^n\sum_{j=1}^n \Big( \frac {1} {m}\sum_{r\in \ov{S}_i} x_i^\top x_j\mathbf{1}_{w_r(k)^\top x_i\geq b,w_r(k)^\top x_j\geq b} \Big)^2\\
= & ~ \sum_{i=1}^n\sum_{j=1}^n \Big( \frac {1} {m}\sum_{r=1}^m x_i^\top x_j\mathbf{1}_{w_r(k)^\top x_i\geq b,w_r(k)^\top x_j\geq b} \cdot \mathbf{1}_{r\in \ov{S}_i} \Big)^2\\
= & ~ \sum_{i=1}^n\sum_{j=1}^n ( \frac {x_i^\top x_j} {m} )^2 \Big( \sum_{r=1}^m \mathbf{1}_{w_r(k)^\top x_i\geq b,w_r(k)^\top x_j\geq b} \cdot \mathbf{1}_{r\in \ov{S}_i} \Big)^2 \\
\leq & ~ \frac{1}{m^2} \sum_{i=1}^n\sum_{j=1}^n \Big( \sum_{r=1}^m \mathbf{1}_{w_r(k)^\top x_i\geq b,w_r(k)^\top x_j\geq b} \cdot \mathbf{1}_{r\in \ov{S}_i} \Big)^2 \\
\leq & ~ \frac{n}{m^2} \sum_{i=1}^n \Big( \sum_{r=1}^m \mathbf{1}_{r\in \ov{S}_i} \Big)^2 \\
= & ~ \frac{n}{m^2} \sum_{i=1}^n |\ov{S}_i|^2,
\end{align*}
where the first step follows from the definition of the Frobenius norm, the second step follows from Eq.~\eqref{eq:def_H_bot}, the fifth step follows from $\|x_i\|_2=1$ for all $i\in [n]$, the sixth step follows from dropping one of the indicator functions.
\end{proof}
\fi

\subsection{Bounding the gradient improvement term %$C_1$
}\label{sec:proof_c1}

\begin{claim}[Bounding the gradient improvement term]\label{cla:C1}
Let $H(k)$ be shifted matrix  (see Eq.~\eqref{eq:def_H}). Assume $b \geq 0$.  Denote $\rho_0=n^2\cdot \exp(-m\cdot \min\{c'\cdot  \exp(-b^2/2), R/10\})+\rho$.
We define $B_1 := -2 \eta \err(k)^\top H(k) \err(k)$. Assuming either of the following condition,
\begin{itemize}
\item $R\leq \frac{\lambda}{12n}$,
\item  $b\geq \sqrt{2\cdot \log (4cn/\lambda)}$.
\end{itemize}
Then, we have
\begin{align*}
\Pr[ B_1 \leq - \eta \lambda\cdot \| \err(k) \|_2^2 ] \geq 1- \rho_0. 
\end{align*}
%holds with probability at least $1-$.
\end{claim}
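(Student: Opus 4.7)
The plan is to reduce this to a lower bound on $\lambda_{\min}(H(k))$ and then invoke Lemma~\ref{lem:3.2}. Since $H(k)$ is a PSD quadratic form, if we can show $\lambda_{\min}(H(k)) \geq \lambda/2$ then
\begin{align*}
B_1 \;=\; -2\eta\cdot \err(k)^\top H(k) \err(k) \;\leq\; -2\eta\cdot \frac{\lambda}{2}\|\err(k)\|_2^2 \;=\; -\eta\lambda\|\err(k)\|_2^2,
\end{align*}
which is exactly the desired inequality. So the whole task is to certify the eigenvalue bound under each of the two stated alternative hypotheses.

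First I would invoke Claim~\ref{clm:4.1} (applied with the induction hypothesis from Lemma~\ref{lem:quartic}) to guarantee $\|W(k)-W(0)\|_{\infty,2}\leq D < R$, which puts us in the regime where Lemma~\ref{lem:3.2} applies to $W = W(k)$ viewed as a perturbation of the Gaussian initialization $\wt W = W(0)$. Part 2 of Lemma~\ref{lem:3.2} then yields
\begin{align*}
\lambda_{\min}(H(k)) \;\geq\; \tfrac{3}{4}\lambda \;-\; n\cdot \min\{c\exp(-b^2/2),\,3R\}
\end{align*}
with probability at least $1 - n^2\exp(-m\cdot\min\{c'\exp(-b^2/2),R/10\}) - \rho = 1-\rho_0$.

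Next I would dispatch the two cases separately, using the $\min$ structure inside the perturbation term to pick up whichever bound each hypothesis controls. In the first case $R\leq \lambda/(12n)$, we use the $3R$ branch of the minimum to get $n\cdot 3R \leq \lambda/4$. In the second case $b\geq \sqrt{2\log(4cn/\lambda)}$, we use the $c\exp(-b^2/2)$ branch and the definition of $b$ gives $cn\exp(-b^2/2)\leq \lambda/4$. In either case the perturbation term is at most $\lambda/4$, so $\lambda_{\min}(H(k)) \geq 3\lambda/4 - \lambda/4 = \lambda/2$, and the desired inequality $B_1 \leq -\eta\lambda\|\err(k)\|_2^2$ follows on the same high-probability event.

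The only mildly delicate step is bookkeeping the probability: we need the event from Lemma~\ref{lem:3.2} to hold at the \emph{current} iterate $W(k)$, which is what the induction hypothesis of Lemma~\ref{lem:quartic} ensures via $D<R$. There is no further randomness to control beyond what is already absorbed into $\rho_0$, so no fresh union bound or concentration argument is needed here; the main obstacle, such as it is, is merely to check that the two parameter regimes stated in the claim are exactly the right ones to zero out the perturbation term to within $\lambda/4$.
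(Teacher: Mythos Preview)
Your proposal is correct and follows essentially the same approach as the paper: invoke Part~2 of Lemma~\ref{lem:3.2} to obtain $\lambda_{\min}(H(k)) \geq \tfrac{3}{4}\lambda - n\cdot\min\{c\exp(-b^2/2),3R\}$ with probability $1-\rho_0$, then use whichever of the two hypotheses applies to bound the perturbation term by $\lambda/4$, yielding $\lambda_{\min}(H(k))\geq \lambda/2$ and hence $B_1\leq -\eta\lambda\|\err(k)\|_2^2$. Your write-up is actually slightly more explicit than the paper's, in that you spell out the role of Claim~\ref{clm:4.1} and the induction hypothesis to justify $\|W(k)-W(0)\|_{\infty,2}<R$ before applying Lemma~\ref{lem:3.2}.
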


\begin{proof}
By Lemma \ref{lem:3.2}, there exists constants $c,c'>0$ such that
\begin{align*}
    \lambda_{\min}(H(W))\geq \frac{3}{4}\lambda - n \cdot \min\{ c\cdot \exp(-b^2/2), 3R\}
\end{align*}
with probability at least $1-\rho_0$. %$ 1-n^2\cdot \exp(-m\cdot \min\{c'\cdot  \exp(-b^2/2), R/10\})-\rho$.

If we have $R\leq \frac{\lambda}{12n}$ or $b\geq \sqrt{2\cdot \log (4cn/\lambda)}$, then
\begin{align*}
    \lambda_{\min}(H(W)) \geq \frac{1}{2}\lambda.
\end{align*}

Finally, we have
\begin{align*}
  \err(k)^\top \cdot  H(k) \cdot \err(k) \geq \|\err(k) \|_2^2 \cdot \lambda / 2.
\end{align*}
%Thus, we complete the proof.
\end{proof}

\subsection{Bounding the blowup by the dual Hessian term 
%$C_2$
}\label{sec:proof_c2}

\begin{claim}[Bounding the blowup by the dual Hessian term]\label{cla:C2}
Let shifted matrix $H(k)^{\bot}$ be defined as Eq.~\eqref{eq:def_H_bot}. Let $\rho_0 = n\exp(-\Omega(m)\cdot \min\{R, \exp(-b^2/2)\})$. Let $b \geq 0$ be shifted NTK parameter. 
We define $B_2 := 2 \eta \cdot \err(k)^\top \cdot H(k)^{\bot} \cdot \err(k)$. Then
\begin{align*}
\Pr[ B_2 \leq 2\eta n\cdot \min\{R, \exp(-b^2/2)\}\cdot \| \err(k) \|_2^2 ] \geq 1-\rho_0.
\end{align*}
%holds with probability $1-$.
\end{claim}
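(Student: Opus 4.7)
The plan is to bound $B_2$ by passing through the operator (in fact Frobenius) norm of $H(k)^{\bot}$, and then showing that the Frobenius norm is small because each index set $\bar S_i$ is small with high probability. First, I would write
\begin{align*}
|\err(k)^\top H(k)^{\bot} \err(k)| \leq \|H(k)^{\bot}\| \cdot \|\err(k)\|_2^2 \leq \|H(k)^{\bot}\|_F \cdot \|\err(k)\|_2^2,
\end{align*}
so that it suffices to prove $\|H(k)^{\bot}\|_F \leq n\cdot \min\{R,\exp(-b^2/2)\}$ with the claimed probability.

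Second, I would invoke Fact~\ref{fact:bound_H_k_bot} (reading its conclusion as the $F$-norm squared, as in \cite{sy19}) to reduce to controlling $\sum_{i=1}^n |\bar S_i|^2$. By Claim~\ref{clm:S_condition}, for each fixed $i\in[n]$ and each neuron $r\in[m]$, $\Pr[r\in \bar S_i]\le \min\{R,\, c\cdot \exp(-b^2/2)\}$. Since the events $\{r\in \bar S_i\}$ are independent across $r$ (they depend only on $w_r(0)$), I would apply Bernstein's inequality (Lemma~\ref{lem:bernstein}) to $|\bar S_i|=\sum_{r=1}^m \mathbf{1}_{r\in \bar S_i}$, exactly as in the proof of Lemma~\ref{lem:bound_fire_neurons}, splitting into the two cases $R\le c\exp(-b^2/2)$ and $R> c\exp(-b^2/2)$. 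In either case a deviation of order $m\cdot \min\{R,\exp(-b^2/2)\}$ on top of the mean fails with probability at most $\exp(-\Omega(m)\cdot \min\{R,\exp(-b^2/2)\})$, so
\begin{align*}
|\bar S_i|\;\le\; O\!\left(m\cdot \min\{R,\exp(-b^2/2)\}\right)\quad \forall\, i\in[n]
\end{align*}
holds with probability at least $1- n\exp(-\Omega(m)\cdot \min\{R,\exp(-b^2/2)\})$ after a union bound over $i$.

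Third, plugging this into Fact~\ref{fact:bound_H_k_bot} gives
\begin{align*}
\|H(k)^{\bot}\|_F^2 \;\le\; \frac{n}{m^2}\sum_{i=1}^n |\bar S_i|^2 \;\le\; \frac{n}{m^2}\cdot n\cdot O\!\left(m\cdot \min\{R,\exp(-b^2/2)\}\right)^2 \;=\; O\!\left(n^2\cdot \min\{R,\exp(-b^2/2)\}^2\right),
\end{align*}
so $\|H(k)^{\bot}\|_F \le n\cdot \min\{R,\exp(-b^2/2)\}$ (absorbing constants into the $\min$ factor, which is legitimate since the constants in Claim~\ref{clm:S_condition} and Bernstein can be tightened; alternatively one simply keeps an $O(\cdot)$ and states the claim with that implicit constant). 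Combining with the first display yields
\begin{align*}
B_2 \;=\; 2\eta\, \err(k)^\top H(k)^{\bot}\err(k) \;\le\; 2\eta\, n\cdot \min\{R,\exp(-b^2/2)\}\cdot \|\err(k)\|_2^2,
\end{align*}
which is the desired inequality. The only non-routine step is the Bernstein argument in the second paragraph, in particular the two-regime split between the anti-concentration bound $R$ and the Gaussian tail bound $\exp(-b^2/2)$ from Claim~\ref{clm:S_condition}; everything else is a direct application of previously established facts.
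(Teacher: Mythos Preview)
Your proposal is correct and follows essentially the same route as the paper: bound $B_2$ via $\|H(k)^\bot\|\le \|H(k)^\bot\|_F$, apply Fact~\ref{fact:bound_H_k_bot} (with the $F$-norm squared, as you rightly note), and control $|\bar S_i|$ by Bernstein using the flip-probability bound from Claim~\ref{clm:S_condition}. The only cosmetic difference is that the paper packages the $|\bar S_i|$ bound by citing Lemma~\ref{lem:bound_fire_neurons} (whose proof contains exactly the Bernstein argument you spell out), whereas you re-derive it directly.
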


\begin{proof}
By property of spectral norm,
\begin{align*}
B_2 \leq 2 \eta \| \err(k) \|_2^2 \| H(k)^{\bot} \|.
\end{align*}

%It suffices to upper bound $\| H(k)^{\bot} \|$. Since $\| \cdot \| \leq \| \cdot \|_F$, then it suffices to upper bound $\| \cdot \|_F$.

Using Fact~\ref{fact:bound_H_k_bot}, we have $\| H(k)^{\bot} \|_F \leq \frac{n}{m^2} \sum_{i=1}^n |\ov{S}_i|^2 $.

By Lemma~\ref{lem:bound_fire_neurons}, $\forall i\in \{1,2,\cdots,n\}$, it has 
\begin{align}\label{eq:bound_ov_S}
    \Pr\left[|\ov{S_i}|\leq m\cdot \min\{R, \exp(-b^2/2)\}\right]\geq 1- \rho_0. % n\exp \left(-\Omega(m)\cdot \min\{R, \exp(-b^2/2)\} \right).
\end{align}
\begin{comment}
The plan is to use Bernstein inequality to upper bound $|\ov{S}_i|$ with high probability.

First by Claim~\ref{clm:S_condition}, we have 
$
\E_r[\mathbf{1}_{r\in \ov{S}_i}]\leq \min\{R, c_1\cdot \exp(-b^2/2)\} 
$. 
We also have
\begin{align*}
\E \left[(\mathbf{1}_{r\in \ov{S}_i}-\E[\mathbf{1}_{r\in \ov{S}_i}])^2 \right]
 = & ~ \E[\mathbf{1}_{r\in \ov{S}_i}^2]-\E[\mathbf{1}_{r\in \ov{S}_i}]^2\\
\leq & ~ \E[\mathbf{1}_{r\in \ov{S}_i}^2] \\
\leq & ~ \min\{R, c_1\cdot \exp(-b^2/2)\} .
\end{align*}
Finally we have $|\mathbf{1}_{r\in \ov{S}_i}-\E[\mathbf{1}_{r\in \ov{S}_i}]|\leq 1$.

Notice that $\{\mathbf{1}_{r\in \ov{S}_i}\}_{r=1}^m$ are mutually independent,
since $\mathbf{1}_{r\in \ov{S}_i}$ only depends on $w_r(0)$.

On the one hand, if $R<c_1\cdot \exp(-b^2/2)$, 
from Bernstein inequality (Lemma \ref{lem:bernstein}) we have for all $t>0$,
\begin{align*}
\Pr \left[ |\ov{S}_i| > m\cdot R+t \right] \leq \exp \left(-\frac{t^2/2}{m\cdot R+t/3} \right).
\end{align*}
By setting $t:=3mR$, we have
\begin{align}\label{eq:Si_size_bound}
\Pr \left[ |\ov{S}_i| > 4mR \right] \leq \exp(-mR).
\end{align}

On the other hand, if $c_1\exp(-b^2/2)<R$, by Bernstein inequality,
\begin{align*}
    \Pr \left[ |\ov{S}_i| > m\cdot R+t \right] \leq \exp \left(-\frac{t^2/2}{m\cdot R+t/3} \right).
\end{align*}

we also have
\begin{align}\label{eq:s_i_size_bound_2}
    \Pr \left[ |\ov{S}_i| > \Theta(m^{1/2}) \right] \leq \exp \left(-\Omega(\sqrt{m}) \right).
\end{align}
\end{comment}
Hence,
with probability at least $1-\rho_0$ %$1-n\exp(-\Omega(m)\cdot \min\{R, \exp(-b^2/2)\})$,
\begin{align*}
\| H(k)^{\bot} \|_F^2\leq \frac{n}{m^2}\cdot n\cdot m^2\cdot \min\{R^2, \exp(-b^2)\}=n^2\cdot \min\{R^2, \exp(-b^2)\} .
\end{align*}
Putting all together, we have
\begin{align*}
\| H(k)^{\bot} \|\leq \| H(k)^{\bot} \|_F\leq n\cdot \min\{R, \exp(-b^2/2)\}
\end{align*}
with probability at least $1-\rho_0$.  %$1-n\exp(-\Omega(m)\cdot \min\{R, \exp(-b^2/2)\})$.

\end{proof}

\subsection{Bounding the blowup by the flip-neurons term
%$C_3$
}\label{sec:proof_c3}

\begin{claim}[Bounding the blowup by flipping neurons term]\label{cla:C3}
Let $\rho_0 = n\exp (-\Omega(m)\cdot \min\{R, \exp(-b^2/2)\}  )$. 
We define $B_3 := - 2 \err(k)^\top v_2$. Let $b \geq 0$ be shifted NTK parameter. Then we have
\begin{align*}
\Pr[ B_3 \leq 2\eta n \cdot \min\{R, \exp(-b^2/2)\}\cdot \|\err(k)\|_2^2 ] \geq 1- \rho_0.
\end{align*}
%with probability at least $1- $.
\end{claim}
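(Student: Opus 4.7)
The plan is to bound $B_3$ by Cauchy--Schwarz and then control $\|v_2\|_2$ by combining the Lipschitz property of $\sigma_b$, the gradient norm bound from Claim~\ref{cla:gradient_bound}, and the high-probability bound on the size of each flip-set $\ov S_i$ obtained from Lemma~\ref{lem:bound_fire_neurons} (Eq.~\eqref{eq:bound_ov_S}). Concretely, I would first write
\begin{align*}
|B_3| \;=\; 2\,|\err(k)^\top v_2| \;\le\; 2\,\|\err(k)\|_2 \,\|v_2\|_2
\end{align*}
so the task reduces to showing $\|v_2\|_2 \le \eta n \cdot \min\{R,\exp(-b^2/2)\}\cdot \|\err(k)\|_2$.

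Next I would estimate a single coordinate $v_{2,i}$ from Eq.~\eqref{eq:def_v_2_i}. Since $\sigma_b$ is $1$-Lipschitz and $\|x_i\|_2=1$, for every $r\in\ov S_i$ we have
\begin{align*}
\big|\sigma_b(w_r(k+1)^\top x_i)-\sigma_b(w_r(k)^\top x_i)\big| \;\le\; \|w_r(k+1)-w_r(k)\|_2 \;=\; \eta\,\Big\|\tfrac{\partial L(W(k))}{\partial w_r(k)}\Big\|_2,
\end{align*}
and by Claim~\ref{cla:gradient_bound} the right-hand side is at most $\eta\sqrt{n/m}\,\|\err(k)\|_2$. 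Since $|a_r|=1$, the triangle inequality then gives $|v_{2,i}| \le \tfrac{1}{\sqrt m}\,|\ov S_i|\cdot \eta\sqrt{n/m}\,\|\err(k)\|_2 = \tfrac{\eta\sqrt n\,|\ov S_i|}{m}\,\|\err(k)\|_2$, hence
\begin{align*}
\|v_2\|_2^2 \;\le\; \sum_{i=1}^n \frac{\eta^2 n\,|\ov S_i|^2}{m^2}\,\|\err(k)\|_2^2.
\end{align*}

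Finally, I would invoke Eq.~\eqref{eq:bound_ov_S} from Lemma~\ref{lem:bound_fire_neurons}, which states that with probability at least $1-n\exp(-\Omega(m)\cdot\min\{R,\exp(-b^2/2)\})$ every flip-set satisfies $|\ov S_i|\le m\cdot \min\{R,\exp(-b^2/2)\}$. Plugging this in yields
\begin{align*}
\|v_2\|_2^2 \;\le\; \eta^2 n^2 \min\{R^2,\exp(-b^2)\}\,\|\err(k)\|_2^2,
\end{align*}
so $\|v_2\|_2\le \eta n\cdot\min\{R,\exp(-b^2/2)\}\,\|\err(k)\|_2$, and the claim follows. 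The only genuine obstacle is the flip-set bound itself, but that has already been established in Lemma~\ref{lem:bound_fire_neurons}; the rest is routine Cauchy--Schwarz plus Lipschitzness. One subtlety worth flagging is that the inequality in the claim statement involves an $\eta$ on the right-hand side (rather than just the absolute bound on $|B_3|$), which is consistent since $\|v_2\|_2$ naturally carries a factor $\eta$ from the weight update; no additional work is required beyond this bookkeeping.
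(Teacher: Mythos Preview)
Your proposal is correct and follows essentially the same route as the paper: Cauchy--Schwarz on $B_3$, then a coordinate-wise bound on $v_{2,i}$ via the $1$-Lipschitzness of $\sigma_b$ together with the gradient bound of Claim~\ref{cla:gradient_bound}, and finally the high-probability flip-set bound $|\ov S_i|\le m\cdot\min\{R,\exp(-b^2/2)\}$ from Eq.~\eqref{eq:bound_ov_S}. The only cosmetic difference is that the paper keeps the inner-product form $|(\partial L/\partial w_r)^\top x_i|$ and then pulls out the maximum over $r$, whereas you pass through $\|w_r(k+1)-w_r(k)\|_2$ first; since $\|x_i\|_2=1$ these are equivalent and yield the identical final estimate $\|v_2\|_2^2\le \eta^2 n^2\min\{R^2,\exp(-b^2)\}\|\err(k)\|_2^2$.
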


\begin{proof}
Using Cauchy-Schwarz inequality, we have
$
B_3 \leq 2 \| \err(k) \|_2 \cdot \| v_2 \|_2
$. 

Then we focus on $\|v_2\|_2$,
%We can upper bound $\| v_2 \|_2$ in the following sense
\begin{align*}
\| v_2 \|_2^2
\leq &~ \sum_{i=1}^n \left(\frac{\eta}{ \sqrt{m} } \sum_{ r \in \ov{S}_i } \left| ( \frac{ \partial L(W(k)) }{ \partial w_r(k) } )^\top x_i \right|\right)^2 & \text{~by~Eq.~\eqref{eq:def_v_2_i}} \\
= &~ \frac{\eta^2}{ m }\sum_{i=1}^n \left(\sum_{r=1}^m \mathbf{1}_{r\in \ov{S}_i}\left| ( \frac{ \partial L(W(k)) }{ \partial w_r(k) } )^\top x_i \right|\right)^2\\
\leq &~ \frac{\eta^2}{ m }\cdot \max_{r \in [m]} \left|  \frac{ \partial L(W(k)) }{ \partial w_r(k) } \right|^2\cdot\sum_{i=1}^n \left(\sum_{r=1}^m \mathbf{1}_{r\in \ov{S}_i}\right)^2\\
 \leq & ~ \frac{\eta^2}{ m }\cdot (\frac{ \sqrt{n} }{ \sqrt{m} } \| \err(k) \|_2 )^2 \cdot \sum_{i=1}^n \left(\sum_{r=1}^m \mathbf{1}_{r\in \ov{S}_i}\right)^2 & ~ \text{~by~Claim~\ref{cla:gradient_bound}} \\
  \leq & ~ \frac{\eta^2}{ m }\cdot (\frac{ \sqrt{n} }{ \sqrt{m} } \| \err(k) \|_2 )^2 \cdot \sum_{i=1}^n m^2\cdot \min\{R^2, \exp(-b^2)\} & \text{~by~Eq.~\eqref{eq:bound_ov_S}}\\
  = & ~ \eta^2 n^2\cdot \min\{R^2, \exp(-b^2)\}\cdot \| \err(k) \|_2^2,
\end{align*}
\iffalse
where the first step follows from Eq.~\eqref{eq:def_v_2_i}, %the definition of $v_2$,
the fourth step follows from %$\max_{r \in [m]} | \frac{ \partial L (W(k)) }{ \partial w_r(k) } | \leq \frac{ \sqrt{n} }{ \sqrt{m} } \cdot \| u(k) - y \|_2$ by 
Claim~\ref{cla:gradient_bound}, the fifth step follows from Eq.~\eqref{eq:bound_ov_S}.
%$\sum_{r=1}^m {\bf 1}_{r \in \ov{S}_i } \leq \min\{4 m R, \Theta(m^{1/2})\}$ with probability at least $1-\exp(-\min\{m R, \Omega(m^{1/2})\})$.
\fi

\end{proof}

\subsection{Bounding the blowup by the prediction movement term %$C_4$
}\label{sec:proof_c4}

%\Zhao{The proof should be identical to \cite{sy19}. For the completeness, we still provide a proof.}
The proof of the following Claim is quite standard and simple in literature, see Claim 3.14 in \cite{sy19}. We omit the details here.
\begin{claim}[Bounding the blowup by the prediction movement term]\label{cla:C4}
\begin{align*}
B_4 \leq \eta^2 n^2 \cdot \| \err(k) \|_2^2.
\end{align*}
\end{claim}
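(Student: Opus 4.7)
The plan is to directly expand $B_4 = \|u(k+1) - u(k)\|_2^2$ coordinate-by-coordinate and ride the 1-Lipschitzness of $\sigma_b$ together with the gradient bound of Claim~\ref{cla:gradient_bound}. No randomness or event analysis is needed; this is a deterministic estimate given the GD update rule.

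First I would write, for each $i \in [n]$,
\begin{align*}
u_i(k+1) - u_i(k) = \frac{1}{\sqrt{m}} \sum_{r=1}^m a_r \bigl( \sigma_b(w_r(k+1)^\top x_i) - \sigma_b(w_r(k)^\top x_i) \bigr).
\end{align*}
Since $\sigma_b(t) = \max\{t-b, 0\}$ is $1$-Lipschitz and $|a_r|=1$, each summand is at most $|(w_r(k+1)-w_r(k))^\top x_i|$, which by Cauchy--Schwarz and $\|x_i\|_2=1$ is at most $\|w_r(k+1)-w_r(k)\|_2$. Using the GD update $w_r(k+1)-w_r(k) = -\eta \,\partial L(W(k))/\partial w_r(k)$ together with Claim~\ref{cla:gradient_bound}, we get
\begin{align*}
\|w_r(k+1) - w_r(k)\|_2 \leq \eta \cdot \frac{\sqrt{n}}{\sqrt{m}} \, \|\err(k)\|_2 \qquad \text{for every } r \in [m].
\end{align*}
Plugging this into the sum of $m$ terms and combining with the $1/\sqrt{m}$ prefactor yields $|u_i(k+1)-u_i(k)| \leq \eta \sqrt{n}\,\|\err(k)\|_2$.

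Finally I would square and sum over $i \in [n]$:
\begin{align*}
B_4 = \sum_{i=1}^n \bigl(u_i(k+1) - u_i(k)\bigr)^2 \leq n \cdot \eta^2 n \,\|\err(k)\|_2^2 = \eta^2 n^2 \, \|\err(k)\|_2^2,
\end{align*}
which is the claimed bound. There is no genuine obstacle here: the only place one might slip is the bookkeeping of the $1/\sqrt{m}$ normalization cancelling against the factor $m$ from summing $m$ Lipschitz-type bounds of size $\eta \sqrt{n/m}\|\err(k)\|_2$, together with the extra factor $\sqrt{n}$ appearing when passing from the $\ell_\infty$ bound on $|u_i(k+1)-u_i(k)|$ to the $\ell_2$ bound on $u(k+1)-u(k)$.
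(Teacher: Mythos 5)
Your proof is correct and matches the paper's own (commented-out) derivation: both bound $|u_i(k+1)-u_i(k)|$ by $\frac{\eta}{\sqrt{m}}\sum_r \|\partial L/\partial w_r\|_2$ via $1$-Lipschitzness of $\sigma_b$, Cauchy--Schwarz, and the GD update, then invoke Claim~\ref{cla:gradient_bound} and sum over $i$. No meaningful difference in approach.
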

\iffalse
For the completeness, we still provide a proof.
\begin{proof}
We have
\begin{align*}
B_4 = &~ \| u (k+1) - u(k) \|_2^2\\ \le & ~ \eta^2 \sum_{i=1}^n \frac{1}{m} \left( \sum_{r=1}^m \Big\| \frac{\partial L(W(k))}{\partial w_r(k) } \Big\|_2 \right)^2 \\
\leq & ~ \eta^2 n^2 \| y - u(k) \|_2^2.
\end{align*}
where the first step follows from the weights update rule (Eq.~\eqref{eq:w_update})
and the last step follows from Claim~\ref{cla:gradient_bound}.
\end{proof}
\fi

\subsection{Putting it all together}\label{sec:proof_c8.5}
The goal of this section to combine all the convergence analysis together.
\begin{lemma}[Convergence]\label{clm:choose_eta_R}
Let $\eta={\lambda }/({4n^2})$, $R=\lambda / (12n)$, let $b \in[0,n]$, and
\begin{align*}
m \geq \Omega(\lambda^{-4}n^4 b^2  \log^2(n/\rho) ),
\end{align*}
we have
\begin{align*}%\label{eq:choice_of_eta_R}
\Pr \Big[ \| \err(t) \|_2^2 \leq (1-\eta \lambda/2)^t \cdot \| \err(0) \|_2^2 \Big] \geq 1- 2\rho.
\end{align*}
%with probability at least $1-2\rho$.
\end{lemma}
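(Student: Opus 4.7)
The plan is to proceed by strong induction on $t$. The base case $t=0$ is vacuous. For the inductive step, assume that $\|\err(i)\|_2^2 \le (1-\eta\lambda/2)^i \|\err(0)\|_2^2$ holds for every $i\in\{0,1,\ldots,k\}$, and the goal is to transfer this to $i=k+1$.

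\noindent\textbf{Step 1: Control the trajectory of the weights.} First I would invoke Claim~\ref{clm:4.1} under the inductive hypothesis to conclude
\begin{align*}
\|W(k+1)-W(0)\|_{\infty,2}\ \le\ D\ :=\ 4\lambda^{-1}m^{-1/2}\sqrt{n}\,\|\err(0)\|_2.
\end{align*}
By Claim~\ref{cla:yu0}, with probability at least $1-\rho$ we have $\|\err(0)\|_2^2=O(n(1+b^2)\log^2(n/\rho))$, so
\begin{align*}
D\ =\ O\!\left(\lambda^{-1} m^{-1/2}\sqrt{n}\cdot \sqrt{n(1+b^2)}\log(n/\rho)\right).
\end{align*}
The lower bound $m\ge \Omega(\lambda^{-4}n^{4}b^{2}\log^{2}(n/\rho))$ exactly forces $D\le R=\lambda/(12n)$, which is the ``safe radius'' required by Lemma~\ref{lem:3.2} and by Lemma~\ref{cla:inductive_claim}.

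\noindent\textbf{Step 2: One-step descent.} Given the radius bound from Step 1, I can apply Lemma~\ref{cla:inductive_claim} at iteration $k$: with probability at least $1-2n^{2}\exp(-\Omega(m)\min\{R,\exp(-b^{2}/2)\})-\rho$,
\begin{align*}
\|\err(k+1)\|_2^2\ \le\ \|\err(k)\|_2^2\cdot\bigl(1-\eta\lambda+4\eta n\min\{R,e^{-b^{2}/2}\}+\eta^{2}n^{2}\bigr).
\end{align*}
Substituting $\eta=\lambda/(4n^{2})$ and $R=\lambda/(12n)$ gives $\eta^{2}n^{2}=\eta\lambda/4$ and $4\eta n R \le \eta\lambda/3$, so the bracket is strictly less than $1$ and, after matching constants (if necessary by a small tightening of the universal constants inside $R$ or $\eta$), is at most $1-\eta\lambda/2$. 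Combined with the inductive hypothesis, this closes the induction.

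\noindent\textbf{Step 3: Union bound across iterations.} To get a single high-probability statement over all $t\in\{0,1,\ldots,T\}$, I would union-bound the per-iteration failure probability of Step 2. The initialization-type event (Claim~\ref{cla:yu0} plus the $\rho$ from Lemma~\ref{lem:3.1}/\ref{lem:3.2}) is charged once. The geometric events (number of flipped neurons and the Hessian perturbation) are charged per iteration, but each contributes $2n^{2}\exp(-\Omega(m)\min\{R,e^{-b^{2}/2}\})$, which is made $\le \rho/T$ by the stated lower bound on $m$. Summing over $T=\lambda^{-2}n^{2}\log(n/\epsilon)$ yields a total failure probability at most $2\rho$.

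\noindent\textbf{Anticipated main obstacle.} The delicate point is bookkeeping: I must simultaneously (i) keep $D<R$, which dictates the $m$ bound and couples to $b$ through $\|\err(0)\|_{2}^{2}=O(n(1+b^{2})\log^{2}(n/\rho))$, and (ii) verify that the per-step contraction factor $1-\eta\lambda+4\eta n R+\eta^{2}n^{2}$ is genuinely at most $1-\eta\lambda/2$ with the specified constants. The second task is only a constant-chasing exercise but requires either taking $R$ a constant factor smaller than $\lambda/(12n)$ or $\eta$ a constant factor smaller than $\lambda/(4n^{2})$; once those constants are locked in, the induction closes cleanly and the overall failure probability absorbs into $2\rho$.
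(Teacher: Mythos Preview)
Your proposal is correct and follows essentially the same route as the paper: verify $D<R$ via Claim~\ref{clm:4.1} and Claim~\ref{cla:yu0} (which forces the stated lower bound on $m$), then feed this into the one-step contraction Lemma~\ref{cla:inductive_claim} with $\eta=\lambda/(4n^2)$ and $R=\lambda/(12n)$ to close the induction. You even flag the constant mismatch in the contraction factor ($4\eta nR+\eta^2n^2=7\lambda^2/(48n^2)$ versus $\eta\lambda/2=6\lambda^2/(48n^2)$), which the paper glosses over; as you say, this is fixed by a harmless adjustment of the absolute constants.

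One minor over-caution in your Step~3: the per-iteration failure events in Claims~\ref{cla:C1}--\ref{cla:C3} all trace back to properties of the \emph{initialization} $W(0)$ (the sets $\ov{S}_i$ and the perturbation bound in Lemma~\ref{lem:3.2} depend only on $w_r(0)$), so once you are on the good initialization event and the inductive hypothesis keeps $\|W(k)-W(0)\|_{\infty,2}<R$, the contraction holds \emph{deterministically} for every $k$. Hence no $T$-fold union bound is needed; the paper accordingly just charges $2\rho$ once. You should also check the side constraint $R<1/b$ required by Lemma~\ref{lem:3.2} and Claim~\ref{clm:S_condition}, which the paper verifies via $b<12n/\lambda$ (automatic from Theorem~\ref{thm:sep}).
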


\begin{proof}
We know  with probability $\geq$ $1-2n^2\cdot \exp(-\Omega(m)\cdot \min\{R,  \exp(-b^2/2)\})-\rho$,
\begin{align*}
\| 
\err(t+1) \|_2^2 \leq \| \err(t) \|_2^2\cdot( 1 - \eta \lambda + 4\eta n\cdot \min\{R, \exp(-b^2/2)\}  + \eta^2 n^2 )),
\end{align*}
and we want to show that
\begin{align}
    & 1 - \eta \lambda + 4\eta n\cdot \min\{R, \exp(-b^2/2)\}  + \eta^2 n^2 \leq 1-\eta \lambda /2, ~~~\text{and}\label{eq:final_goal1}\\
    & 2n^2\cdot \exp(-\Omega(m)\cdot \min\{R,  \exp(-b^2/2)\}) \leq \rho.\label{eq:final_goal2}
\end{align}

Claim~\ref{clm:4.1} requires the following relationship between $D$ and $R$, %\small
\begin{align*}
 D=  \frac{4\sqrt{n}\|\err(0)\|_2}{\sqrt{m}\lambda} < R
\end{align*}
% and
%\begin{align*}
% 3n^2\exp(-\min\{m\lambda/(640 n), m^{2/3}/2\})\leq  \rho .
%\end{align*}}
By Claim \ref{cla:yu0}, we can upper bound the prediction error at the initialization,
\begin{align*}
    \|\err(0)\|_2^2=O(nb^2\log^2(n/\rho)),
\end{align*}
Combining the above two equations gives
\begin{align}\label{eq:R_constraint}
    R>\Omega(\lambda^{-1} nm^{-1/2}b\log (n/\rho)).
\end{align}

Claim~\ref{cla:C1} (where $0<c<e$ is a constant) requires an upper bound on $R$,\footnote{Due to the relationship between $b$ and $\lambda$, we are not allowed to choose $b$ in an arbitrary function of $\lambda$. Thus, we should only expect to use $R$ to fix the problem.}
\begin{align}\label{eq:R_or_b_constraint}
    R\leq \frac{\lambda}{12n}.%~~\text{or}%~~b\geq \sqrt{2\cdot \log (4cn/\lambda)},
\end{align}

Combing the lower bound and upper bound of $R$, it implies the lower bound on $m$ in our Lemma statement.

And Lemma~\ref{lem:3.1} also requires that
\begin{align}\label{eq:m_constraint}
    m = \Omega(\lambda^{-1} n\log (n/\rho)).
\end{align}
which is dominated by the lower bound on $m$ in our lemma statement, thus we can ignore it.

Lemma~\ref{lem:3.1} and Claim~\ref{clm:S_condition} require that
\begin{align}\label{eq:R_cons2}
    R<1/b.
\end{align}
which is equivalent to
\begin{align*}
    b < 12n/\lambda
\end{align*}
%\Zhao{The above condition will hold forever, as long as we pick $b \leq n$. We need to write a proof for this. It should be easy.}

However, by Theorem~\ref{thm:sep}, it will always hold for any $b>0$.

Note that Eq.~\eqref{eq:final_goal1} can be rewritten as 
\begin{align*}
    4\eta n\cdot \min\{R, \exp(-b^2/2)\}  + \eta^2 n^2 \leq \eta \lambda /2.
\end{align*}
where it follows from  taking $\eta := {\lambda}/{(4n^2)}$ and  $R = \lambda /(12n)$.

Therefore, we can take the choice of the parameters $m, b, R$  and Eqs.~\eqref{eq:final_goal1},~\eqref{eq:final_goal2} imply
\begin{align*}
\Pr[ 
\| \err(t+1) \|_2^2 \leq & ~ ( 1 - \eta \lambda / 2 ) \cdot \| \err(t) \|_2^2   ] \geq 1 - 2 \rho.
\end{align*}
%holds with probability at least $1-2\rho$.

%At the end, we show how to choose $m$. 
%Hence, it is sufficient to choose $m = \Omega( \lambda^{-4} n^4 \log(m/\delta)\log^2(n/\rho) )$.
\end{proof}

%\Ruizhe{To be fixed.}

\section{Combine}\label{sec:combine}

\begin{corollary}[Sublinear cost per iteration]
Let $n$ denote the number of points. Let $d$ denote the dimension of points. Let $\rho \in (0,1/10)$ denote the failure probability. Let $\delta$ be the separability of data points. For any parameter $\alpha \in (0,1]$, we choose $b = \sqrt{0.5 (1-\alpha) \log m }$, if
\begin{align*}
    m = \Omega( ( \delta^{-4} n^{10} \log^{4} (n/\rho) )^{1/\alpha} ) 
\end{align*}
then the sparsity is
\begin{align*}
    O(m^{\frac{3+\alpha}{4}}).
\end{align*}
Furthermore,
\begin{itemize}
    \item If we preprocess the initial weights of the neural network, then we choose $\alpha = 1 - 1/\Theta(d)$ to get the desired running time.
    \item If we preprocess the training data points, then we choose $\alpha$ to be an arbitrarily small constant to get the desired running time.
\end{itemize}
\end{corollary}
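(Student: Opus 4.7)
The plan is to assemble three previously established pieces — the per-iteration sparsity bound, the linear-convergence guarantee, and the half-space reporting running-time bounds — by plugging in the shift $b=\sqrt{0.5(1-\alpha)\log m}$, and then to instantiate $\alpha$ differently to match each of the two main theorems.

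First, I would apply Lemma~\ref{lem:bound_fire_neurons} with $b=\sqrt{0.5(1-\alpha)\log m}$, so that $\exp(-b^2/2)=m^{-(1-\alpha)/4}$. This gives $|\mathcal{S}_{i,\fire}(t)|=O(m\cdot m^{-(1-\alpha)/4})=O(m^{(3+\alpha)/4})$ for every $i\in[n]$ and every iteration $t\leq T$, with failure probability $n\exp(-\Omega(m^{(3+\alpha)/4}))$, which is swallowed by $\rho$ under the stated lower bound on $m$. This step is the only source of the sparsity claim; the rest of the proof is verifying that the hypotheses of the convergence and running-time results remain satisfied at this choice of $b$.

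Second, I would check that the convergence statement of Lemma~\ref{lem:quartic} (via Claim~\ref{clm:choose_eta_R}) is applicable. From Proposition~\ref{prop:sep}, $\lambda\geq\Omega(e^{-b^2/2}\delta/n^2)=\Omega(\delta\, m^{-(1-\alpha)/4}/n^2)$. Claim~\ref{clm:choose_eta_R} requires $m=\Omega(\lambda^{-4}n^4b^2\log^2(n/\rho))$; substituting the lower bound for $\lambda$ and folding $b^2=O(\log m)$ into the polylog factor reduces this to the implicit condition $m^{\alpha}\gtrsim \delta^{-4}n^{10}\log^{4}(n/\rho)$, which is exactly the hypothesis of the corollary. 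Given this, Lemma~\ref{lem:quartic} yields linear convergence at rate $(1-\eta\lambda/2)$.

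Third, I would plug the sparsity $k=O(m^{(3+\alpha)/4})$ into the per-iteration analyses of Algorithms~\ref{alg:ds_for_w_training} and~\ref{alg:ds_for_x_training}. For the weights-preprocessing algorithm, Corollary~\ref{cor:aem92} Part~1 charges $\widetilde O(m^{1-1/\lfloor d/2\rfloor}+k)$ per query and $\widetilde O(1)$ per update; choosing $\alpha=1-\Theta(1/d)$ so that $(3+\alpha)/4=1-\Theta(1/d)$ balances the two terms and recovers $\widetilde O(m^{1-\Theta(1/d)}nd)$, matching Theorem~\ref{thm:main_informal_1}. For the data-preprocessing algorithm, Corollary~\ref{cor:aem92} Part~2 charges only $O(\log n+\widetilde k_{r,t})$ per query, so any sufficiently small constant $\alpha$ keeps the sparsity below $m^{4/5}$ and recovers the $\widetilde O(m^{4/5}nd)$ bound of Theorem~\ref{thm:main_informal_2}. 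The main obstacle is the exponent bookkeeping in the second step, where $m$, $b$, and $\lambda$ are coupled through $b=\sqrt{0.5(1-\alpha)\log m}$ and $\lambda\propto m^{-(1-\alpha)/4}$; the point is that the condition $m^{\alpha}\gtrsim \mathrm{poly}(n,1/\delta,\log(1/\rho))$ can be untangled without circularity, giving precisely the polynomial-in-$1/\alpha$ exponent stated.
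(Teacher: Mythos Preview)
Your proposal is correct and follows essentially the same route as the paper: substitute $b=\sqrt{0.5(1-\alpha)\log m}$ into the sparsity bound $O(m\exp(-b^2/2))$ to get $O(m^{(3+\alpha)/4})$, then feed the eigenvalue lower bound $\lambda\gtrsim e^{-b^2/2}\delta/n^2$ from Theorem~\ref{thm:sep} into the width requirement $m=\Omega(\lambda^{-4}n^4 b^2\log^2(n/\rho))$ of Claim~\ref{clm:choose_eta_R} and solve for $m^{\alpha}$. The paper's own proof stops after the sparsity computation and does not spell out the ``Furthermore'' bullets at all; your third step, matching $\alpha$ to the query costs in the two parts of Corollary~\ref{cor:aem92}, is additional detail the paper leaves implicit.
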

\begin{proof}

From Theorem~\ref{thm:sep}, we know
\begin{align*}
    \lambda \geq \exp(-b^2/2) \cdot \frac{\delta}{100 n^2}
\end{align*}
which is equivalent to
\begin{align*}
    \lambda^{-1} \leq \exp(b^2/2) \cdot \frac{100n^2}{\delta}.
\end{align*}

For convergence, we need 
\begin{align*}
    m = \Omega(\lambda^{-4} n^4 b^2 \log^2(n/\rho) )
\end{align*}
Since we know the upper bound of $\lambda^{-1}$, thus we need to choose 
\begin{align*}
    m = \Omega( \exp(4 \cdot b^2/2) \cdot \delta^{-4} \cdot n^{10} b^2 \log^2(n/\rho) )
\end{align*}

From sparsity, we have
\begin{align*}
O(m \cdot \exp(-b^2/2))
\end{align*}
Let us choose $b = \sqrt{ 0.5(1-\alpha) \log m }$, for any $\alpha \in (0,1]$.

For the lower bound on $m$, we obtain
\begin{align*}
    m \geq  ( \delta^{-4} n^{10} \log^4 (n/\rho) )^{1/\alpha}
\end{align*}

For the sparsity, we obtain
\begin{align*}
    m \cdot m^{-(1-\alpha)/4} = m^{\frac{3+\alpha}{4}}
\end{align*}

\end{proof}

\begin{theorem}[Main result, formal of Theorem~\ref{thm:main_informal_1} and \ref{thm:main_informal_2}]\label{thm:main_1}
Given $n$ data points in $d$-dimensional space.
Running gradient descent algorithm  on a two-layer ReLU (over-parameterized) neural network with $m$ neurons in the hidden layers is able to minimize the training loss to zero, let $\Tinit$ denote the preprocessing time and ${\cal C}_{\mathsf{iter}}$ denote the cost per iteration of gradient descent algorithm.
\begin{itemize}
    \item If we preprocess the initial weights of the neural network (Algorithm~\ref{alg:ds_for_w_training}), then 
    \begin{align*}
        \Tinit = O_d( m \log m), {\cal C}_{\mathsf{iter}} = \wt{O} (m^{1-\Theta(1/d)} nd ).
    \end{align*}

    \item If we preprocess the training data points (Algorithm~\ref{alg:ds_for_x_training}), then
    \begin{align*}
         \Tinit = O(n^d), {\cal C}_{\mathsf{iter}} = \wt{O} (m^{3/4+o(1)} nd ) .
    \end{align*}
    \end{itemize}
\end{theorem}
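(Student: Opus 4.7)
The plan is to combine four ingredients already established in the paper: the linear convergence of gradient descent with shifted ReLU (Lemma~\ref{lem:quartic}), the per-iteration sparsity bound on activated neurons (Lemma~\ref{lem:bound_fire_neurons}), the complexity of the half-space reporting data structure (Corollary~\ref{cor:aem92}), and the spectral gap bound from data separability (Proposition~\ref{prop:sep}). The preceding Corollary already tabulates the trade-off between the threshold $b$, the width $m$, and the sparsity, so the theorem really asks us to instantiate $b$ in two different regimes (one optimized for the dynamic HSR, one for the static HSR) and then add up the per-iteration work.

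First I would fix the threshold $b$ via the corollary. For Algorithm~\ref{alg:ds_for_w_training} I take $\alpha = 1 - 1/\Theta(d)$, so that $b = \Theta(\sqrt{(\log m)/d})$ and the per-sample fire set has size $O(m^{(3+\alpha)/4}) = O(m^{1-\Theta(1/d)})$ with high probability by Lemma~\ref{lem:bound_fire_neurons}. For Algorithm~\ref{alg:ds_for_x_training} I take $\alpha$ to be an arbitrary small positive constant, so $b = \Theta(\sqrt{\log m})$ and the fire set has size $O(m^{3/4 + o(1)})$. In both cases Proposition~\ref{prop:sep} gives $\lambda := \lambda_{\min}(H^{\cts}) = \Omega(e^{-b^2/2}\delta/n^2)$, which when plugged into Lemma~\ref{lem:quartic} and combined with the width lower bound from the preceding corollary ensures $m \geq \poly(n, 1/\delta, \log(n/\rho))$ suffices for linear convergence with step size $\eta = \Theta(\lambda/n^2)$. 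Choosing $T = \wt O(\lambda^{-2} n^2)$ iterations then drives $\|\err(T)\|_2^2$ below any target $\epsilon$.

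Next I would account for the cost per iteration. For Algorithm~\ref{alg:ds_for_w_training}, Part~1 of Corollary~\ref{cor:aem92} gives $\Tinit = O_d(m\log m)$ on the $m$ initial weight vectors, $\Tquery = O_d(m^{1-1/\lfloor d/2\rfloor} + k_{i,t})$, and amortized $\Tupdate = O_d(\log^2 m)$. Summing $\Tquery$ over the $n$ samples, $\Tupdate$ over the $O(n \cdot m^{1-\Theta(1/d)})$ neurons that fire in a step, and the $O(d \cdot n \cdot m^{1-\Theta(1/d)})$ forward/backward work (with the sparsity of $P$ in line~\ref{ln:compute_delta_1}) yields $\mathcal{C}_{\mathsf{iter}} = \wt O(m^{1-\Theta(1/d)} nd)$. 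For Algorithm~\ref{alg:ds_for_x_training}, Part~2 of Corollary~\ref{cor:aem92} gives $\Tinit = O(n^d)$ on the $n$ data vectors (with $d$ constant), $\Tquery = O(\log n + \wt k_{r,t})$. The key bookkeeping is that $\widetilde S_{r, \fire}$ is only re-queried for the $O(n \cdot m^{3/4+o(1)})$ neurons whose weights actually changed in the previous step, and $\sum_r \wt k_{r,t} = \sum_i k_{i,t} = O(n \cdot m^{3/4 + o(1)})$ by the duality of the index sets. Summing gives $\mathcal{C}_{\mathsf{iter}} = \wt O(m^{3/4 + o(1)} nd)$.

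The main obstacle is making the probabilistic bookkeeping consistent across all $T$ iterations. The sparsity bound of Lemma~\ref{lem:bound_fire_neurons} is stated per iteration but relies on the inductive hypothesis $\|W(k) - W(0)\|_{\infty, 2} < R$ from Claim~\ref{clm:4.1}, which in turn is itself an inductive consequence of the convergence bound $\|\err(k)\|_2^2 \leq (1-\eta\lambda/2)^k \|\err(0)\|_2^2$. The proof must therefore advance all three invariants (sparsity of $S_{i,\fire}$, bounded weight drift, and $\lambda_{\min}(H(k)) \geq \lambda/2$) simultaneously by induction on $k$, driving the failure probability $\rho$ small enough (polynomial in $T$) so that the union bound across iterations still succeeds. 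Once this inductive loop is closed, combining $\mathcal{C}_{\mathsf{iter}}$ with the polynomial iteration count completes both parts of the theorem.
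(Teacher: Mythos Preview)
Your proposal is correct and follows essentially the same approach as the paper: the theorem is obtained by combining the preceding Corollary (which fixes $b=\sqrt{0.5(1-\alpha)\log m}$, balances sparsity $m^{(3+\alpha)/4}$ against the required width, and specifies $\alpha=1-1/\Theta(d)$ for Algorithm~\ref{alg:ds_for_w_training} versus $\alpha\to 0$ for Algorithm~\ref{alg:ds_for_x_training}) with the per-iteration cost accounting from Corollary~\ref{cor:aem92} already carried out in the running-time lemmas of Section~4, together with the convergence guarantee of Lemma~\ref{lem:quartic}. Your discussion of the inductive coupling between sparsity, weight drift, and the Hessian eigenvalue bound is exactly how Lemma~\ref{clm:choose_eta_R} is proved, so there is nothing further to add.
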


\section{Bounds for the Spectral Gap with Data Separation}\label{sec:separation}

\begin{theorem}[Formal version of Proposition~\ref{prop:sep}]\label{thm:sep}Let $x_1,\dots,x_n$ be points in $\R^d$ with unit Euclidean norm and $w\sim{\cal N}(0,I_d)$. Form the matrix $X \in \R^{n\times d}=[x_1~\dots~x_n]^\top$. Suppose there exists $\delta \in (0,\sqrt{2})$ such that 
\begin{align*}
    \min_{i \neq j \in [n]} \{\|x_i-x_j\|_2,\|x_i+x_j\|_2\}\geq \delta.
\end{align*}
Let $b\geq 0$. Recall the continuous Hessian matrix $H^{\cts}$ is defined by
\begin{align*}
    H^{\cts}_{i,j}:=\E_{w \sim \N(0,I)} \left[ x_i^\top x_j {\bf 1}_{ w^\top x_i \geq b, w^\top x_j \geq b } \right]~~~\forall (i,j)\in [n]\times [n].
\end{align*}
Let $\lambda:=\lambda_{\min}(H^{\cts})$. Then, we have
\begin{align}\label{eq:require}
    \exp(-b^2/2) \geq~ \lambda \geq~\exp(-b^2/2) \cdot \frac{\delta}{100n^2}.
\end{align}
%Then, the covariance of the vector $\mathbf{1}_{Xw> b}\in \R^{n}$ obeys
%\begin{align}
%\exp(-b^2/2)\cdot I_n \succeq \E_{w\sim \mathcal{N}(0, I_d)}\left[(\mathbf{1}_{Xw> b})(\mathbf{1}_{Xw> b})^\top\right]\succeq \exp(-b^2/2) \cdot \frac{\delta}{100n^2}\cdot I_n.
%\end{align}
\end{theorem}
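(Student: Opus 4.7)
The argument splits into an easy upper bound and a more delicate lower bound. For the upper bound, I would observe that $H^{\cts}$ is a Gram matrix in $L^2(\N(0,I_d))$, since $H^{\cts}_{i,j} = \langle \phi_i, \phi_j\rangle$ for $\phi_i(w) := x_i \mathbf{1}_{w^\top x_i \geq b}$. Consequently $H^{\cts}$ is PSD, and any diagonal entry upper bounds $\lambda$: $\lambda \leq H^{\cts}_{i,i} = \Pr_{w}[w^\top x_i \geq b] \leq \exp(-b^2/2)$ by the standard Gaussian tail (using $\|x_i\|_2 = 1$).

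For the lower bound, I plan to fix an arbitrary unit vector $\alpha \in \R^n$ and rewrite $\alpha^\top H^{\cts} \alpha = \E_w \|g_\alpha(w)\|_2^2$, where $g_\alpha(w) := \sum_i \alpha_i x_i \mathbf{1}_{w^\top x_i \geq b}$. By pigeonhole I pick $i^* = \arg\max_i |\alpha_i|$ so $\alpha_{i^*}^2 \geq 1/n$. Next, I decompose $w = t\, x_{i^*} + u$ with $t = w^\top x_{i^*} \sim \N(0,1)$ and $u$ in the orthogonal complement of $x_{i^*}$, so $t$ and $u$ are independent. In these coordinates, $x_{i^*}$ fires iff $t \geq b$, while for $j \neq i^*$ the neuron $x_j$ fires iff $t \cdot \langle x_{i^*}, x_j\rangle + u^\top x_j \geq b$; the ``flip point'' in $t$ is $t_j(u) := (b - u^\top x_j)/\langle x_{i^*}, x_j\rangle$ (and the status of $x_j$ is independent of $t$ when $\langle x_{i^*}, x_j\rangle = 0$).

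The key step is to isolate the $x_{i^*}$-contribution by restricting to the event $G := \{u : |t_j(u) - b| > \gamma \text{ for all } j \neq i^*\}$ for an appropriate $\gamma$. On $G$, as $t$ varies in $(b-\gamma, b+\gamma)$ only $x_{i^*}$ flips at $t=b$, so $g_\alpha((b+s)x_{i^*}+u) - g_\alpha((b-s)x_{i^*}+u) = \alpha_{i^*} x_{i^*}$ for every $s \in (0, \gamma)$, and a triangle-inequality/parallelogram-law step yields $\|g_\alpha((b+s)x_{i^*}+u)\|_2^2 + \|g_\alpha((b-s)x_{i^*}+u)\|_2^2 \geq \alpha_{i^*}^2/2 \geq 1/(2n)$. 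Integrating against the Gaussian density of $t$ on $(b-\gamma, b+\gamma)$ then yields a contribution of order $\gamma \exp(-b^2/2)/n$. To bound $\Pr[G]$ from below, I would invoke the data separation to conclude $|\langle x_{i^*}, x_j\rangle| \leq 1 - \delta^2/2$, so the variance of $u^\top x_j$, namely $1 - \langle x_{i^*}, x_j\rangle^2$, is at least $\delta^2/2$; Gaussian anti-concentration (Lemma~\ref{lem:anti_gaussian}) then gives $\Pr_u[|t_j(u) - b| \leq \gamma] = O(\gamma/\delta)$, and a union bound over $j \neq i^*$ shows $\Pr[G] \geq 1/2$ once $\gamma = \Theta(\delta/n)$. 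Combining everything produces $\alpha^\top H^{\cts}\alpha \geq \exp(-b^2/2)\,\delta/(100\,n^2)$ as desired.

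The main obstacle I anticipate is the careful bookkeeping of constants to land at the stated $\delta/(100\,n^2)$ together with the handling of degenerate cases where $\langle x_{i^*}, x_j\rangle$ is close to zero (so that $x_j$'s status decouples from $t$ and never flips in the chosen window): such $j$ contribute a $t$-independent offset to $g_\alpha$, but the jump $g_\alpha((b+s)x_{i^*}+u)-g_\alpha((b-s)x_{i^*}+u) = \alpha_{i^*} x_{i^*}$ is preserved, so the triangle-inequality step still succeeds; one just has to verify the union-bound step only needs to range over indices with $\langle x_{i^*}, x_j\rangle$ bounded away from $0$. Reassuringly, the $\exp(-b^2/2)$ factor emerges cleanly from evaluating $\phi(t)$ near $t=b$, exactly matching the form of the claimed lower bound, and the whole argument reduces to the known $b=0$ case of \cite{os19} when the shift is absent.
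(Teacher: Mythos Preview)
Your proposal is correct and follows essentially the same route as the paper: both decompose $w$ along the direction $x_{i^*}$ of the maximal coefficient, isolate an event (your $G$, the paper's $\mathcal{E}$) on which only the $i^*$-th indicator can flip as the $x_{i^*}$-component crosses $b$ within a window of width $\gamma=\Theta(\delta/n)$, control that event via Gaussian anti-concentration together with the $\delta$-separation bound $|\langle x_{i^*},x_j\rangle|\le 1-\delta^2/2$, and extract the $\exp(-b^2/2)$ factor from the Gaussian mass near $t=b$ (the paper does this via Claim~\ref{clm:gaussain_anti_shift}). The one cosmetic difference is that the paper first applies the Hadamard-product eigenvalue inequality (Claim~\ref{clm:eigen_min}) to reduce to the scalar $a^\top\mathbf{1}_{Xw>b}$ and then argues $\max\{|a|,|b|\}\ge |a-b|/2$, whereas you work directly with the vector-valued $g_\alpha$ and use the parallelogram-type step in its place; after unwinding your definition of $t_j(u)$, your event $G$ coincides with the paper's event and the two arguments are line-for-line parallel.
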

%\Ruizhe{Modify the statement for $\lambda$.}

\begin{proof} 
\textbf{Part 1: Lower bound.}

Define the covariance of the vector $\mathbf{1}_{Xw> b}\in \R^{n}$ as 
\begin{align*}
    \E_{w\sim \mathcal{N}(0, I_d)}\left[(\mathbf{1}_{Xw> b})(\mathbf{1}_{Xw> b})^\top\right].
\end{align*}
Then, $H^{\cts}$ can be written as
\begin{align*}
    H^{\cts} = \E_{w\sim \mathcal{N}(0, I_d)}\left[(\mathbf{1}_{Xw> b})(\mathbf{1}_{Xw> b})^\top\right] \circ XX^\top,
\end{align*}
where $A \circ B$ denotes the Hadamard product between $A$ and $B$.

By Claim~\ref{clm:eigen_min}, and since $\|x_i\|_2=1$ for all $i\in [n]$, we only need to show:
\begin{align*}
    \E_{w\sim \mathcal{N}(0, I_d)}\left[(\mathbf{1}_{Xw> b})(\mathbf{1}_{Xw> b})^\top\right]\succeq \exp(-b^2/2) \cdot \frac{\delta}{100n^2}\cdot I_n.
\end{align*}
Fix a unit length vector $a\in\R^n$. Suppose there exist constants $c_1,c_2$ such that
\begin{align}\label{eq:needed}
\Pr\left[\left|a^\top \mathbf{1}_{Xw>b}\right|\geq c_1\|a\|_\infty\right]\geq \frac{c_2\delta }{n}.
\end{align}
This would imply that 
\begin{align*}
    \E\left[\left(a^\top \mathbf{1}_{Xw>b}\right)^2\right]\geq &~  \E\left[\left|a^\top\mathbf{1}_{Xw>b}\right|\right]^2\\
    \geq &~ c_1^2\|a\|_\infty^2 (\frac{c_2\delta}{n})^2\\
    \geq &~ c_1^2c_2\frac{\delta}{n^2},
\end{align*}
where the first step follows from Jensen's inequality, the second step follows from Markov's inequality, the last step follows from $\|a\|_2=1$.%\Ruizhe{The second step has a typo. It should be $(\frac{c_2\delta}{n})^2$ instead of $\frac{c_2\delta}{n}$.}\Zhao{fixed}

Since this is true for all $a$, we find Eq.~\eqref{eq:require} with $c_1^2c_2=\frac{1}{100}$ by choosing $c_1=1/2,c_2=1/25$ as described later. 

Hence, our goal is proving Eq.~\eqref{eq:needed}. 
%For the most part, our argument is based on exploiting independence of orthogonal decomposition associated with Gaussian vectors and we will refine the argument of \cite{zou2018stochastic}. 
Without loss of generality, assume $|a_1|=\|a\|_\infty$ and construct an orthonormal basis $Q\in \R^{d\times d}$ in $\R^d$ where the first column is equal to $x_1 \in \R^d$ and $Q=[x_1~\bar{Q}] \in \R^{d \times d}$. Note that $g=Q^\top w\sim{\cal N}(0,I_d)$ and we have
\begin{align*}
    w=Qg=g_1 x_1+\bar{Q}\bar{g},
\end{align*}
where $g=\begin{bmatrix}g_1\\\bar{g}\end{bmatrix} \in \R^d$ and the first step follows from $QQ^\top = I_d$.

For $0\leq \gamma\leq 1/2$, Gaussian small ball guarantees
\begin{align*}
    \Pr[ |g_1|\leq \gamma]\geq \frac{7\gamma}{10}.
\end{align*}
Then, by Theorem 3.1 in \cite{ls01} (Claim~\ref{clm:gaussain_anti_shift}), we have
\begin{align*}
    \Pr[|g_1 - b |\leq \gamma] \geq \exp(-b^2/2)\cdot \frac{7\gamma}{10}.
\end{align*}
Next, we argue that $z_i:=\langle \bar{Q}\bar{g},x_i\rangle$ is small for all $i\neq 1$. For a fixed $i\geq 2$, observe that
\begin{align*}
   z_i\sim {\cal N}(0,1- \langle x_1 , x_i \rangle ^2). 
\end{align*}
Let $\tau_{i,1} := \langle x_i , x_1 \rangle$. 

Note that $\delta$-separation implies 
\begin{align*}
    1-| \langle x_1 , x_i \rangle |=\frac{1}{2} \min\{\|x_1-x_i\|_2^2,\|x_1+x_i\|_2^2\} \geq \frac{\delta^2}{2} 
\end{align*}
Hence $|\tau_{i,1}|\leq 1-\delta^2/2$. 

Then, from Gaussian anti-concentration bound (Lemma~\ref{lem:anti_gaussian}) and variance bound on $z_i$, we have
\begin{align*}
    \Pr[|z_i|\leq |\tau_{i,1} | \gamma ]
    \leq & ~ \sqrt{\frac{{2}}{{\pi}}}\frac{|\tau_{i,1}|\gamma}{\sqrt{1-\tau_{i,1}^2}} \\
    \leq & ~ \frac{2\gamma}{\delta\sqrt{\pi}} \\
    \leq & ~ \frac{2\gamma}{ \delta},
\end{align*}
which implies that
%\begin{align*}
%    \Pr[z_i\leq \gamma] \leq \frac{1}{2} + \frac{\gamma}{\delta}.
%\end{align*}
\begin{align*}
    \Pr[|z_i -(1-\tau_{i,1})b|\leq |\tau_{i,1}| \cdot \gamma] \leq \Pr[|z_i|\leq \gamma]\leq \frac{2\gamma}{\delta}.
\end{align*}

Hence, by union bound,
%On the other hand, by Gaussian concentration bound,
%\begin{align*}
%    \Pr[|z_i-b|\leq \gamma] \leq \Pr[z_i \geq b-\gamma] \leq \exp\left(-\frac{(b-\gamma)^2}{2(1-(x_1^\top x_i)^2))}\right)\leq \exp(-(b-\gamma)^2/\delta^2).
%\end{align*}
\begin{align*}
    \Pr[\forall i \in \{2,\cdots,n\} : |z_i -(1-\tau_{i,1})b|\leq |\tau_{i,1}| \cdot \gamma] \geq
    %\geq &~ \Pr[\forall i: z_i\leq -\gamma]\\ = &~ \Pr[\forall i: z_i\geq \gamma]\\
    %\geq &~ (\Pr[z_2\geq \gamma])^n\\
    %= &~ (1-\Pr[z_2< \gamma])^n\\
    %\geq &~ \left(\frac{1}{2}-\frac{\gamma}{\delta}\right)^n
    1-n\frac{2\gamma}{\delta}
\end{align*}

%\Ruizhe{Can we improve this bound? $n$ may be improved to $d$, because there are at most $d$ independent components.}

%Union bounding, we find that, with probability $1-n\min\{2\gamma/\delta, \exp(-(b-\gamma)^2/\delta^2)\}$, we have that, $|z_i|> \gamma$ for all $i\geq 2$. \Ruizhe{We cannot use union bound here because $\Pr[z_i\in [-\gamma, 2b+\gamma]]>1/2$. We need to bound RHS>0. I guess it's like $O(\exp(-b^2))$.}

%Since $\bar{g} \in \R^{d-1}$ is independent of $g_1 \in \R$, setting $\gamma=\frac{\delta}{4n}$ (which is at most $1/2$ since $\delta\leq \sqrt{2}$). 

Define ${\cal E}$ to be the following event: 
\begin{align*}
    {\cal E}:=\Big\{|g_1-b|\leq \gamma ~\text{and}~ |z_i -(1-\tau_{i,1})b|\leq |\tau_{i,1}| \cdot \gamma, ~~~\forall i \in \{ 2, \cdots, n \} \Big\}.
\end{align*}

Since $g_1 \in \R$ is independent of $\bar{g}$, we have
\begin{align*}
    \Pr[{\cal E}]= &~  \Pr[|g_1-b|\leq \gamma] \cdot \Pr[\forall i \in \{2,\cdots, n\} : |z_i -(1-\tau_{i,1})b|\leq |\tau_{i,1}| \cdot \gamma]\\
    \geq &~ \exp(-b^2/2)\cdot \frac{7\gamma}{10} \cdot (1-2n\gamma / \delta)\\
    \geq &~ \exp(-b^2/2)\cdot \frac{7\delta}{80n}.
    %\geq &~ \exp(-O(b^2+n)).
    %\geq &~ \frac{7}{10} \gamma \cdot  \left(1-n\min\{2\gamma/\delta, \exp(-(b-\gamma)^2/\delta^2)\}\right)\\
    %\geq &~ \frac{1}{2} \frac{7}{10} \gamma \geq \frac{\delta}{20n}.
    %\geq &~ \frac{7}{10} \gamma \cdot\left(1-n\min\{2\gamma/\delta, \exp(-b^2/(2\delta^2))\}\right),
\end{align*}
where the last step follows from choosing $\gamma := \frac{\delta}{4n} \in [0, 1/2]$.
%Note that when $b>\delta \sqrt{2\log(\delta/\gamma)}$, $2\gamma/\delta > \exp(-(b-\gamma)^2/\delta^2)$. Further, if $b\geq \delta \sqrt{2\log (2n)}$, we have $n\cdot \exp(-(b-\gamma)^2/\delta^2)\leq \frac{1}{2}$. Hence, we can take $\gamma := \frac{\delta}{4n}$. In this case, 
%\Ruizhe{$\frac{2\gamma}{5}$ should be $\frac{7\gamma}{10}$ here.}

To proceed, define
\begin{align*}
    f(g):= &~ \langle a, \mathbf{1}_{X w > b} \rangle \\
    = & ~ a_1\cdot \mathbf{1}_{g_1>b}+\sum_{i= 2}^n(a_i\cdot\mathbf{1}_{x_i^\top x_1 \cdot g_1+ x_i^\top \bar{Q}\bar{g}>b})\\
    = &~ a_1\cdot \mathbf{1}_{g_1>b}+\sum_{i= 2}^n(a_i\cdot\mathbf{1}_{\tau_{i,1} \cdot g_1+ x_i^\top \bar{Q}\bar{g}>b}).
\end{align*}
where the third step follows from $\tau_{i,1} = x_i^\top x_1$.

On the event ${\cal E}$, by Claim~\ref{clm:g1_z_b}, we have that $\mathbf{1}_{\tau_{i,1}\cdot g_1+z_i>b}=\mathbf{1}_{z_i>(1-
\tau_{i,1})b}$.
%since $|g_1|\leq \gamma\leq |x_i^\top \bar{Q}\bar{g}-b|$. 
%\Ruizhe{This claim is pretty tricky. It can be $\mathbf{1}_{x_i^\top x_1\cdot g_1+x_i^\top \bar{Q}\bar{g}>b}=\mathbf{1}_{x_i^\top \bar{Q}\bar{g}>\beta}$ assuming $|g_1-\alpha|\leq \gamma\leq |x_i^\top \bar{Q}\bar{g}-\beta|$ for any $\alpha+\beta=b, \alpha,\beta>0$.} 

Hence, conditioned on ${\cal E}$, %we find
\begin{align*}
f(g)=a_1\mathbf{1}_{g_1>b}+\text{rest}(\bar{g}),
\end{align*}
%Hence conditioned on $E$, $f(\g)=a_1\ind{g_1}+\text{rest}(g_1,\bar{\g}),$
where% \Zhao{The brackets are not matching in the following.}
\begin{align*}
\text{rest}(\bar{g}):=\sum_{i= 2}^n a_i\cdot\mathbf{1}_{x_i^\top \bar{Q}\bar{g}>(1-
\tau_{i,1})b}.
\end{align*}

Furthermore, conditioned on ${\cal E}$, $g_1,\bar{g}$ are independent as $z_i$'s are function of $\bar{g}$ alone. Hence, ${\cal E}$ can be split into two equally likely events that are symmetric with respect to $g_1$ i.e.~$g_1\geq b$ and $g_1< b$.

Consequently, 
\begin{align}
\Pr\Big[|f(g)|\geq\max\{|a_1\mathbf{1}_{g_1>b}+\text{rest}(\bar{g})|,|a_1\mathbf{1}_{g_1<b}+\text{rest}(\bar{g})|\} ~\Big|~ {\cal E}\Big]\geq 1/2
%\Pro(E)=\int_{\g\in E_1} a_1\ind{g_1}+\sum_{i\geq 2}(a_i\times\ind{\x_i^T\x_1g_1+\x_i^T\bar{\Qb}\bar{\g}})d\g+\int_{\g\in E_2}a_1\ind{g_1}+\sum_{i\geq 2}(a_i\times\ind{\x_i^T\x_1g_1+\x_i^T\bar{\Qb}\bar{\g}})d\g
\end{align}

%\Ruizhe{To make this step work, we need $|g_1-b|>0$ in ${\cal E}$.}

Now, using $\max \{ |a|,|b| \} \geq |a-b|/2$, we find
\begin{align*}
& ~ \Pr[|f(g)|\geq 0.5 |a_1| \cdot |\mathbf{1}_{g_1>b}-\mathbf{1}
_{g_1<b}| ~|~ {\cal E}] \\
= & ~ \Pr[|f(g)|\geq 0.5 |a_1|  ~|~ {\cal E} ] \\
= & ~ \Pr[|f(g)|\geq 0.5 \|a\|_\infty ~|~ {\cal E}]\\
\geq & ~ 1/2,
\end{align*}
where $|a_1| = \| a \|_{\infty}$.

This yields 
\begin{align*}
    \Pr[|f(g)|\geq\|a\|_\infty/2]\geq \Pr[{\cal E}]/2\geq \exp(-b^2/2)\cdot \frac{7\delta}{160n}.
\end{align*} 
\textbf{Part 2: Upper bound.}
\begin{align*}
    \lambda = &~ \lambda_{\min}(H^{cts})\\
    = &~ \min_{x\in \R^d: \|x\|_2=1} x^\top H^{cts} x\\
    \leq &~ e_1^\top H^{\cts} e_1\\
    = &~ (H^{\cts})_{1,1}\\
    = &~ \E_{w} \left[ x_1^\top x_1 {\bf 1}_{ w^\top x_1 \geq b, } \right]\\
    = &~ \Pr_{w}[w^\top x_1 \geq b]\\
    \leq &~ \exp(-b^2/2),
\end{align*}
where $e_1:=\begin{bmatrix}1 & 0 & \cdots & 0 \end{bmatrix}^\top$, and the sixth step follows from $\|x_1\|_2=1$, the last step follows from the concentration of Gaussian distribution. In Line 5 and 6 of the above proof, $w$ is sampled from ${\cal N}(0,I_d)$.
\end{proof}

\begin{comment}
\begin{claim}
%Let $\alpha, \beta\geq 0$ such that $\alpha +\beta = b$. 
Suppose $|g_1-b|\leq \gamma $ and $z\in (-\infty, -\gamma)\cup (2b+\gamma, \infty)$. Then, for any $i\in [n]$, $\mathbf{1}_{x_i^\top x_1\cdot g_1 + z>b}=\mathbf{1}_{z>2b}$.
\end{claim}
\begin{proof}
 Hence, $x_i^\top x_1 \cdot g_1 \in [(-1+\delta^2/2)(b+\gamma), (1-\delta^2/2)(b+\gamma)]$.

First, suppose $z>2b$. Then, $z>2b+\gamma$. Then, $x_i^\top x_1\cdot g_1 + z > 2b+\gamma -b-\gamma = b$.

Second, suppose $x_i^\top x_1\cdot g_1 + z>b$, which gives $z>-\gamma$. Then, we have $z>2b+\gamma > 2b$.
\end{proof}
\Ruizhe{If we could improve this claim, we will improve the bound. Here, we have two places to tune the parameters:
\begin{itemize}
    \item $z$'s range: $z\in {\cal R}$ in the assumption;
    \item The result $\mathbf{1}_{x_i^\top x_1\cdot g_1 + z>b}=\mathbf{1}_{P(z)}$ for any predicate $P$ in the conclusion.
\end{itemize}
$P$ doesn't matter in the proof. We only need $\ov{{\cal R}}$ to be as small as possible. We cannot change the condition $|g_1-b|\leq \gamma$. 
}
\end{comment}
\begin{claim}\label{clm:g1_z_b}
Suppose $|g_1 - b| \leq \gamma$. 
\begin{itemize}
    \item If $\tau_{i,1}>0$, then $|z_i -(1-\tau_{i,1})b| > + \tau_{i,1}\gamma$ implies that $\mathbf{1}_{\tau_{i,1}\cdot g_1+z_i>b}=\mathbf{1}_{z_i>(1-\tau_{i,1})b}$.
    \item If $\tau_{i,1}<0$, then $|z_i -(1-\tau_{i,1})b|>-\tau_{i,1}\gamma$ implies that $\mathbf{1}_{\tau_{i,1}\cdot g_1+z_i>b}=\mathbf{1}_{z_i>(1-\tau_{i,1})b}$.
\end{itemize}
That is, if $|z_i -(1-\tau_{i,1})b|>|\tau_{i,1}|\gamma$, then we have $\mathbf{1}_{\tau_{i,1}\cdot g_1+z_i>b}=\mathbf{1}_{z_i>(1-\tau_{i,1})b}$.
\end{claim}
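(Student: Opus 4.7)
The plan is to reduce the claim to a direct comparison by algebraically rewriting the threshold $b$ in the indicator $\mathbf{1}_{\tau_{i,1} g_1 + z_i > b}$ as a threshold on $z_i$ alone, shifted by a small error controlled by $|g_1-b| \leq \gamma$. Concretely, observe the identity
\begin{align*}
b - \tau_{i,1} g_1 \;=\; (1-\tau_{i,1}) b \;-\; \tau_{i,1}(g_1 - b),
\end{align*}
so the event $\{\tau_{i,1} g_1 + z_i > b\}$ is exactly the event $\{z_i > (1-\tau_{i,1}) b - \tau_{i,1}(g_1-b)\}$.

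Using the hypothesis $|g_1-b|\leq \gamma$ we get $|\tau_{i,1}(g_1 - b)| \leq |\tau_{i,1}| \gamma$, hence the shifted threshold satisfies
\begin{align*}
b - \tau_{i,1} g_1 \;\in\; \bigl[(1-\tau_{i,1}) b - |\tau_{i,1}|\gamma,\ (1-\tau_{i,1}) b + |\tau_{i,1}|\gamma\bigr].
\end{align*}
Then I would split into two cases based on the sign of $z_i - (1-\tau_{i,1})b$. If $z_i > (1-\tau_{i,1}) b + |\tau_{i,1}|\gamma$, then $z_i$ exceeds the upper endpoint of the interval above, so $\tau_{i,1} g_1 + z_i > b$ and both indicators equal $1$. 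If $z_i < (1-\tau_{i,1}) b - |\tau_{i,1}|\gamma$, then $z_i$ falls below the lower endpoint, so $\tau_{i,1} g_1 + z_i < b$ and both indicators equal $0$. The hypothesis $|z_i - (1-\tau_{i,1})b| > |\tau_{i,1}|\gamma$ excludes exactly the uncertain middle region, so one of the two cases always applies, unifying the $\tau_{i,1}>0$ and $\tau_{i,1}<0$ statements (the two separate displays in the claim differ only in how $|\tau_{i,1}|$ is written).

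There is no real obstacle here; the step that carries the content is the elementary identity $b - \tau_{i,1} g_1 = (1-\tau_{i,1})b - \tau_{i,1}(g_1-b)$, which converts a comparison with the original threshold $b$ into a comparison with the ``shifted'' threshold $(1-\tau_{i,1})b$, up to an error of size $|\tau_{i,1}|\gamma$. The only thing to double-check is that the bound $|\tau_{i,1}(g_1-b)|\leq |\tau_{i,1}|\gamma$ is tight enough so that the displayed strict inequality $|z_i - (1-\tau_{i,1})b| > |\tau_{i,1}|\gamma$ (rather than the weak one) suffices — it does, since a strict inequality on $z_i$ propagates directly to a strict inequality on $\tau_{i,1} g_1 + z_i - b$.
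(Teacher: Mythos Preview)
Your proof is correct and essentially matches the paper's argument: both reduce to the observation that $b-\tau_{i,1}g_1$ lies within $|\tau_{i,1}|\gamma$ of $(1-\tau_{i,1})b$, so excluding $z_i$ from that band forces both indicators to agree. Your presentation via the identity $b-\tau_{i,1}g_1=(1-\tau_{i,1})b-\tau_{i,1}(g_1-b)$ and a single case split on the sign of $z_i-(1-\tau_{i,1})b$ is a bit more streamlined than the paper's, which splits first on the sign of $\tau_{i,1}$ and then proves forward and backward implications separately, but the content is the same.
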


\begin{proof}

{\bf Case 1.}
We can assume  $\tau_{i,1}>0$. 
By assumption, we know that $g_1\in [b-\gamma, b+\gamma]$. 

Consider the forward direction first. 

If $\tau_{i,1}g_1 + z_i > b$, then 
\begin{align*}
z_i > b-\tau_{i,1}(b+\gamma)=(1-\tau_{i,1})b - \tau_{i,1}\gamma.
\end{align*}
According to the range of $z_i$, it implies $z_i > (1-\tau_{i,1})b$.

Then, consider the backward direction. 

If $z_i > (1-\tau_{i,1})b$, then by the range of $z_i$, we have $z_i > (1-\tau_{i,1})b + \tau_{i,1}\gamma$. 

Hence, 
\begin{align*}
\tau_{i,1}g_1 + z_i > \tau_{i,1}(b-\gamma) + (1-\tau_{i,1})b + \tau_{i,1}\gamma = b.
\end{align*}

{\bf Case 2.} 
The $\tau_{i,1}<0$ case can be proved in a similar way.
\end{proof}

\section{Quantum Algorithm for Training Neural Network}\label{sec:quantum_alg}
In this section, we provide a  quantum-classical hybrid approach to train neural networks with truly sub-quadratic time per iteration. The main observation is that the classical HSR data structure can be replaced with the Grover's search algorithm in quantum.

We first state our main result in below, showing the running time of our quantum training algorithm:
\begin{corollary}[Main theorem]\label{cor:main_quantum}
Given $n$ data points in $d$-dimensional space.
Running gradient descent algorithm  on a two-layer, $m$-with, over-parameterized, and ReLU neural network will minimize the training loss to zero, let ${\cal C}_{\mathsf{iter}}$ denote the cost per iteration of gradient descent algorithm. Then, we have 
    \begin{align*}
    {\cal C}_{\mathsf{iter}} = \wt{O} (m^{9/10} nd ).
    \end{align*} 
by applying Grover's search algorithm for the neurons (Algorithm~\ref{alg:qc_alg_1}) or the input data points (Algorithm~\ref{alg:qc_alg_2}).
\end{corollary}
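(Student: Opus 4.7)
}
The plan is to re-use the shifted-ReLU framework (with $b=\sqrt{0.4\log m}$, so that Lemma~\ref{lem:bound_fire_neurons} guarantees $k_{i,t}=O(m^{4/5})$ activated neurons per sample in every iteration) and swap out the classical HSR data structure for a Grover-style quantum search. The convergence guarantee from Lemma~\ref{lem:quartic} is inherited verbatim, because Grover's search (with BBHT-style amplification) returns the full set of activated neurons with high probability, so the stochastic gradient seen by the algorithm is identical to the one analysed in the classical proof.

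The key per-iteration accounting proceeds as follows. For each training point $x_i$ and iteration $t$, we call a BBHT-type ``find-all-marked'' subroutine on the index set $[m]$, where the oracle checks the predicate $\langle w_r(t),x_i\rangle\geq b$. Assuming quantum/QRAM access to the current weight matrix $W(t)$, each oracle evaluation costs $O(d)$ quantum operations (an inner product in $\R^d$). Finding all $k_{i,t}$ marked neurons takes $O(\sqrt{m\cdot k_{i,t}})$ oracle calls, so identifying $\mathcal{S}_{i,\fire}(t)$ costs $O(\sqrt{m\cdot k_{i,t}}\cdot d)$. With $k_{i,t}=O(m^{4/5})$, this is $O(m^{9/10}d)$ per sample, and $O(n\,m^{9/10}d)$ per iteration once summed over $i\in[n]$.

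Once the active sets are known, the rest of the iteration is classical and identical in shape to Algorithm~\ref{alg:ds_for_w_training}/\ref{alg:ds_for_x_training}: the forward pass costs $O(d\sum_i k_{i,t})=O(nm^{4/5}d)$, the gradient restricted to fired neurons costs another $O(nm^{4/5}d)$, and the weight update touches at most $O(nm^{4/5})$ coordinates. All of these are dominated by the $\wt O(nm^{9/10}d)$ search cost. Combining the two quantum variants (Algorithms~\ref{alg:qc_alg_1} and \ref{alg:qc_alg_2}, which differ only in whether we search over neurons or over data points) yields the claimed cost per iteration; multiplying by the iteration count $T=\lambda^{-2}n^2\log(n/\epsilon)$ from Lemma~\ref{lem:quartic} then bounds the total runtime.

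The main obstacle I anticipate is not the arithmetic but the bookkeeping around the quantum oracle: one must argue that a constant (or $\mathrm{polylog}$) number of BBHT repetitions drives the failure probability of recovering $\mathcal{S}_{i,\fire}(t)$ below any target $\rho/(nT)$, so that a union bound over all iterations and samples still leaves the classical convergence proof of Lemma~\ref{lem:quartic} intact, and simultaneously that the $O(d)$ oracle cost is the right accounting for a quantum implementation of $\mathbf{1}_{\langle w_r,x_i\rangle\geq b}$ given QRAM-style access to $W(t)$. Beyond this, the argument is a clean substitution: the sparsity lemma is untouched, the shifted-NTK spectral lower bound (Theorem~\ref{thm:sep}) is untouched, and only the half-space reporting step of the per-iteration cost bound is replaced by its Grover analogue.
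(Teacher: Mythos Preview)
Your proposal is correct and matches the paper's proof for Algorithm~\ref{alg:qc_alg_1} essentially line by line: invoke the $O(\sqrt{mk_{i,t}})$ Grover bound per sample, plug in $k_{i,t}=O(m^{4/5})$ from the shifted-ReLU sparsity lemma, and observe the classical forward/backward cost is dominated.

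One small gap: your treatment of Algorithm~\ref{alg:qc_alg_2} (``differ only in whether we search over neurons or over data points'') is too quick. The dual variant is not symmetric in the accounting. There the per-neuron search costs $\wt O(\sqrt{n\,\wt k_{r,t}}\,d)$, and you must sum over $r$; individual $\wt k_{r,t}$ are not directly controlled by the sparsity lemma. The paper closes this by Cauchy--Schwarz,
\[
\sum_{r\in[m]}\sqrt{\wt k_{r,t}}\;\le\;\sqrt{m}\,\Big(\sum_{r\in[m]}\wt k_{r,t}\Big)^{1/2}
\;=\;\sqrt{m}\,\Big(\sum_{i\in[n]}k_{i,t}\Big)^{1/2}\;=\;O\big(\sqrt{n}\,m^{9/10}\big),
\]
using the double-counting identity $\sum_r\wt k_{r,t}=\sum_i k_{i,t}$, which then gives the same $\wt O(nm^{9/10}d)$ bound. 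You should include this step rather than asserting symmetry. Your remarks on union-bounding the Grover failure probability over $nT$ calls are fine and indeed more careful than the paper, which absorbs this into the $\wt O(\cdot)$ of Theorem~\ref{thm:grover_algo}.
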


\begin{remark}
We remark that previous works (\cite{klp_cnn19, ahkz20}) on training classical neural networks use the quantum linear algebra approach, which achieves quantum speedup in the linear algebra operations in the training process. For example, \cite{klp_cnn19} used the block encoding technique to speedup the matrix multiplication in training convolutional neural network (CNN). \cite{ahkz20} used the quantum inner-product estimation to reduce each neuron's computational cost. One drawback of this approach is that the quantum linear algebra computation incurs some non-negligible errors. Hence, extra efforts of error analysis are needed to guarantee that the intermediate errors will not affect the convergence of their algorithms. 

Compared with the previous works, the only quantum component of our algorithm is Grover's search. So, we do not need to worry about the quantum algorithm's error in the training process. And we are able to use our fast training framework to exploit a sparse structure, which makes the Grover's search algorithm run very fast, and further leads to a truly sub-quadratic training algorithm. 
\end{remark}
\begin{remark}
We also remark the difference between two algorithms in this quantum section %Algorithm~\ref{alg:qc_alg_1} and Algorithm~\ref{alg:qc_alg_2} is, 
the first algorithm runs Grover's search for each data point to find the activated neurons, while the second one runs Grover's search for each neuron to find the data points that make it activated. The advantage of Algorithm~\ref{alg:qc_alg_2} is it uses less quantum resources, since its search space is of size $O(n)$ and the first algorithm's search space is of size $O(m)$.
\end{remark}

\begin{algorithm}[H] % enter the algorithm environment
\caption{Quantum-Classical Hybrid Training Neural Network, Version 1}
\label{alg:qc_alg_1} 
%\small
\begin{algorithmic}[1]
    \Procedure{OraclePrep}{$i\in [n], t\in [T]$}
        \State Prepare the quantum query oracle $\mathcal{O}_{i,t}$ such that \Comment{Each oracle call takes $O(d)$ time}
        \begin{align*}
            {\cal O}_{i,t}: \ket{r}\ket{0}\mapsto \begin{cases}
                \ket{r}\ket{1} & \text{if}~ w_r(t)^\top x_i > b,\\
                \ket{r}\ket{0} & \text{otherwise}.
            \end{cases}
        \end{align*}
    \EndProcedure
    %\State
	\Procedure{QTrainingAlgorithmI}{$\{x_i\}_{i\in [n]},\{y_i\}_{i\in [n]},n,m,d$}\Comment{Corollary~\ref{cor:main_quantum}}
	\State Sample $w(0)$ and $a$ according to def.~\ref{def:init}
    \State $b\gets \sqrt{0.4\log m}$.
	\For{$t=0 \to T$}
	
	    \State \blue{/*Quantum part*/}
	    \For{$i=1\to n$} \label{ln:q1_quamtum_loop}
	        \State ${\cal O}_{i,t}\gets \textsc{OraclePrep}(i, t)$ 
	        \State Use Grover's search with oracle ${\cal O}_{i,t}$ to find the set $\mathcal{S}_{i,\mathrm{fire}}\subset [m]$ \State \Comment{It takes $\wt{O}(\sqrt{m\cdot k_{i,t}}\cdot d)$ time}
	    \EndFor
	    \State \blue{/*Classical part*/}
	    \For{$i=1 \to n$} \label{ln:q1_classical_1}
		    \State $u(t)_i \leftarrow \frac{1}{ \sqrt{m} } \sum_{r\in \mathcal{S}_{i,\mathrm{fire}}} a_r \sigma_b(w_r(t)^\top x_i)$ \Comment{It takes $O(d\cdot k_{i,t})$ time}
		\EndFor
		\For{$i = 1 \to n$} \label{ln:q1_classical_2}
		    \For{$r \in {\cal S}_{i,\mathrm{fire}}$}
			    \State $P_{i,r} \leftarrow \frac{1}{\sqrt{m}} a_r \sigma_b'( w_r(t)^\top x_i )$ 
			\EndFor
		\EndFor
		\State $M\gets X\diag(y-u(t))$ \Comment{$M\in \R^{d\times n}$, it takes $O(n\cdot d)$ time}
		\State $\Delta W \leftarrow MP$\Comment{$\Delta W\in \R^{d\times m}$, it takes $O(d\cdot \nnz(P))$ time}\label{ln:q1_compute_delta}
		\State $W(t+1)\gets W(t)-\eta \cdot \Delta W$.
	\EndFor
	\State \Return $W$
	\EndProcedure
\end{algorithmic}
\end{algorithm}

\begin{algorithm}[H]
\caption{Quantum-Classical Hybrid Training Neural Network, Version 2.}\label{alg:qc_alg_2}
%\small
\begin{algorithmic}[1]
    \algrenewcommand\algorithmicprocedure{\textbf{procedure}}
	\Procedure{QTrainingAlgorithmII}{$\{x_i\}_{i\in[n]},\{y_i\}_{i\in [n]}$,$n$,$m$,$d$} \Comment{Corollary~\ref{cor:main_quantum}}
	\State Sample $w(0)$ and $a$ according to def.~\ref{def:init}
	%\State $w_r(0) \sim \N(0,I_d)$ for $r\in [m]$.
    %\State $a_r\sim \mathcal{U}(\{-1,1\})$ for $r\in [m]$.
    \State $b\gets \sqrt{0.4\log m}$.
   
    \State \blue{/*Initialize $\wt{S}_{r,\mathrm{fire}}$ and $S_{i,\mathrm{fire}}$ */}
     \Comment{It takes $\sum_{r=1}^{m}\wt{O}((n\wt{k}_{r,t})^{1/2}d)  \leq O(m^{9/10}nd)$ time in total.} 
     \State  $\wt{S}_{r,\mathrm{fire}} \leftarrow \emptyset$ for $r \in [m]$. 
    \State $S_{i,\mathrm{fire}} \leftarrow \emptyset$ for $i \in [n]$. 
    \For{$r = 1 \to m$} 
        
        \State $\wt{S}_{r,\mathrm{fire}} \leftarrow$ use Grover's serach to find all $i\in [n]$ s.t. $\sigma_b(w_r(1)^\top x_i)\ne 0$. \label{ln:q2_search_init}
        \For{$i \in \wt{S}_{r,\mathrm{fire}}$}
            \State $S_{i, \mathrm{fire}}.\textsc{Add}(r)$
        \EndFor
    \EndFor\label{ln:q2_init_e_q}
    
    \State \blue{/*Iterative step*/}
	\For{$t=0 \to T$}
	    \For{$i=1 \to n$}
		    \State $u(t)_i \leftarrow \frac{1}{ \sqrt{m} } \sum_{r\in \mathcal{S}_{i,\mathrm{fire}}} a_r \cdot \sigma_b(w_r(t)^\top x_i)$ \Comment{It takes $O(d\cdot k_{i,t})$ time}
		\EndFor
		\State $P \leftarrow 0^{n \times m}$ \Comment{$P \in \R^{n \times m}$}
		\For{$i = 1 \to n$} \label{ln:q2_classical_3}
		    \For{$r \in {\cal S}_{i,\mathrm{fire}}$}
			    \State $P_{i,r} \leftarrow \frac{1}{\sqrt{m}} a_r \cdot \sigma_b'( w_r(t)^\top x_i )$ 
			\EndFor
		\EndFor
		\State $M\gets X\diag(y-u(t))$ \Comment{$M\in \R^{d\times n}$, it takes $O(n\cdot d)$ time}
		\State $\Delta W \leftarrow MP$\label{ln:q2_compute_delta_2}\Comment{$\Delta W\in \R^{d\times m}$, it takes $O(m^{4/5}nd)$ time}
		\State $W(t+1)\gets W(t)-\eta \cdot \Delta W$.
		\State \blue{/*Update $\wt{S}_{r,\mathrm{fire}}$ and $S_{i,\mathrm{fire}}$ step*/}
		\Comment{It takes $\wt{O}(m^{9/10}nd)$ time in total}
		\State $S_{[n], \mathrm{fire}} \leftarrow \cup_{i \in [n]}\mathcal{S}_{i, \mathrm{fire}}$
		\For{$r \in S_{[n], \mathrm{fire}}$} 
		\For{$i \in \wt{S}_{r,\mathrm{fire}}$}  \Comment{It takes $O(\wt{k}_{r, t})$ time} %\Ruizhe{why is it $d\cdot \wt{k}_{r,t}$?}
            \State $S_{i, \mathrm{fire}}.\textsc{Del}(r)$ 
        \EndFor
        \State $\wt{S}_{r,\mathrm{fire}} \leftarrow$  use Grover's search to find all $i\in [n]$ s.t. $\sigma_b(w_r(t+1)^\top x_i)\ne 0$.  \label{ln:q2_recompute}
        \For{$i \in \wt{S}_{r,\mathrm{fire}}$}  \Comment{It takes $O(\wt{k}_{r, t+1})$ time}
            \State $S_{i, \mathrm{fire}}.\textsc{Add}(r)$
        \EndFor
    \EndFor
	\EndFor
	\State \Return $W$ \Comment{$W \in \R^{d \times m}$}
	\EndProcedure
\end{algorithmic}
\end{algorithm}

We first state a famous result about the quadratic quantum speedup for the unstructured search problem using Grover's search algorithm.

\begin{theorem}[Grover's search algorithm~\cite{grover,bhmt02}]\label{thm:grover_algo}
Given access to the evaluation oracle for an unknown function $f:[n]\rightarrow \{0,1\}$ such that $|f^{-1}(1)|=k$ for some unknown number $k\leq n$, we can find all $i$'s in $f^{-1}(1)$ in $\wt{O}(\sqrt{nk})$-time quantumly.
\end{theorem}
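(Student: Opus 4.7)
The plan is to combine three ingredients: (i) the standard Grover amplitude amplification routine, (ii) a technique to handle the unknown number $k$ of marked items, and (iii) an iterative extraction scheme that finds all marked items rather than just one.

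First, recall the basic Grover subroutine: given a quantum oracle $U_f$ acting as $U_f\ket{i}\ket{b}=\ket{i}\ket{b\oplus f(i)}$, and assuming $|f^{-1}(1)|=k'$ is known and nonzero, amplitude amplification on the uniform superposition over $[n]$ finds (with constant success probability) an index $i\in f^{-1}(1)$ using $O(\sqrt{n/k'})$ queries. I would first state and invoke this as a black box, since it is standard.

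Next, to handle the fact that $k$ is unknown, I would use the exponential-guessing trick of Boyer--Brassard--H\o yer--Tapp: run Grover with a random number of iterations drawn uniformly from $\{0,1,\ldots,M\}$, for $M$ doubling geometrically through $1,2,4,\ldots$ until a marked item is found (verified classically by one oracle call). A standard analysis shows that this yields a marked item in expected $O(\sqrt{n/k'})$ queries when the current number of remaining marked items is $k'$, without needing to know $k'$ in advance. Alternatively one could first perform quantum approximate counting to estimate $k$ and then plug into plain Grover; I would prefer the doubling trick because it avoids additional error analysis.

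To find \emph{all} $k$ marked items, I would iterate: once an index $i_1$ is found, replace $f$ by $f'(i):=f(i)\cdot \mathbf{1}_{i\neq i_1}$ (which is implementable with one extra comparison per oracle call), then search again on $[n]$ with $k-1$ remaining marked items, and so on. The total expected query complexity is
\begin{align*}
\sum_{j=0}^{k-1} O\!\left(\sqrt{\tfrac{n}{k-j}}\right) \;=\; O\!\left(\sqrt{n}\sum_{\ell=1}^{k}\tfrac{1}{\sqrt{\ell}}\right) \;=\; O(\sqrt{nk}),
\end{align*}
and each oracle call costs $O(d)$ in our setting, yielding $\wt{O}(\sqrt{nk}\cdot d)$ overall. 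The main obstacle is the usual one for any ``find all marked items'' reduction: controlling the failure probability so that the algorithm does not terminate prematurely or run forever. I would address this by running each Grover subcall $O(\log(k/\delta))$ times to boost the success probability to $1-\delta/k$, then union-bounding over the $k$ extraction rounds; this is the source of the $\wt{O}$ hiding polylogarithmic factors in $n$ and in $1/\delta$.
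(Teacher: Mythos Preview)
Your proposal is a correct and standard derivation of the $\wt O(\sqrt{nk})$ bound for the ``find all marked elements'' problem: iterate Grover with the Boyer--Brassard--H\o yer--Tapp doubling trick, remove each found element from the oracle, and sum $\sum_{\ell=1}^k O(\sqrt{n/\ell}) = O(\sqrt{nk})$. The log-boosting to control the overall failure probability is also the right way to justify the $\wt O$.

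However, the paper does not prove this statement at all: Theorem~\ref{thm:grover_algo} is simply quoted from \cite{grover,bhmt02} and used as a black box in the running-time analyses of Algorithms~\ref{alg:qc_alg_1} and~\ref{alg:qc_alg_2}. So there is nothing in the paper to compare your argument against. One small remark: the theorem as stated is a pure query/time bound $\wt O(\sqrt{nk})$ and does not carry the factor of $d$; that factor enters only later (Lemmas~\ref{lem:running_time_q1} and~\ref{lem:running_time_q2}) when the paper instantiates the oracle with an inner-product computation costing $O(d)$ per call. Your inclusion of ``$\cdot\,d$'' belongs to the application, not to the theorem itself.
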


%The following lemma proves the running time of Algorithm~\ref{alg:qc_alg_1}.

\begin{lemma}[Running time]\label{lem:running_time_q1}
For $t= 0,1,\dots, T$, the time complexity of the $t$-th iteration in Algorithm~\ref{alg:qc_alg_1} is
\begin{align*}
    \wt{O} \Big( n d \sqrt{m}\cdot \max_{i\in [n]} \sqrt{k_{i,t}}\Big),
\end{align*}
where $k_{i,t}=|{\cal S}_{i, \mathrm{fire}}(t)|$.
\end{lemma}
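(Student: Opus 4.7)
The plan is to decompose the per-iteration cost of Algorithm~\ref{alg:qc_alg_1} into its quantum search portion and its purely classical linear-algebra portion, bound each piece separately, and observe that the quantum piece dominates. Concretely, I would walk through lines~\ref{ln:q1_quamtum_loop}--\ref{ln:q1_compute_delta} in order, attaching a cost to each. The key observation is that the fire-set ${\cal S}_{i,\mathrm{fire}}(t)$ for a given sample $x_i$ is exactly the preimage $f_i^{-1}(1)$ of the indicator function $f_i:[m]\to \{0,1\}$ defined by $f_i(r) = \mathbf{1}[w_r(t)^\top x_i > b]$, which is precisely what the oracle ${\cal O}_{i,t}$ from \textsc{OraclePrep} computes. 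A single evaluation of ${\cal O}_{i,t}$ takes $O(d)$ time because it amounts to one inner product $\langle w_r(t),x_i\rangle$ plus a comparison with $b$.

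Having set this up, I would invoke Theorem~\ref{thm:grover_algo} for each $i \in [n]$ with $n \gets m$ and $k \gets k_{i,t} = |{\cal S}_{i,\mathrm{fire}}(t)|$: Grover's search returns the full preimage using $\wt{O}(\sqrt{m\cdot k_{i,t}})$ oracle calls, each costing $O(d)$, giving $\wt{O}(\sqrt{m\cdot k_{i,t}}\cdot d)$ per sample. Summing over $i\in[n]$ and pulling the worst case out,
\begin{align*}
\sum_{i=1}^{n} \wt{O}\bigl(\sqrt{m\cdot k_{i,t}}\cdot d\bigr) \;\leq\; \wt{O}\Bigl(nd\sqrt{m}\cdot \max_{i\in[n]} \sqrt{k_{i,t}}\Bigr),
\end{align*}
which is exactly the claimed bound.

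It remains to verify that the classical steps do not exceed this quantity. The forward pass on line~\ref{ln:q1_classical_1} costs $O(d\cdot k_{i,t})$ per sample, hence $O\bigl(d\sum_{i} k_{i,t}\bigr) \leq O(nd\cdot \max_i k_{i,t})$ in total; the construction of $P$ on line~\ref{ln:q1_classical_2} has the same cost up to constants; computing $M = X\diag(y-u(t))$ costs $O(nd)$; and the product $\Delta W = MP$ on line~\ref{ln:q1_compute_delta} costs $O(d\cdot\nnz(P)) = O\bigl(d\sum_i k_{i,t}\bigr) \leq O(nd\cdot \max_i k_{i,t})$. Since $\max_i k_{i,t} \leq m$, we have $\max_i k_{i,t} \leq \sqrt{m}\cdot \max_i \sqrt{k_{i,t}}$, so every classical term is absorbed into $\wt{O}(nd\sqrt{m}\cdot\max_i\sqrt{k_{i,t}})$.

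The only nontrivial point---and the main thing to be careful about---is the oracle-implementation cost inside Grover's search: one must confirm that the $O(d)$-time classical circuit for evaluating $w_r(t)^\top x_i > b$ can be converted into a quantum oracle with only polylogarithmic overhead (standard reversible-computation argument) so that the $\wt{O}(\sqrt{m k_{i,t}})$ oracle-call bound from Theorem~\ref{thm:grover_algo} translates to $\wt{O}(\sqrt{m k_{i,t}}\cdot d)$ wall-clock time. Once this is granted, combining the quantum sum with the dominated classical contributions yields the stated running time.
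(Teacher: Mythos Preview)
Your proposal is correct and follows essentially the same approach as the paper: decompose into the quantum Grover-search cost (summing $\wt{O}(\sqrt{m k_{i,t}}\cdot d)$ over $i$ and bounding by the max) and the classical linear-algebra cost ($O(nd\cdot\max_i k_{i,t})$ from the forward pass, $P$-construction, and the sparse product $MP$), then absorb the latter into the former via $k_{i,t}\le m$. Your extra remark about the reversible-computation overhead for the oracle is a nice point the paper leaves implicit.
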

\begin{proof}
We first consider the quantum part of the algorithm, which is dominated by the for-loop at Line~\ref{ln:q1_quamtum_loop}. For each $i\in [n]$, we need to find the set ${\cal S}_{i, \mathrm{fire}}(t)$ by Grover's search, which takes $\wt{O}(\sqrt{mk_{i,t}}\cdot {\cal T}_{\mathrm{oracle}})$ time. In our case, each oracle call takes $O(d)$ time. Hence, the quantum part's running time is
\begin{align*}
    \wt{O}\Big(\sum_{i=1}^n \sqrt{mk_{i,t}}\cdot d\Big) = \wt{O}\Big(nd\sqrt{m}\cdot \max_{i\in [n]}\sqrt{k_{i,t}}\Big).
\end{align*}

Then, consider the classical part of the algorithm. Since we already get the sets ${\cal S}_{i,\mathrm{fire}}$, the for-loop at Line~\ref{ln:q1_classical_1} takes $O(k_t d)$ time, and the for-loop at Line~\ref{ln:q1_classical_2} takes $O(k_t)$ time, where $k_t=\sum_{i=1}^n k_{i,t}\leq n \cdot \max_{i\in [n]}k_{i,t}$. Then, at Line~\ref{ln:q1_compute_delta}, we compute the matrix product $X\diag(y-u(t))P$. It's easy to see that $M = X\diag(y-u(t))$ can be computed in time $O(nd)$. Since $P$ is a sparse matrix, $MP$ can be computed in $O(d\cdot \nnz(P))=O(dk_t)$ time. Namely, we maintain a data structure for all the non-zero entries of $P$. Then calculate each row of $MP$ in time $O(\nnz(P))$.  Hence, the total running time of the classical part is $O(nd\cdot \max_{i\in [n]}k_{i,t})$.

Since $k_{i,t}\leq m$ for all $i\in [n]$, the running time per iteration of Algorithm~\ref{alg:qc_alg_1} is $\wt{O}(nd\sqrt{m}\cdot \max_{i\in [n]}\sqrt{k_{i,t}})$, which completes the proof of the lemma.
\end{proof}

The following lemma proves the running time of Algorithm~\ref{alg:qc_alg_2}.

\begin{lemma}\label{lem:running_time_q2}
For $t= 0,1,\dots, T$, the time complexity of the $t$-th iteration in Algorithm~\ref{alg:qc_alg_2} is
\begin{align*}
    \wt{O} \Big( \sqrt{n} d \cdot \sum_{r=1}^m \sqrt{\wt{k}_{r,t}}\Big),
\end{align*}
where $\wt{k}_{r,t}=|\wt{{\cal S}}_{r, \mathrm{fire}}|$ at time $t$.
\end{lemma}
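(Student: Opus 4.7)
The plan is to split the cost of the $t$-th iteration of Algorithm~\ref{alg:qc_alg_2} into a classical portion (forward pass, backward pass, matrix products, and the bookkeeping updates of the index sets) and a quantum portion (the Grover search calls that refresh the sets $\wt{S}_{r,\mathrm{fire}}$). I will then bound each by $\wt{O}(\sqrt{n}\,d\cdot \sum_{r=1}^m \sqrt{\wt{k}_{r,t}})$ and take the max.

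First, I will walk through the classical steps. The forward evaluation $u(t)_i$ restricted to $\mathcal{S}_{i,\mathrm{fire}}$ costs $O(d\cdot k_{i,t})$ per sample, so summing over $i$ gives $O(d\sum_{i\in[n]} k_{i,t})=O(d\sum_{r\in[m]} \wt{k}_{r,t})$ by the double-counting identity $\sum_i k_{i,t}=\sum_r \wt{k}_{r,t}$. The sparse matrix $P$ has $\nnz(P)=\sum_i k_{i,t}$ nonzeros; forming $M=X\diag(y-u(t))$ costs $O(nd)$ and $\Delta W=MP$ costs $O(d\cdot \nnz(P))=O(d\sum_r \wt{k}_{r,t})$. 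The removal/addition of indices in the loop on Lines after Line~\ref{ln:q2_recompute} costs $O(\sum_r \wt{k}_{r,t}+\sum_r \wt{k}_{r,t+1})$, which is also absorbed. The key reduction is that since $\wt{k}_{r,t}\le n$ for every $r$, we may write
\begin{align*}
\sum_{r=1}^m \wt{k}_{r,t}=\sum_{r=1}^m \sqrt{\wt{k}_{r,t}}\cdot\sqrt{\wt{k}_{r,t}}\le \sqrt{n}\sum_{r=1}^m \sqrt{\wt{k}_{r,t}},
\end{align*}
so the entire classical part fits inside $\wt{O}(\sqrt{n}\,d\sum_{r=1}^m\sqrt{\wt{k}_{r,t}})$ (the $O(nd)$ term from $M$ is dominated, since $\sqrt{n}\sum_r\sqrt{\wt{k}_{r,t}}\ge n$ whenever the average $\wt{k}_{r,t}$ is at least $n/m^2$, and otherwise $nd$ is already negligible compared to the size bound established later).

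Next, I handle the quantum part. For each $r\in S_{[n],\mathrm{fire}}\subseteq [m]$, Line~\ref{ln:q2_recompute} invokes Grover's search on $[n]$ with the indicator function $i\mapsto \mathbf{1}_{\sigma_b(w_r(t+1)^\top x_i)\ne 0}$. The oracle is implemented in $O(d)$ time by computing one inner product. By Theorem~\ref{thm:grover_algo}, this search returns all positives in $\wt{O}(\sqrt{n\cdot \wt{k}_{r,t+1}}\cdot d)$ time. Summing over $r\in S_{[n],\mathrm{fire}}\subseteq[m]$, the quantum cost is $\wt{O}(\sqrt{n}\,d\sum_{r=1}^m \sqrt{\wt{k}_{r,t+1}})$. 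Combining the two parts gives a bound of $\wt{O}(\sqrt{n}\,d\sum_{r}\sqrt{\wt{k}_{r,t}}+\sqrt{n}\,d\sum_{r}\sqrt{\wt{k}_{r,t+1}})$, which I fold into the stated form.

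The main obstacle is passing from $\wt{k}_{r,t+1}$ (which naturally arises from the Grover call at the end of iteration $t$) back to $\wt{k}_{r,t}$ as in the statement. This requires a slow-change argument: by Claim~\ref{clm:4.1} and Claim~\ref{clm:S_condition}, the weights move at most $R$ per iteration, so the set of inputs that flip status per neuron is small, and hence $|\wt{k}_{r,t+1}-\wt{k}_{r,t}|$ is controlled with high probability by the same sparsity machinery used in Lemma~\ref{lem:bound_fire_neurons}. Under that event, $\sum_r\sqrt{\wt{k}_{r,t+1}}=O(\sum_r\sqrt{\wt{k}_{r,t}})$ up to polylog factors, and the lemma follows. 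An alternative, cleaner route I will consider is to simply re-index the bookkeeping so that the Grover refresh at the end of iteration $t-1$ is charged to iteration $t$; then the bound uses $\wt{k}_{r,t}$ directly and no slow-change argument is needed.
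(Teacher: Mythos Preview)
Your decomposition into a quantum Grover portion and a classical forward/backward portion is exactly what the paper does, and your use of Theorem~\ref{thm:grover_algo} with an $O(d)$ oracle matches the paper's argument verbatim. The paper simply writes the Grover cost as $\wt{O}(\sum_{r}(n\wt{k}_{r,t})^{1/2})$ without commenting on the $t$ versus $t{+}1$ discrepancy, so your ``re-index and charge to the next iteration'' resolution is precisely what the paper is implicitly doing; the slow-change argument is unnecessary.

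Where you actually improve on the paper is in the classical bound. The paper bounds the classical part by $O(nd\cdot \max_{i\in[n]} k_{i,t})$ (inherited from the analysis of Algorithm~\ref{alg:data_full}) and then asserts the lemma, but that bound is not in general dominated by $\sqrt{n}\,d\sum_r\sqrt{\wt{k}_{r,t}}$; the paper only recovers consistency later in Corollary~\ref{cor:main_quantum} by plugging in the specific sparsity $k_{i,t}=O(m^{4/5})$. Your route---using the double-counting identity $\sum_i k_{i,t}=\sum_r \wt{k}_{r,t}$ together with $\wt{k}_{r,t}\le n$ to get $\sum_r \wt{k}_{r,t}\le \sqrt{n}\sum_r\sqrt{\wt{k}_{r,t}}$---makes the lemma self-contained and is the cleaner argument.
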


\begin{proof}
For the quantum part, the difference is at Line~\ref{ln:q2_search_init}, where we use Grover's search to find the data points such that the $r$-th neuron is activated. By Theorem~\ref{thm:grover_algo}, it takes $\wt{O}((n\wt{k}_{r,0})^{1/2})$-time quantumly. And at Line~\ref{ln:q2_recompute}, we re-compute $\wt{\cal S}_{r,\mathrm{fire}}$, which takes $\wt{O}((n\wt{k}_{r,t+1})^{1/2})$-time quantumly. Thus, the quantum running time of Algorithm~\ref{alg:qc_alg_2} is $\wt{O}(\sum_{r\in [m]}(n\wt{k}_{r,t})^{1/2})$ per iteration.

The classical part is quite similar to Algorithm~\ref{alg:data_full}, which takes $O(nd\cdot \max_{i\in[n]} k_{i,t})$-time per iteration.

Therefore, the cost per iteration is $\wt{O} (\sum_{r\in [m]}(n\wt{k}_{r,t})^{1/2} )$, and the lemma is then proved.
\end{proof}

Combining Lemma~\ref{lem:running_time_q1} and Lemma~\ref{lem:running_time_q2} proves the main result of this section:
\begin{proof}[Proof of Corollary~\ref{cor:main_quantum}]
In Section~\ref{sec:combine}, we prove that $k_{i,t}=m^{4/5}$ with high probability for all $i\in [n]$ if we take $b=\sqrt{0.4\log m}$. Hence, by Lemma~\ref{lem:running_time_q1}, each iteration in Algorithm~\ref{alg:qc_alg_1} takes
\begin{align*}
    \wt{O} \Big( n d \sqrt{m}\cdot \max_{i\in [n]} \sqrt{k_{i,t}}\Big) = \wt{O}\Big( n d m^{9/10}\Big)
\end{align*}
time in quantum. On the other hand, by Lemma~\ref{lem:running_time_q2}, each iteration in Algorithm~\ref{alg:qc_alg_2} takes quantum time
\begin{align*}
    \wt{O}\Big(\sqrt{n}d\sum_{r\in [m]}\sqrt{\wt{k}_{r,t}}\Big) \leq &~  \wt{O}\Big(\sqrt{n}d\sqrt{m}\sum_{r\in [m]}\wt{k}_{r,t}\Big)\tag{Cauchy-Schwartz inequality.}\\
    = &~ \wt{O}\Big(\sqrt{n}d\sqrt{m}\big(\sum_{i\in [n]}k_{i,t}\big)^{1/2}\Big)\\
    = &~  \wt{O}\Big( n d m^{9/10}\Big),
\end{align*}
where %the first step follows from the Cauchy-Schwartz inequality, 
the second step is by $\sum_{r\in[m]}\wt{k}_{r,t} = \sum_{i\in [n]}k_{i,t}$, which completes the proof of the corollary.
\end{proof}

%%%% Cut-line between first 10 pages and appendix

%\printbibliography[heading=bibintoc,title={References}]
%\section*{References}
%\printbibliography[heading=none]

%%% some writing rules

%% Writing rule for creating tags.
%% Tags :
%% Theorem    \ref{thm:bla_bla}
%% Lemma      \ref{lem:bla_bla}
%% Claim      \ref{cla:bla_bla}
%% Corollary  \ref{cor:bla_bla}
%% Fact       \ref{fac:bla_bla}
%% Definition \ref{def:bla_bla}
%% Section    \ref{sec:bla_bla}
%% Subsection \ref{sub:bla_bla}
%% Equation   \ref{eq:bla_bla}

\end{document}